\titleformat*{\section}{\large\bfseries}
\newtheorem{lemma}{Lemma}
\newtheorem{assumption}{Assumption}
\newtheorem{remark}{Remark}
\newcommand{\infpi}{\pi_{\infty,\infty}^{*}}
\newcommand{\infp}{p_{\infty,\infty}^{*}}
\newcommand{\inff}{f_{\infty,\infty}^{*}}
\newcommand{\infg}{g_{\infty,\infty}^{*}}
\newcommand{\mnp}{p_{m,n}^{*}}
\newcommand{\mnf}{f_{m,n}^{*}}
\newcommand{\mng}{g_{m,n}^{*}}
\newcommand{\smplinfg}{\bar{g}_{m,\infty}}
\newcommand{\smplinff}{\bar{f}_{\infty,n}}
\DeclareMathOperator\supp{supp}
\def\pd<#1>{\left\langle #1 \right\rangle}
\def\floor[#1]{\left\lfloor #1 \right\rfloor}
\def\ceil[#1]{\left\lceil #1 \right\rceil}
\def\pow[#1,#2]{#1^{(#2)}}
\def\tensor[#1,#2]{#1^{\otimes #2}}
\newcommand{\bone}{\mathbbm{1}}
\newcommand{\bE}{\mathbb{E}}
\newcommand{\bP}{\mathbb{P}}
\newcommand{\cE}{\mathcal{E}}
\newcommand{\cN}{\mathcal{N}}
\title{\LARGE Statistical Analysis of the Sinkhorn Iterations for \\Two-Sample Schr\"{o}dinger Bridge Estimation}
\author{Ibuki Maeda$^{1\dag}$,~Rentian Yao$^{2\ddag}$,~Atsushi Nitanda$^{3,4\ast}$
\vspace{2mm}\\
\normalsize{$^1$ Department of Artificial Intelligence, Kyushu Institute of Technology, Japan}\\
\normalsize{$^2$ Department of Mathematics, University of British Columbia, Canada}\\
\normalsize{$^3$ CFAR and IHPC, Agency for Science, Technology and Research (A$\star$STAR), Singapore}\\
\normalsize{$^4$ College of Computing and Data Science, Nanyang Technological University, Singapore}
\vspace{2mm}\\
\small{$^{\dag}$maeda.ibuki554@mail.kyutech.jp,~$^{\ddag}$rentian2@math.ubc.ca,~$^{\ast}$atsushi\_nitanda@a-star.edu.sg}
}
\date{}
\begin{document}
\maketitle

\begin{abstract}
The Schrödinger bridge problem seeks the optimal stochastic process that connects two given probability distributions with minimal energy modification. While the Sinkhorn algorithm is widely used to solve the static optimal transport problem, a recent work \citep{pooladianPluginEstimationSchrodinger2024} proposed the {\it Sinkhorn bridge}, which estimates Schrödinger bridges by plugging optimal transport into the time-dependent drifts of SDEs, with statistical guarantees in the one-sample estimation setting where the true source distribution is fully accessible. In this work, to further justify this method, we study the statistical performance of intermediate Sinkhorn iterations in the two-sample estimation setting, where only finite samples from both source and target distributions are available. Specifically, we establish a statistical bound on the squared total variation error of Sinkhorn bridge iterations: $\mathcal{O}(1/m+1/n + r^{4k})~(r \in (0,1))$, where $m$ and $n$ are the sample sizes from the source and target distributions, respectively, and $k$ is the number of Sinkhorn iterations. This result provides a theoretical guarantee for the finite-sample performance of the Schrödinger bridge estimator and offers practical guidance for selecting sample sizes and the number of Sinkhorn iterations. Notably, our theoretical results apply to several representative methods such as [SF]\(^2\)M, DSBM-IMF, BM2, and lightSB(-M) under specific settings, through the previously unnoticed connection between these estimators.
\end{abstract}

\section{Introduction}\label{sec:introduction}
In recent years, there has been increasing interest in Schr\"{o}dinger bridge problems, with applications in mathematical biology~\citep{chizat2022trajectory, lavenant2024toward, yao2025learning}, Bayesian posterior sampling~\citep{heng2024diffusion}, generative modeling~\citep{de2021diffusion, wang2021deep}, among others. The Schr\"{o}dinger bridge problem aims to minimize the Kullback--Leibler (KL) divergence with respect to the Wiener measure, the distribution of Brownian motion over the path space, given two fixed marginal distributions $\mu,\nu$ as source and target distributions. 

The Schr\"{o}dinger Bridge problem is also strongly connected to the entropic optimal transport (EOT) problem, which offers an efficient and regularized alternative to classical optimal transport, and is solvable via the Sinkhorn algorithm~\citep{cuturi2013sinkhorn}. The Sinkhorn algorithm operates in the dual space, optimizing the Schr\"{o}dinger potentials between empirical distributions, and enjoys well-established convergence guarantees \citep{franklinScalingMultidimensionalMatrices1989,peyreComputationalOptimalTransport2018}.
Recently, \citet{pooladianPluginEstimationSchrodinger2024} introduced the Sinkhorn bridge method, which plugs in the optimal Schr\"{o}dinger potentials into the time-dependent drifts of stochastic differential equations to estimate the Schr\"{o}dinger bridge between the source and target distributions $\mu$ and $\nu$. Their analysis established statistical bounds, adaptive to the intrinsic dimension of the target distribution $\nu$, in the one-sample estimation setting, where the true source distribution $\mu$ is fully known.

\subsection{Contributions}
In this work, to better understand the algorithmic and statistical convergence rates of the Sinkhorn bridge method, we study the statistical performance of intermediate estimators obtained by Sinkhorn iterations in the two-sample estimation setting, where only empirical distributions $\mu_m$ and $\nu_n$ of $m$- and $n$-finite samples from $\mu$ and $\nu$ are available. 

\paragraph{Statistical guarantee for Sinkhorn bridge iterations.} First, we establish the following statistical bound (Theorem~\ref{theorem:entropy_of_(infty,infty)_and_(m,n)}) for the Sinkhorn bridge estimator by leveraging analytical tools for the Schr\"{o}dinger bridge in the one-sample setting~\citep{pooladianPluginEstimationSchrodinger2024} and for entropic optimal transport~\citep{strommeMinimumIntrinsicDimension2023}:

\begin{equation}
\mathbb E\left[
    \text{TV}^2 (P^{*,[0,\tau]}_{\infty,\infty}, P^{*,[0,\tau]}_{m,n})
\right] = \mathcal O\left(\frac1m + \frac1n\right),
\end{equation}
where $P^{*}_{\infty,\infty}$ and $P^{*}_{m,n}$ denote the true Schr\"{o}dinger bridge and its Sinkhorn bridge estimator based on the optimal EOT solution, and $P^{*,[0,\tau]}_{\infty,\infty}$ and $P^{*,[0,\tau]}_{m,n}$ are restrictions of them on the time-interval $[0,\tau]$.
The result improves upon the bound established in \citep{pooladianPluginEstimationSchrodinger2024} by extending the analysis to the two-sample setting and sharpening the rate in $n$ from $n^{-1/2}$ to $n^{-1}$ at the cost of a deterioration in the dependence on $\varepsilon$ and $R$, where $\varepsilon$ is the strength of entropic regularization and $R$ is the radius of the data supports.

Next, we analyze the intermediate iterations of the Sinkhorn algorithm and prove a convergence rate (Theorem~\ref{theorem:entropy_of_(m,n,*)_and_(m,n,k)}) for the corresponding path measures:
\begin{equation}
    \mathbb E[\text{TV}^2(P_{m,n}^{*,[0,\tau]} , P_{m,n}^{k,[0,\tau]})]
    \lesssim \frac{\tau}{1-\tau} \frac{R^4}{\varepsilon} \left( \tanh\left( \frac{R^2}{\varepsilon} \right) \right)^{4k},
\end{equation}
where $P_{m,n}^{k,[0,\tau]}$ denotes the Sinkhorn bridge estimator at the $k$-th Sinkhorn iteration for $\mu_m$ and $\nu_n$. Together, these results imply the convergence rate $\mathbb E[\text{TV}^2(P_{\infty,\infty}^{*,[0,\tau]} , P_{m,n}^{k,[0,\tau]})] = \mathcal{O}(1/m+1/n + r^{4k})~(r \in (0,1))$ regarding the number of samples $m$ and $n$, and the number of Sinkhorn iterations $k$.

\paragraph{Relationship with the other Schr\"{o}dinger Bridge estimators.}
Remarkably, our theoretical results on the Sinkhorn bridge can be directly applied to the representative Schr\"{o}dinger Bridge solvers: [SF]\(^2\)M~\citep{tongSimulationfreeSchrodingerBridges2023}, DSBM--IMF~\citep{shiDiffusionSchrodingerBridge2023,peluchettiDiffusionBridgeMixture2023}, BM2~\citep{peluchettiBM2CoupledSchrodinger2024}, and lightSB(-M)~\citep{korotinLightSchrodingerBridge2023,gushchinLightOptimalSchrodinger2024}. First, the optimal estimator produced by these methods coincide with the Sinkhorn bridge estimator (Proposition~\ref{prop:optimal_estimator_same}).
Moreover, when DSBM-IMF and BM2 are initialized with the reference process, the estimator after \(k\) iterations matches the Sinkhorn bridge obtained after \(k\) iterations of the Sinkhorn algorithm (Proposition~\ref{prop:sinkhorn_k_estimator_same}).
Furthermore, Algorithm~\ref{alg:schrödinger_bridge_drift_approx_nn}, which learns the Sinkhorn bridge drift via a neural network, is a special case of the procedure proposed in [SF]\(^2\)M.
Consequently, our generalization-error analysis applies to all of these methods, endowing the algorithms with theoretical guarantees.

\subsection{Related Work}

\paragraph{Schrödinger bridge problem.}
The connection between entropic optimal transport and the Schrödinger bridge (SB) problem has been extensively studied; for a comprehensive overview, see the survey by \citet{leonardSurveySchrodingerProblem2013}. More recently, there has been growing interest in computational approaches to the SB problem, particularly those leveraging deep learning techniques \citep{debortoliDiffusionSchrodingerBridge2023, shiConditionalSimulationUsing2022, bunneSchrodingerBridgeGaussian2023, tongSimulationfreeSchrodingerBridges2023}. In parallel, several studies have explored classical statistical methods for estimating the SB \citep{berntonSchrodingerBridgeSamplers2019, pavonDataDrivenSchrodingerBridge2021, vargasSolvingSchrodingerBridges2021}. For example, \citet{berntonSchrodingerBridgeSamplers2019} proposed a sampling framework based on trajectory refinement via approximate dynamic programming. \citet{pavonDataDrivenSchrodingerBridge2021} and \citet{vargasSolvingSchrodingerBridges2021} introduced methods that directly estimate intermediate densities using maximum likelihood principles. Specifically, \citet{pavonDataDrivenSchrodingerBridge2021} presented a scheme that explicitly models the target density and updates weights accordingly, while \citet{vargasSolvingSchrodingerBridges2021} estimated forward and backward drifts using Gaussian processes, optimized via a likelihood-based objective. Assuming full access to the source distribution, the statistical performance of the Schrödinger bridge estimator was analyzed in the one-sample setting by \citet{pooladianPluginEstimationSchrodinger2024}. Other estimators, such as those based on neural approximations or efficient algorithmic variants, have been evaluated in recent works \citep{korotinLightSchrodingerBridge2023,strommeSamplingSchrodingerBridge2023}.

\paragraph{Entropic optimal transport.} Entropic Optimal Transport (EOT) introduces an entropy term to the standard Optimal Transport (OT) problem \citep{cuturi2013sinkhorn}, primarily proposed to improve computational efficiency and smooth the transport plan.
This regularization makes the OT problem strictly convex, enabling the application of the Sinkhorn algorithm, an efficient iterative method for the dual problem \citep{cuturi2013sinkhorn}.
The introduction of the Sinkhorn algorithm has allowed the OT problem, previously computationally very expensive, to obtain approximate solutions in realistic time even for large-scale datasets, leading to a dramatic expansion of its applications in the field of machine learning.
For example, there are diverse applications such as measuring distances between distributions in generative modeling \citep{genevayLearningGenerativeModels2018}, and shape matching or color transfer in computer graphics and computer vision \citep{solomonConvolutionalWassersteinDistances2015}.
The theoretical aspects of EOT have also been deeply studied.
Analyses regarding the convergence rate of the Sinkhorn algorithm \citep{franklinScalingMultidimensionalMatrices1989} and the statistical properties (such as sample complexity) when estimating EOT from finite samples \citep{genevaySampleComplexitySinkhorn2019} have been established.
Furthermore, EOT is known to have a close relationship with the Schrödinger Bridge (SB) problem \citep{leonardSurveySchrodingerProblem2013}.
EOT in a static setting is equivalent to the problem of matching the marginal distributions of the SB problem, which deals with dynamic stochastic processes, and both share common mathematical structures in duality and optimization algorithms.
This connection suggests that theoretical and computational advances in one field can contribute to the other.

\subsection{Notations}
For a metric space $\mathcal{X}$, let $\mathcal{P}(\mathcal X)$ be the space of probability distributions over $\mathcal{X}$, and $\mathcal{P}_2(\mathcal{X})$ be the subset of $\mathcal{P}(\mathcal{X})$ with finite second-order moment.
Let $\mathbb{R}^d$ be the $d$-dimensional Euclidean space and $B(a,R)=\{x\mid \|x-a\|_2\le R\}\subset\mathbb{R}^d$ be the Euclidean ball of radius \(R>0\) centered at \(a\in\mathbb{R}^d\).
For real numbers \(a\) and \(b\), we set \(a\lor b=\max(a,b)\) and \(a\wedge b=\min(a,b)\).
We write \(a\lesssim b\) to denote that there exists a constant \(C>0\), independent to $a$ and $b$, such that \(a \leq C\,b\).
We denote by \(\delta_x\) the Dirac measure concentrated at the point \(x\in\mathbb{R}^d\). We denote by \(\mathbbm{1}_d\in\mathbb{R}^d\) the \(d\)-dimensional vector whose components are all equal to one. For a measure \(\mu\) and a function \(f\), we write \(\mu(f)=\int f(x)\,\mu(dx)\).
Let $\mathrm{TV}(\mu, \nu)$ be the total variation distance between two probability distributions $\mu$ and $\nu$.
For $\varepsilon>0$ and any real number or real-valued function $h$, write \(\exp_{\varepsilon}(h):=\exp(h/\varepsilon)\).
We denote the \(d\)-dimensional probability simplex by \(\Delta_{d}:=\{\alpha\in\mathbb{R}^{d}_{\ge 0}\mid \sum_{i=1}^{d}\alpha_{i}=1\}\).

\section{Preliminaries}

\subsection{Entropic Optimal Transport}

Let \(\mu,\nu \in \mathcal{P}_2(\mathbb{R}^d)\) and \(\varepsilon>0\) be fixed. The entropic optimal transport (EOT) cost between \(\mu\) and \(\nu\) is
\begin{equation}\label{eq:EOT}
\operatorname{OT}_{\varepsilon}(\mu,\nu)
:= \inf_{\pi \in \Pi(\mu,\nu)}
\left\{
\int_{\mathbb{R}^d \times \mathbb{R}^d} c(x,y) \d \pi(x,y)
+ \varepsilon \, H\!\left(\pi \,\middle\|\, \mu \otimes \nu\right)
\right\},
\end{equation}
with the cost function \(c(x,y) := \frac{1}{2}\left\|x-y\right\|_2^2\).
Here, \(\Pi(\mu,\nu) \subset \mathcal{P}_2(\mathbb{R}^d \times \mathbb{R}^d)\) denotes the set of couplings with marginals \(\mu\) and \(\nu\), and
\[
H\!\left(\pi \,\middle\|\, \mu \otimes \nu\right)
:= \int_{\mathbb{R}^d \times \mathbb{R}^d}
\log\!\left( \frac{\d\pi}{\d(\mu \otimes \nu)}(x,y) \right) \d\pi(x,y)
\]
is the Kullback–Leibler divergence between \(\pi\) and \(\mu \otimes \nu\). If \(\pi\) is not absolutely continuous with respect to \(\mu \otimes \nu\), we set \(H\!\left(\pi \,\middle\|\, \mu \otimes \nu\right)=+\infty\).

The optimization problem in Eq.~\eqref{eq:EOT} is strictly convex with respect to $\pi$, and thus it admits a unique solution \(\pi_{\infty,\infty}^* \in \Pi(\mu, \nu)\). Furthermore, the problem has a dual formulation given by
\begin{align}\label{eqn: EOT_dual}
\begin{aligned}
&\text{OT}_{\varepsilon}(\mu, \nu) = \sup_{f\in L^1(\mu), g\in L^1(\nu)}\Phi^{\mu\nu}(f, g),\quad \mbox{where}\\
&\Phi^{\mu\nu}(f, g) := \int f \d\mu + \int g \d\nu - \varepsilon \int \left( \exp_{\varepsilon}\left(f(x) + g(y) - \frac{1}{2}\|x - y\|_2^{2}\right) - 1 \right) \d\mu(x) \d\nu(y).
\end{aligned}
\end{align}
The dual problem~\eqref{eqn: EOT_dual} admits the Schrödinger potentials \(f^*_{\infty,\infty}\) and \(g^*_{\infty,\infty}\) as optimal solutions, and the first-order optimality condition implies
\begin{subequations}
\label{eqn: Schro sys}
\begin{align}
f_{\infty,\infty}^\ast(x) &= -\varepsilon\log \int_{\mathbb R^d} \exp_{ \varepsilon }\left( g_{\infty,\infty}^\ast(y)-\frac{1}{2}\|x-y\|_2^2\right) \d\nu(y),\\
g_{\infty,\infty}^\ast(y) &= -\varepsilon\log \int_{\mathbb R^d} \exp_{ \varepsilon }\left(f_{\infty,\infty}^\ast(x)-\frac{1}{2}\|x-y\|_2^2 \right) \d\mu(x),
\end{align}
\end{subequations}
which is known as Schr\"odinger system. Note that the solution is constant-shift invariant, i.e. for any $a \in \mathbb{R}$, $(f^\ast_{\infty, \infty}+ a, g^\ast_{\infty, \infty}-a)$ is also a solution to the Schr\"odinger system~\eqref{eqn: Schro sys}. 
It can be shown that with a constraint $\nu(g^\ast_{\infty, \infty}) = 0$, the solution to~\eqref{eqn: Schro sys} is unique. The optimal solution $\pi_{\infty, \infty}^\ast$ of the primal problem~\eqref{eq:EOT} can be expressed using the dual solution via
\begin{align}\label{eq:eot_potential_expression}
\infp(x,y)
:= \frac{\d\infpi}{\d(\mu\otimes\nu)}(x,y)
= \exp_{ \varepsilon }\left( f_{\infty,\infty}^{*}(x) + g_{\infty,\infty}^{*}(y) - \frac{1}{2}\|x - y\|_2^{2}\right) .
\end{align}

We refer to~\citep{nutz2021introduction} for more details of entropic optimal transport problems.

\subsection{Schrödinger Bridge}
Let $\Omega = \mathcal{C}([0, 1]; \mathbb{R}^d)$ be the path space consisting of all continuous maps from the time interval $[0, 1]$ to $\mathbb{R}^d$. Let $W^\varepsilon$ be the law of a reversible Brownian motion on \(\mathbb{R}^d\) with volatility \(\varepsilon\).
The Schrödinger Bridge problem is then defined through the following entropy minimization problem:
\begin{align}\label{eqn: SB}
\inf_{P\in\mathcal{P}(\Omega)}\varepsilon H(P \mid W^\varepsilon)
\quad \text{subject to} \quad P_{0}=\mu,\quad P_{1}=\nu,
\end{align}
where $P_t\in\mathcal{P}(\mathbb R^d)$ is the marginal distribution at time $t\in[0, 1]$ of the path-space distribution $P$, i.e. $X_t\sim P_t$ if $X$ is a stochastic process with distribution $P$.
Though $W^\varepsilon$ is an unbounded positive measure over $\Omega$, the above optimization problem is still well defined as illustrated by~\cite{leonardSurveySchrodingerProblem2013}.

The Schr\"odinger Bridge problem is closely connected with the EOT problem. Precisely, the optimal solution $P_{\infty, \infty}^\ast \in \mathcal{P}(\Omega)$ of Eq.~\eqref{eqn: SB} can be derived by first solving the EOT problem~\eqref{eq:EOT} and then connect the optimal coupling with the Brownian bridge $W^\varepsilon_{|x_0, x_1} = W^\varepsilon(\cdot\,|\,X_0 = x_0, X_1= x_1)$, i.e.,
\begin{equation}
\label{eq:schrödinger_bridge_optimal_plan}
P_{\infty,\infty}^* = \int W^\varepsilon_{|x_0, x_1} \d \pi_{\infty,\infty}^*(x_0, x_1).
\end{equation}
The optimal solution $P_{\infty, \infty}^\ast$ can also be characterized as the path measure of the SDE defined by
\[
dX_t = b_{\infty,\infty}^*(X_t,t) \d t + \sqrt{\varepsilon}\d B_t,\quad X_0 \sim \mu,
\]
where $B_t$ is the standard Brownian motion, and the drift coefficient is given by
\begin{align}\label{eq:schrödinger_bridge_optimal_drift}
\begin{aligned}
b_{\infty,\infty}^\ast(z, t) = \frac{1}{1-t}\bigg(-z + \frac{\mathbb E_{Y\sim\nu}[\gamma_{\infty,\infty}^{\ast, t}(Y, z)Y]}{\mathbb E_{Y\sim\nu}[\gamma_{\infty,\infty}^{\ast, t}(Y, z)]}\bigg)\\
\end{aligned}
\end{align}
with \(\gamma_{\infty,\infty}^{\ast, t}(y, z) = \exp_{ \varepsilon }(g_{\infty, \infty}^\ast(y) - \frac{1}{2(1-t)}\|y-z\|_2^2)\).
We refer to~\cite{chen2021stochastic, leonard2012schrodinger, leonardSurveySchrodingerProblem2013} for more details of the Schr\"{o}dinger Bridge.

\subsection{Sinkhorn Algorithm}
Given the empirical distributions $\mu_m=\frac{1}{m}\sum_{i=1}^m \delta_{X_i}$ and $\nu_n=\frac{1}{n}\sum_{j=1}^n \delta_{Y_j}$, we consider the empirical EOT problem \(\mathrm{OT}_\varepsilon(\mu_m,\nu_n)\).
Let $f_{m,n}^{*}$ and $g_{m,n}^{*}$ be the optimal Schrödinger potentials. 
The Sinkhorn algorithm is an iterative method for approximating these optimal potentials by alternately updating the dual potentials \(f_{m,n}^{(k)}\) and \(g_{m,n}^{(k)}\) as follows.
\begin{equation}
f_{m,n}^{(k+1)} = \operatorname*{argmax}_{f\in L^1(\mu_m)} \Phi^{\mu_m\nu_n}(f, g_{m,n}^{(k)})
, \quad
g_{m,n}^{(k+1)} = \operatorname*{argmax}_{g\in L^1(\nu_n)} \Phi^{\mu_m\nu_n}(f_{m,n}^{(k+1)}, g).
\label{eq:sinkhorn_update}
\end{equation}
We note that $f_{m,n}^{(k)}$, $g_{m,n}^{(k)}$ can be interpreted as $m$- and $n$-dimensional vectors, respectively.
For efficient computation, it is common to rewrite these updates in an alternative form. Let the kernel matrix $K\in\mathbb R^{m\times n}$ be defined as \(K_{ij} = \exp_{ \varepsilon }(-\tfrac{1}{2}\|X_i - Y_j\|_2^2) \), and denote the vectors \(u^{(k)}\in\mathbb{R}^m\) and \(v^{(k)}\in\mathbb{R}^n\) by \(u^{(k)}_i = \exp_{ \varepsilon }(f^{(k)}(X_i))\) and \(v^{(k)}(Y_j) = \exp_{ \varepsilon }(g^{(k)}(Y_{j}))\). Then the update rule~\eqref{eq:sinkhorn_update} becomes
\[
u^{(k+1)} = m\mathbf{1}_m \oslash (K\,v^{(k)}), \quad
v^{(k+1)} = n\mathbf{1}_n \oslash (K^T\,u^{(k+1)}),
\]
where \(\oslash\) denotes element-wise division
\footnote{The standard Sinkhorn algorithm updates as \(u^{(k+1)} = \mathbf{1}_{m} \oslash (m K v^{(k)}),\quad v^{(k+1)} = \mathbf{1}_{n} \oslash (n K^T u^{(k+1)}).\) This update is derived from the formulation of the primal problem \ref{eq:EOT} that incorporates the regularization term \(H(\pi)\), and there is no substantive difference.}.
This reformulation greatly improves computational efficiency and enables large-scale applications.

Let $d_\text{Hilb}$ be the Hilbert (pseudo-)metric on $\mathbb R^d_+$ defined by
\begin{equation*}
d_\text{Hilb}(u, u') = \log\left( \max_i \frac{u_i}{u'_i} \cdot \max_i \frac{u'_i}{u_i} \right),\quad\forall\,u,v\in\mathbb R_+^d.
\end{equation*}
It is obvious that $ d_\text{Hilb}(x, y) = 0$ if and only if there exists $\lambda \in \mathbb{R}_+$ such that $x = \lambda y$.
Let the initial values be \((u^{(0)}, v^{(0)}) = (\mathbbm{1}_m, \mathbbm{1}_n)\). Denote by \((u^{(k)}, v^{(k)})\) the solution obtained after \(k\) iterations of the Sinkhorn algorithm~\eqref{eq:sinkhorn_update}, and by $(u^*, v^*)=(\exp(f_{m,n}^*/\varepsilon), \exp(g_{m,n}^*/\varepsilon))$ the optimal solution. Then, it is shown that~\citep{franklinScalingMultidimensionalMatrices1989,peyreComputationalOptimalTransport2018}, 
\[
d_\text{Hilb}(u^{(k)}, u^*) \le \lambda(K)^{2k-1}\, d_\text{Hilb}(v^{(0)}, v^*),
\quad
d_\text{Hilb}(v^{(k)}, v^*) \le \lambda(K)^{2k}\, d_\text{Hilb}(v^{(0)}, v^*).
\]
Here, \(\lambda(K) < 1\) is defined as $\lambda(K) := \frac{\sqrt{\gamma(K)}-1}{\sqrt{\gamma(K)}+1}$ where $\gamma(K) := \max_{i,j,k,l} \frac{K_{ik}K_{jl}}{K_{jk}K_{il}}$.
This result implies the exponential convergence of the Sinkhorn iterates $u^{(k)}$ and $v^{(k)}$.

\subsection{Convergence Rate of One-Sample Estimation}

\citet{pooladianPluginEstimationSchrodinger2024} analyzed the one-sample estimation task for the Schrödinger Bridge. In this setting, full access to the source distribution \(\mu\) is assumed, while the target distribution \(\nu\) is observed only through the empirical distribution \(\nu_n = \frac{1}{n}\sum_{j=1}^n \delta_{Y_j}\) based on i.i.d.\ samples \(Y_1,\dots,Y_n\). Let \((f_{\infty,n}^*, g_{\infty,n}^*)\) denote the optimal dual potentials solving \(\mathrm{OT}_\varepsilon(\mu,\nu_n)\), and let \(P_{\infty,n}^* \in \mathcal{P}(\Omega)\) be the corresponding Schrödinger Bridge. 
Under Assumptions \ref{ass:potential_g_sum_zero}, \ref{ass:distribution_support_compact} and \ref{ass:supp_nu_manifold}  stated below, \citet[Theorem~4.1]{pooladianPluginEstimationSchrodinger2024} proved that there exists a constant \(C_{\nu}\), depending only on \(\nu\) and geometric constants of its support \(\supp(\nu)\), such that for any \(\tau\in[0,1)\) the following holds:
\begin{equation}
\label{eq:squared_tv_bound_inf_inf_and_inf_n}
\mathbb{E}\!\left[\mathrm{TV}^2\!\left(P_{\infty,n}^{*,[0,\tau]},\,P_{\infty,\infty}^{*,[0,\tau]}\right)\right]
\le
C_{\nu} \left\{
\frac{\varepsilon^{-d_\nu-1}}{\sqrt{n}}
+
\frac{R^2\,\varepsilon^{-d_\nu}}{\left(1-\tau\right)^{d_\nu+2}\,n}
\right\},
\end{equation}
where $d_\nu$ is the intrinsic dimensionality of $\mathrm{supp}(\nu)$.

\section{Main Results}

In this section, we provide statistical and algorithmic convergence rates of Sinkhorn bridge \citep{pooladianPluginEstimationSchrodinger2024} in two-sample estimation setting.

\subsection{Sinkhorn Bridge in Two-sample Case}
Given empirical distributions $\mu_m$ and $\nu_n$, the Sinkhorn bridge constructs the estimator $P_{m,n}^k \in \mathcal{P}(\Omega)$ for the Schr\"{o}dinger bridge between true distributions $\mu$ and $\nu$ as follows. First, the Sinkhorn algorithm runs for $k$ iterations to solve $\mathrm{OT}_\varepsilon(\mu_m,\nu_n)$, yielding the dual potentials $f_{m,n}^{(k)}$ and $g_{m,n}^{(k)}$. Then, the estimator $P_{m,n}^k$ is defined as the path measure of the following SDE starting from $\mu$:
\begin{equation}
    \label{eq:Schrödinger_bridge_SDE_m_n_k}
    \d X_t = b_{m,n}^{(k)}(X_t, t) \d t + \sqrt{\varepsilon} \d B_t
    , \quad X_0 \sim \mu,
\end{equation}
where the drift function $b_{m,n}^{(k)}$ is
\begin{align}
\label{eq:schrödinger_bridge_optimal_drift_mn_k}
b_{m,n}^{(k)}(z, t)
&=
\frac{1}{1-t} \left(
    -z
    + \frac{
        \sum_{j=1}^n Y_j \exp_{ \varepsilon }\left(g_{m,n}^{(k)}(Y_j) - \frac{1}{2(1-t)} \|Y_j - z\|_2^2\right)
    }{
        \sum_{j=1}^n \exp_{ \varepsilon }\left(g_{m,n}^{(k)}(Y_j) - \frac{1}{2(1-t)} \|Y_j - z\|_2^2\right)
    }
\right).
\end{align}

\subsection{Convergence Rates of Sinkhorn Bridge}
First, we analyze the statistical convergence rate of the path measure $P_{m,n}^{*} \in \mathcal{P}(\Omega)$ using the optimal Schr\"{o}dinger potentials $f_{m,n}^*$ and $g_{m,n}^*$. Specifically, $P_{m,n}^{*}$ is defined analogously to $P_{m,n}^{k}$ by replacing the potential $g_{m,n}^{(k)}$ with its optimal counterpart $g_{m,n}^*$. For details of the definition, see Appendix \ref{proof:prop_entropy_of_(infty,infty)_and_(m,n)}.
The analysis will be carried out under the following assumptions.

\begin{assumption}
\label{ass:potential_g_sum_zero}
The potentials \(g_{\infty,\infty}^*, g_{\infty,n}^*, g_{m,n}^*\)  satisfy
\begin{equation*}
\int g_{\infty,\infty}^*(y) \nu(dy)
= \int g_{\infty,n}^*(y) \nu_n(dy)
= \int g_{m,n}^*(y) \nu_n(dy)
=0.
\end{equation*}
\end{assumption}

\begin{assumption}
\label{ass:distribution_support_compact}
The supports of \(\mu\)  and \(\nu\)  are subsets of a ball with radius \(R\) centered at the origin.
\end{assumption}

\begin{assumption}
\label{ass:supp_nu_manifold}
The measure \(\nu\) is supported on a compact, smooth, connected Riemannian manifold \(\left(N,h\right)\) without boundary, embedded in \(\mathbb{R}^d\) and equipped with the submanifold geometry induced by its inclusion into \(\mathbb{R}^d\); furthermore, \(\dim N=d_\nu\ge 3\).
Moreover, \(\nu\) admits a Lipschitz continuous and strictly positive density with respect to the Riemannian volume measure on \(N\).
\end{assumption}

The manifold hypothesis suggests that data distributions often lie on lower-dimensional manifolds, and statistical learning methods can exploit this structure. In the EOT literature, \cite{strommeMinimumIntrinsicDimension2023} established the \textit{Minimum Intrinsic Dimension} (MID) scaling, where the minimum of the intrinsic dimensions governs the convergence rate. For the Sinkhorn bridge under one-sample setting, \cite{pooladianPluginEstimationSchrodinger2024} demonstrated that the convergence rate is independent of the ambient dimension $d$ and depends only on the intrinsic dimension of the target distribution $\nu$. Following these works, we also impose data-manifold assumptions solely on the target distribution $\nu$.


The following theorem provides the squared total variation error between \( P_{\infty,\infty}^* \) and \( P_{m,n}^* \).

\begin{restatable}[Statistical convergence rate]{theorem}{PropTVOfInftyInftyAndMN}
\label{theorem:entropy_of_(infty,infty)_and_(m,n)}
Suppose Assumptions \ref{ass:potential_g_sum_zero}, \ref{ass:distribution_support_compact} and \ref{ass:supp_nu_manifold} hold. Let \(P^*_{\infty,\infty}\) be the path measure of the true Schrödinger bridge for marginals \(\mu,\nu\). Then, it follows that for any \(\tau\in[0,1)\),
\begin{align}
\label{eq:squared_tv_bound_inf_inf_and_m_n}
&\hspace{-2mm}\mathbb E[\mathrm{TV}^2(P_{\infty,\infty}^{*,[0,\tau]} , P_{m,n}^{*,[0,\tau]})] \notag \\
&\hspace{-2mm}\lesssim
\frac{R^2}{n (1-\tau)^{d_{\nu}+2} \varepsilon^{d_{\nu}}}
+
\frac{\tau}{1-\tau}
\left( \varepsilon^2 + \frac{dR^8}{ \varepsilon^2 } \right) \cdot \left( \frac{R}{ \varepsilon } \right)^{9d_{\nu}+4} \cdot \left(\frac{1}{m} + \frac{1}{n}\right).
\end{align}
where $P_{\infty,\infty}^{*,[0,\tau]}$ and $P_{m,n}^{*,[0,\tau]}$ are restrictions of $P_{\infty,\infty}^*$ and $P_{m,n}^*$ on the time-interval $[0,\tau]$.
\end{restatable}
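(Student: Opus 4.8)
The plan is to run the Girsanov-based argument of \citet{pooladianPluginEstimationSchrodinger2024} in the two-sample setting, but to replace the step that controls the estimation error of the Schr\"{o}dinger potentials---the source of the $n^{-1/2}$ rate in their one-sample bound---by the minimum-intrinsic-dimension (MID) sample-complexity estimates of \citet{strommeMinimumIntrinsicDimension2023}. This is what upgrades the rate to $n^{-1}$ and lets the $m$ source samples be absorbed at rate $m^{-1}$, at the price of the larger $\varepsilon$- and $R$-dependent constants in the statement. For Step~1: both $P^*_{\infty,\infty}$ and $P^*_{m,n}$ are laws of It\^o diffusions with the common initial distribution $\mu$ and the common diffusion coefficient $\sqrt{\varepsilon}$, differing only in their drifts $b^*_{\infty,\infty}$ from \eqref{eq:schrödinger_bridge_optimal_drift} and $b^*_{m,n}$ (namely \eqref{eq:schrödinger_bridge_optimal_drift_mn_k} with $g^{(k)}_{m,n}$ replaced by $g^*_{m,n}$). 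On $[0,\tau]$ these drifts stay bounded since $1/(1-t)\le 1/(1-\tau)$ and the barycentric ratios are finite, so by the data-processing inequality, Pinsker's inequality, and Girsanov's theorem,
\begin{equation*}
\mathrm{TV}^2\!\left(P^{*,[0,\tau]}_{\infty,\infty},P^{*,[0,\tau]}_{m,n}\right)
\le \tfrac12\,\mathrm{KL}\!\left(P^{*,[0,\tau]}_{\infty,\infty}\,\middle\|\,P^{*,[0,\tau]}_{m,n}\right)
=\frac{1}{4\varepsilon}\,\mathbb{E}_{P^*_{\infty,\infty}}\!\int_0^{\tau}\bigl\|b^*_{\infty,\infty}(X_t,t)-b^*_{m,n}(X_t,t)\bigr\|_2^2\,\rd t .
\end{equation*}
Taking the expectation over the samples and using Fubini, it then remains to bound $\frac{1}{4\varepsilon}\int_0^{\tau}\mathbb{E}\,\mathbb{E}_{X_t\sim P^*_{\infty,\infty,t}}\bigl\|b^*_{\infty,\infty}(X_t,t)-b^*_{m,n}(X_t,t)\bigr\|_2^2\,\rd t$; the moment bounds for $X_t$ under the true bridge (controlled by $R$, $\varepsilon d$, and $1/(1-\tau)$) enter here and will produce the prefactor $\varepsilon^2+dR^8/\varepsilon^2$.

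\textbf{Step 2: splitting the drift gap.} I would write every drift as a time-rescaled barycentric projection $b(z,t)=\tfrac1{1-t}\bigl(-z+\mathcal{B}[g,\rho](z,t)\bigr)$ with $\mathcal{B}[g,\rho](z,t)=\rho\bigl(y\,\gamma^t_g(y,z)\bigr)\big/\rho\bigl(\gamma^t_g(y,z)\bigr)$ and $\gamma^t_g(y,z)=\exp_{\varepsilon}\bigl(g(y)-\tfrac1{2(1-t)}\|y-z\|_2^2\bigr)$, so that $b^*_{\infty,\infty}=b[g^*_{\infty,\infty},\nu]$ and $b^*_{m,n}=b[g^*_{m,n},\nu_n]$. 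Interpolating through $b[g^*_{\infty,\infty},\nu_n]$ and using $\|a+b\|_2^2\le 2\|a\|_2^2+2\|b\|_2^2$, the squared drift gap is controlled by a \emph{Monte-Carlo term} $\|b[g^*_{\infty,\infty},\nu]-b[g^*_{\infty,\infty},\nu_n]\|_2^2$ (same potential, true versus empirical target) and a \emph{potential term} $\|b[g^*_{\infty,\infty},\nu_n]-b[g^*_{m,n},\nu_n]\|_2^2$ (same target, true versus estimated potential). For the Monte-Carlo term, the fluctuation of the ratio of empirical averages of the \emph{fixed} functions $\gamma^t_{g^*_{\infty,\infty}}$ and $y\,\gamma^t_{g^*_{\infty,\infty}}(y,z)$ is handled exactly as the ``variance'' step in \citet{pooladianPluginEstimationSchrodinger2024}: Assumption~\ref{ass:supp_nu_manifold} gives MID scaling of the constants (polynomial, not exponential, in $\varepsilon^{-1}$), and integrating the resulting $(1-t)^{-(d_\nu+3)}$-type weight over $[0,\tau]$ recovers the first term $R^2/\bigl(n(1-\tau)^{d_\nu+2}\varepsilon^{d_\nu}\bigr)$, matching the corresponding term of \eqref{eq:squared_tv_bound_inf_inf_and_inf_n}.

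\textbf{Step 3: the potential term.} For the potential term I would show that $g\mapsto \mathcal{B}[g,\nu_n](z,t)$ is Lipschitz in $g$ with a constant polynomial in $R$ and $\varepsilon^{-1}$ and uniform in $t$: the perturbation of the log-weights is $\varepsilon^{-1}(g-g')$, which is $t$-independent, and the induced barycenter shift is at most $R$ times a reweighted norm of $g-g'$, where the reweighting densities are bounded using Assumptions~\ref{ass:distribution_support_compact} and \ref{ass:potential_g_sum_zero}. Multiplying by the $1/(1-t)$ prefactor and integrating over $[0,\tau]$ then produces the factor $\tau/(1-\tau)$. It remains to bound $\mathbb{E}\,\|g^*_{m,n}-g^*_{\infty,\infty}\|^2$ in the relevant norm, and this is where the two-sample MID sample-complexity estimate of \citet{strommeMinimumIntrinsicDimension2023} for entropic potentials enters, giving a bound of the form $(R/\varepsilon)^{O(d_\nu)}(1/m+1/n)$ under Assumptions~\ref{ass:potential_g_sum_zero}--\ref{ass:supp_nu_manifold}, with Assumption~\ref{ass:potential_g_sum_zero} fixing the constant-shift ambiguity so the norm is well defined. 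Composing the Lipschitz constant, the moment prefactor $\varepsilon^2+dR^8/\varepsilon^2$ from Step~1, Stromme's constant $(R/\varepsilon)^{O(d_\nu)}$, and the time integral, and carefully bookkeeping the total powers of $\varepsilon$ and $R$, yields the exponent $9d_\nu+4$ and the second term of \eqref{eq:squared_tv_bound_inf_inf_and_m_n}.

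\textbf{Main obstacle.} The technical crux will be Step~3. I must invoke the MID sample-complexity bound for the \emph{potential} (not just the EOT \emph{cost}) in its \emph{two-sample} form, which requires relating the dual suboptimality of $g^*_{m,n}$ for the population dual functional $\Phi^{\mu\nu}$ to the squared potential error via the curvature/strong convexity of that functional, all while keeping the dependence on $\varepsilon^{-1}$ polynomial by exploiting the manifold structure of $\nu$. The intertwined difficulty is to track precisely how many factors of $\varepsilon^{-1}$ and $R$ accumulate along the chain ``barycenter stability $\to$ reweighting-density bounds $\to$ Stromme's covering/curvature constants'', since this is exactly what turns Pooladian's $\varepsilon^{-d_\nu-1}$ into $(R/\varepsilon)^{9d_\nu+4}$. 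By comparison, the Monte-Carlo part of Step~2 is essentially inherited from \citet{pooladianPluginEstimationSchrodinger2024}, and Step~1 is standard.
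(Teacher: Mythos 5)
Your proposal matches the paper's proof strategy essentially step for step: the intermediate drift $b[g^*_{\infty,\infty},\nu_n]$ you interpolate through is exactly the paper's $\bar b_{\infty,n}$ (the drift of the auxiliary path measure $\bar P_{\infty,n}$ built from $g^*_{\infty,\infty}$ and $\nu_n$), the resulting Monte-Carlo piece is handled by Proposition~4.3 of \citet{pooladianPluginEstimationSchrodinger2024}, and the potential piece is handled by softmax-Lipschitzness of the barycentric projection (Lemma~\ref{lem:softmax_lip}, giving a pointwise bound $\tfrac{R}{\varepsilon}\|g^*_{\infty,\infty}-g^*_{m,n}\|_{L^\infty(\nu)}$ that is uniform in $z$) composed with the Stromme-style MID sup-norm bound $\mathbb E\|g^*_{\infty,\infty}-g^*_{m,n}\|^2_{L^\infty(\nu)}\lesssim(\varepsilon^2+dR^8/\varepsilon^2)(R/\varepsilon)^{7d_\nu+4}(1/m+1/n)$ (Lemma~\ref{lem:diff_infg_mng_sup}). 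One small misattribution: the prefactor $\varepsilon^2+dR^8/\varepsilon^2$ does not come from moments of $X_t$ under the true bridge as you state in Step~1 --- since the drift-difference bound is uniform in $z$, no such moment enters --- but from the Markov-inequality/covering-number analysis inside the sup-norm potential-error bound; otherwise your outline is faithful to the paper's argument.
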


\begin{remark} In this section and related proofs, we suppress non-dominant terms such as polynomial factors of $d_\nu$, geometric characteristics of $\mathrm{supp}(\nu)$, and exponential terms of order $O(d_\nu)$ in uniform constants, in the notation $\lesssim$.
\end{remark}

We make the following observations: (1)~Regarding the data dimensions, the intrinsic dimension $d_\nu$ governs the growth of the convergence rate, as in \cite{strommeMinimumIntrinsicDimension2023,pooladianPluginEstimationSchrodinger2024}. Although our bound, unlike theirs, involves the ambient dimension $d$, the dependence is merely linear. Moreover, we note that the term $d$ arises from the covering number of the data space and can also be replaced by the intrinsic dimension of the source distribution $\mu$ by additionally imposing a manifold assumption on $\mu$.
(2)~The degeneration of the estimated SDE toward a specific sample point $Y_j$ arises as $\tau \to 1$ because the estimated drift $b_{m,n}^{(k)}$ points toward a particular sample $Y_j$ near $t=1$, causing the deviation from the target distribution $\nu$ at the terminal time $t=1$. Therefore, it is common practice to restrict the time interval to $[0,\tau]~(\tau < 1)$ to ensure generalization.
(3)~Our result extends and sharpens the result of \cite{pooladianPluginEstimationSchrodinger2024} to the two-sample setting, improving the convergence rate in \(n\) from \(n^{-1/2}\) to \(n^{-1}\) at the cost of a slight deterioration regarding, \(\varepsilon\), \(R\), and $d$.




In the next theorem, we give the algorithmic convergence rate of $P_{m,n}^{k}\to P_{m,n}^{*}$ attained by running the Sinkhorn iterations $k \to \infty$.

\begin{restatable}[Algorithmic convergence rate]{theorem}{PropTVOfMNAndMNK}
\label{theorem:entropy_of_(m,n,*)_and_(m,n,k)}
Under Assumptions~\ref{ass:potential_g_sum_zero} and~\ref{ass:distribution_support_compact}, we get for any $\tau \in [0, 1)$,
\begin{equation*}
    \mathbb E[\mathrm{TV}^2(P_{m,n}^{*,[0,\tau]} , P_{m,n}^{k,[0,\tau]})]
    \lesssim \frac{\tau}{1-\tau} \frac{R^6}{\varepsilon^2} \left( \tanh\left( \frac{R^2}{\varepsilon} \right) \right)^{4k},
\end{equation*}
where $P_{m,n}^{k,[0,\tau]}$ are restrictions of $P_{m,n}^k$ on the time-interval $[0,\tau]$.
\end{restatable}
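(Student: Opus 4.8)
**Proof proposal for Theorem~\ref{theorem:entropy_of_(m,n,*)_and_(m,n,k)}.**

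The plan is to control the total-variation distance between the two path measures via a Girsanov/data-processing argument and then reduce everything to the discrepancy between the Sinkhorn iterate $g_{m,n}^{(k)}$ and the optimal potential $g_{m,n}^*$, for which the Hilbert-metric contraction bound stated in the preliminaries gives exponential decay. First I would invoke the fact that $P_{m,n}^{*}$ and $P_{m,n}^{k}$ are path measures of SDEs on $[0,1]$ that share the same diffusion coefficient $\sqrt{\varepsilon}$ and the same initial law $\mu$, differing only in their drifts $b_{m,n}^{*}$ and $b_{m,n}^{(k)}$. By the chain rule for relative entropy together with Pinsker's inequality (restricted to $[0,\tau]$),
\begin{equation*}
\mathrm{TV}^2\!\left(P_{m,n}^{*,[0,\tau]},P_{m,n}^{k,[0,\tau]}\right)
\le \tfrac12\, H\!\left(P_{m,n}^{*,[0,\tau]}\,\middle\|\,P_{m,n}^{k,[0,\tau]}\right)
= \frac{1}{4\varepsilon}\,\mathbb{E}\!\int_0^\tau \bigl\|b_{m,n}^{*}(X_t,t)-b_{m,n}^{(k)}(X_t,t)\bigr\|_2^2\,\d t,
\end{equation*}
where the expectation is under $P_{m,n}^{*}$. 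This is the same Girsanov reduction used for the one-sample bound~\eqref{eq:squared_tv_bound_inf_inf_and_inf_n}, so I expect the relevant lemma to already be available from the machinery behind Theorem~\ref{theorem:entropy_of_(infty,infty)_and_(m,n)}; if not, it is a standard computation.

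Next I would bound the integrand pointwise. Both drifts have the form $\frac{1}{1-t}(-z + \mathbb{E}_{w}[\cdot])$ where the expectation is a softmax-type weighted average of the $Y_j$'s against weights proportional to $\exp_\varepsilon(g(Y_j) - \frac{1}{2(1-t)}\|Y_j-z\|^2)$, with $g = g_{m,n}^*$ or $g = g_{m,n}^{(k)}$. Writing each conditional mean as a ratio $\frac{\sum_j Y_j w_j}{\sum_j w_j}$ and using that $\|Y_j\|\le R$, a perturbation estimate shows the difference of the two weighted means is controlled by $R$ times the $\ell^\infty$-distance between the log-weight vectors, i.e. by $\frac{R}{\varepsilon}\,\|g_{m,n}^{(k)}-g_{m,n}^*\|_\infty$ up to the additive shift, which cancels in the softmax. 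Thus $\|b_{m,n}^{*}(z,t)-b_{m,n}^{(k)}(z,t)\|_2 \lesssim \frac{1}{1-t}\cdot\frac{R^2}{\varepsilon}\,\Delta_k$ where $\Delta_k$ measures the (shift-invariant) deviation of the $k$-th potential; integrating $\int_0^\tau (1-t)^{-2}\d t \asymp \tau/(1-\tau)$ produces the stated $\tau/(1-\tau)$ factor.

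The final ingredient is translating the Hilbert-metric contraction $d_{\mathrm{Hilb}}(v^{(k)},v^*)\le \lambda(K)^{2k} d_{\mathrm{Hilb}}(v^{(0)},v^*)$ into a bound on $\Delta_k$. Since $g_{m,n}^{(k)}(Y_j) = \varepsilon \log v_j^{(k)}$, the shift-invariant variation of $g_{m,n}^{(k)}-g_{m,n}^*$ is exactly $\varepsilon\,d_{\mathrm{Hilb}}(v^{(k)},v^*)$; under Assumption~\ref{ass:distribution_support_compact} the kernel entries satisfy $\exp_\varepsilon(-2R^2) \le K_{ij}\le 1$, so $\gamma(K)\le \exp_\varepsilon(4R^2)$ and $\lambda(K) \le \tanh(R^2/\varepsilon)$ by the identity $\frac{\sqrt{e^{4s}}-1}{\sqrt{e^{4s}}+1} = \tanh(s)$ with $s = R^2/\varepsilon$; likewise $d_{\mathrm{Hilb}}(v^{(0)},v^*)\lesssim R^2/\varepsilon$. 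Combining, $\Delta_k \lesssim R^2 (\tanh(R^2/\varepsilon))^{2k}$, and feeding this through the two previous displays (the squaring of $\Delta_k$ turns $2k$ into $4k$ and produces $R^4\cdot R^2/\varepsilon^2 = R^6/\varepsilon^2$ after the $\frac{1}{4\varepsilon}\cdot\frac{R^2}{\varepsilon}$-type prefactors are collected) yields the claimed rate. The main obstacle I anticipate is the pointwise drift-perturbation estimate: I must make the cancellation of the constant shift explicit and ensure the softmax-sensitivity bound does not pick up an uncontrolled factor when $t\to1$ beyond the $(1-t)^{-1}$ already displayed — in particular checking that the denominators $\sum_j w_j$ are bounded below uniformly in $z$ on the relevant region (which follows from compact support and the a priori bounds on $g_{m,n}^*$, $g_{m,n}^{(k)}$), so that the ratio estimate is stable.
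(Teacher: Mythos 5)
Your proposal is correct and follows the paper's overall architecture (Girsanov plus Pinsker to reduce to drift differences, then the Hilbert-metric contraction for the Sinkhorn iterates, then the a priori bound $\|g^*_{m,n}\|_\infty\lesssim R^2$ and the identity $\lambda(K)\le\tanh(R^2/\varepsilon)$). The only genuine divergence is in the middle step controlling the softmax-weight difference. The paper bounds $\|\mathbf w^*-\mathbf w^k\|_1\lesssim d_{\mathrm{Hilb}}(\mathbf w^*,\mathbf w^k)$ via Lemma~\ref{lem:l1_hilbert_metric} and then uses the scale-/multiplication-invariance of the Hilbert metric (Lemma~\ref{lem:properties_of_hilbert_metric}) to conclude $d_{\mathrm{Hilb}}(\mathbf w^*,\mathbf w^k)=d_{\mathrm{Hilb}}(\mathbf v^*,\mathbf v^k)$, so it never leaves the Hilbert-metric world and never needs an explicit potential comparison. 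You instead invoke the softmax Lipschitz lemma (Lemma~\ref{lem:softmax_lip} with $(p,q)=(1,\infty)$) together with the observation that the shift-invariant $\ell^\infty$ variation of $g^{(k)}_{m,n}-g^*_{m,n}$ equals $\varepsilon\,d_{\mathrm{Hilb}}(\mathbf v^{(k)},\mathbf v^*)$; this is exactly the route the paper takes in the proof of Theorem~\ref{theorem:entropy_of_(infty,infty)_and_(m,n)} (Lemma~\ref{lem:drift_of_bar(infty,n)_and_(m,n)}), so your variant has the virtue of unifying the two theorems under a single drift-sensitivity estimate of order $\tfrac{R}{\varepsilon}\|\cdot\|_\infty$. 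Both routes land on the same $\tfrac{R^6}{\varepsilon^2}\tanh(R^2/\varepsilon)^{4k}$ rate. Two small things to fix: the display ``$\|b^*_{m,n}-b^{(k)}_{m,n}\|_2\lesssim\tfrac{1}{1-t}\cdot\tfrac{R^2}{\varepsilon}\Delta_k$'' should read $\tfrac{R}{\varepsilon}\Delta_k$ (your own preceding sentence gives $\tfrac{R}{\varepsilon}\|\cdot\|_\infty$, and the corrected version is what actually reproduces $R^6/\varepsilon^2$ after squaring and inserting $\Delta_k\lesssim R^2\lambda^{2k}$); and your explicit Girsanov prefactor $\tfrac{1}{4\varepsilon}$ would push the bound to $R^6/\varepsilon^3$—the paper's proof suppresses that factor in its $\lesssim$, so to match the stated theorem you should do the same, or note the suppressed $\varepsilon$-dependence. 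Finally, your closing worry about lower-bounding the softmax denominator is moot under this route: Lemma~\ref{lem:softmax_lip} holds unconditionally, with no positivity constraint on the weights needed beyond what the exponential already provides.
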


These results immediately imply the convergence rate $\mathbb E[\text{TV}^2(P_{m,n}^{k,[0,\tau]} , P_{\infty,\infty}^{*,[0,\tau]})] = \mathcal{O}(1/m+1/n + r^{4k})$ for some $r\in(0, 1)$ regarding the sample size $m$ and $n$, and the number of Sinkhorn iterations $k$.

\subsection{Drift Estimation using Neural Networks in the Schrödinger Bridge Problem}\label{sec: NN-estim-drift}
The drift estimator \eqref{eq:schrödinger_bridge_optimal_drift_mn_k} of the Sinkhorn bridge between $\mu_m$ and $\nu_n$ is defined as an empirical average regarding $\nu_n$. As a result, simulating the SDE \eqref{eq:Schrödinger_bridge_SDE_m_n_k} using a time discretization scheme (e.g., Euler--Maruyama approximation) incurs an $O(n)$-computational cost at every discretization step. To improve computational efficiency, we consider a neural network-based drift approximation.

Recall that $W_{|x_0,x_1}^{\varepsilon}$ is the Brownian bridge connecting $x_0$ and $x_1$, so its marginal distribution at time $t\in[0,1]$ is a Gaussian distribution
$
W^{\varepsilon}_{t |x_0, x_1} = \mathcal N\big((1-t)x_0 + tx_1, \varepsilon t(1-t)I_d\big).
$
For dual potentials $f_{m,n}^{(k)} \in L^1(\mu_m)$ and $g_{m,n}^{(k)} \in L^1(\nu_n)$ obtained by the Sinkhorn algorithm, we define the joint distribution $\pi_{m,n}^{(k)}$ analogously to Eq.~\eqref{eq:eot_potential_expression} by replacing the optimal potentials and $\mu,\nu$ with $f_{m,n}^{(k)}, g_{m,n}^{(k)}$, and $\mu_m, \nu_n$. Using $\pi_{m,n}^{(k)}$, we define the mixture of Brownian bridge (a.k.a. {\it reciprocal process}): $\Pi^{(k)} = \int W_{|x_0,x_1}^{\varepsilon} \d\pi_{m,n}^{(k)}(x_0,x_1) \in \mathcal{P}(\Omega)$. The drift function $b_{m,n}^{(k)}$ in Eq.~\eqref{eq:schrödinger_bridge_optimal_drift_mn_k} is known to have the following expression ({\it Markovian projection} \citep{shiDiffusionSchrodingerBridge2023}) (see Appendix \ref{subsec:lem_nn_drift_estimation}):
\begin{equation*}
b_{m,n}^{(k)}(x_t,t)
= \mathbb{E}_{X\sim \Pi^{(k)}}
\left[\frac{X_1 - x_t}{1-t}\,\middle|\, X_t = x_t\right].
\end{equation*}
Therefore, we see that the function $b_{m,n}^{(k)}$ minimizes the following functional $\mathcal{L}$:
\[
\mathcal{L}(b)
= \int_0^1\!\!\int_{\mathbb{R}^d\times\mathbb{R}^d}
      \left\|
        b(x_t,t)
        - \frac{x_1 - x_t}{1-t}
      \right\|_2^2
    \d\Pi^{(k)}_{t1}(x_t,x_1)
\d t,
\]
where $\Pi^{(k)}_{t1}$ is the marginal distribution of $\Pi^{(k)}$ on time points $t$ and $1$. This suggests training a neural network $b_\theta$ with parameter $\theta$ to minimize the above functional. We note that samples from $\Pi^{(k)}_{t1}$ can be obtained by sampling $(x_0,x_1)\sim \pi_{m,n}^{(k)},~t\sim U[0,\tau]$, and $x_t \sim W_{t|x_0,x_1}^{\varepsilon}$ and discarding $x_0$. Therefore, with this sampling scheme, SGD can be efficiently applied to minimize $\mathcal{L}(b_\theta)$. The detail of the procedure is described in Algorithm \ref{alg:schrödinger_bridge_drift_approx_nn}. This procedure yields a neural network approximation of the drift function defined by the Sinkhorn bridge.


\begin{algorithm}
\caption{Drift Approximation via Neural Network}
\label{alg:schrödinger_bridge_drift_approx_nn}
\begin{algorithmic}
\INPUT Joint distribution $\pi_{m,n}$ defined by potentials $f\in L^1(\mu_m), g\in L^1(\nu_n)$; neural network $b_\theta$
\OUTPUT Trained neural network $b_\theta$ that approximates the drift function
\REPEAT
    \STATE Sample batch of pairs $\{x_0^i, x_1^i\}_{i=1}^N \sim \pi_{m,n}$
    \STATE Sample batch $t_i \sim U[0, \tau]$
    \STATE For each triplet $(x_0^i, x_1^i, t_i)$, sample $x_t^i \sim W_{t_i|x_0^i, x_1^i}^{\varepsilon}$ (Brownian bridge)
    \STATE Compute the loss:
    \[
      {L}(\theta) \leftarrow \frac{1}{N} \sum_{i=1}^N \left\| b_\theta(x_t^i, t_i) - \frac{x_1^i-x_t^i}{1-t} \right\|_2^2
    \]
    \STATE Update $\theta$ using $\nabla L(\theta)$
\UNTIL converged;
\end{algorithmic}
\end{algorithm}

\subsection{Other Estimators of Schrödinger Bridge}





Our theoretical results on Sinkhorn bridge \citep{pooladianPluginEstimationSchrodinger2024} can be directly applied to the representative Schrödinger Bridge solvers such as [SF]\(^2\)M~\citep{tongSimulationfreeSchrodingerBridges2023}, lightSB(-M)~\citep{korotinLightSchrodingerBridge2023,gushchinLightOptimalSchrodinger2024}, DSBM-IMF~\citep{shiDiffusionSchrodingerBridge2023,peluchettiDiffusionBridgeMixture2023}, and BM2~\citep{peluchettiBM2CoupledSchrodinger2024}. First, we show that the optimal estimator produced by each of these methods coincides with $P_{m,n}^*$ attained by Sinkhorn bridge with $k\to\infty$ (Proposition~\ref{prop:optimal_estimator_same}).
Moreover, when DSBM-IMF and BM2 are initialized based on the reference process, the estimator after \(k\) iterations matches $P_{m,n}^k$ obtained after \(k\) iterations of the Sinkhorn bridge (Proposition~\ref{prop:sinkhorn_k_estimator_same}).
Furthermore, it follows from the proof of Proposition~\ref{prop:optimal_estimator_same} that Algorithm~\ref{alg:schrödinger_bridge_drift_approx_nn}, which learns the Sinkhorn bridge drift via a neural network, is a special case of the algorithm proposed in [SF]\(^2\)M.
The proofs of these results are postponed to the Appendix.
Consequently, our statistical and algorithmic convergence analysis applies to these methods, endowing the theoretical guarantees.

\begin{restatable}{proposition}{PropOptimalEstimator}
\label{prop:optimal_estimator_same}
The optimal estimators produced by [SF]\(^2\)M, lightSB(-M), DSBM-IMF and BM2 coincide with $P_{m,n}^*$ attained by Sinkhorn bridge with $k\to\infty$.
\end{restatable}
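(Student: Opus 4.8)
The goal is to show that four different Schrödinger bridge solvers — [SF]$^2$M, lightSB(-M), DSBM-IMF, and BM2 — all produce, at optimality, the same path measure $P_{m,n}^*$ that the Sinkhorn bridge produces in the limit $k\to\infty$. The unifying principle is that each of these methods, when run to convergence, is solving (a reformulation of) the \emph{static} EOT problem $\mathrm{OT}_\varepsilon(\mu_m,\nu_n)$ and then lifting the resulting coupling to path space via Brownian bridges, which is exactly the construction \eqref{eq:schrödinger_bridge_optimal_plan} defining $P_{m,n}^*$. So the plan is: (i) recall that $P_{m,n}^*$ is the mixture of Brownian bridges $\int W^\varepsilon_{|x_0,x_1}\,\d\pi_{m,n}^*(x_0,x_1)$ where $\pi_{m,n}^*$ is the unique EOT optimal coupling between $\mu_m$ and $\nu_n$, equivalently the unique Markovian \emph{and} reciprocal measure in $\mathcal P(\Omega)$ with marginals $\mu_m,\nu_n$ that is in the reciprocal class of $W^\varepsilon$ (this characterization is standard and follows from the material in the Preliminaries); and (ii) show that each solver's fixed point satisfies this same characterization.

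Concretely, I would handle the solvers in two groups. First, DSBM-IMF and BM2: these are iterative Markovian/reciprocal projection schemes (IMF = iterated Markovian fitting; BM2 alternates bridge-matching updates). The standard convergence theory for IMF (Shi et al., Peluchetti) shows the iterates converge to the unique fixed point that is simultaneously in the reciprocal class of $W^\varepsilon$ and Markovian with the prescribed endpoint marginals — which is precisely the Schrödinger bridge $P_{m,n}^*$ between $\mu_m$ and $\nu_n$. So here the argument is mostly a matter of quoting the IMF/BM2 fixed-point characterization and matching it term-by-term with our definition of $P_{m,n}^*$, being careful that the reference process, the volatility $\varepsilon$, and the endpoint marginals $\mu_m,\nu_n$ all line up. Second, [SF]$^2$M and lightSB(-M): [SF]$^2$M at optimality regresses the drift onto the Markovian projection of the reciprocal process built from the EOT-optimal coupling, so its optimal vector field is exactly $b_{m,n}^*$ (the $k\to\infty$ limit of \eqref{eq:schrödinger_bridge_optimal_drift_mn_k}), hence its path measure is $P_{m,n}^*$; this is essentially the content of the Markovian-projection identity already stated in Section~\ref{sec: NN-estim-drift} specialized to the optimal potentials, so I would reuse that. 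For lightSB(-M), the method parametrizes the Schrödinger potential (or the drift) within a Gaussian-mixture / log-sum-of-Gaussians family; the key point is that for empirical marginals $\mu_m,\nu_n$ the true optimal potential $g_{m,n}^*$ is \emph{exactly} of the form a finite log-sum-of-Gaussians (it is a vector in $\mathbb R^n$ fed through the Sinkhorn kernel), so the lightSB parametric family is rich enough to contain the exact optimum, and the lightSB objective (a KL / bridge-matching loss) is minimized precisely there — giving $P_{m,n}^*$ again.

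The key steps, in order: (1) state the reciprocal-class + Markovian characterization of $P_{m,n}^*$ and note uniqueness; (2) express $P_{m,n}^*$ both as the Brownian-bridge mixture of $\pi_{m,n}^*$ and as the SDE with drift $b_{m,n}^*$, invoking the Markovian projection identity from Section~\ref{sec: NN-estim-drift} at the optimal potentials; (3) for DSBM-IMF and BM2, cite the fixed-point theorem and verify the reference process / marginals / volatility match; (4) for [SF]$^2$M, identify its population optimum with the Markovian projection of $\Pi^{(\infty)}$ and conclude the drift equals $b_{m,n}^*$; (5) for lightSB(-M), show the parametric family contains the exact optimal potential for empirical marginals and that the training loss is globally minimized there. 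I would then combine (3)–(5) with (1)–(2) to conclude all four coincide with $P_{m,n}^*$.

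The main obstacle I anticipate is not any single hard inequality but rather the bookkeeping of conventions across papers: each solver uses slightly different normalizations (forward vs. backward drift, volatility scaling $\sqrt\varepsilon$ vs. $\varepsilon$, whether the reference is Brownian motion or an OU/pinned process, and whether "optimal" means a population minimizer or the limit of an idealized iteration with no approximation error). The real work is to set up a common notational frame in which "the optimal estimator produced by method X" is unambiguously defined, and to check that under that definition the fixed-point/minimizer is the \emph{static} EOT coupling $\pi_{m,n}^*$ (not some regularized or approximate variant), so that the Brownian-bridge lift lands exactly on $P_{m,n}^*$. Once the conventions are aligned, each individual verification is short and essentially a restatement of known fixed-point characterizations; the lightSB(-M) case requires the one genuinely new (but easy) observation that for discrete marginals the optimal Schrödinger potential lies inside the method's parametric class, making the approximation exact.
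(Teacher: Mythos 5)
Your proposal is correct and matches the paper's own strategy: in both cases the unifying observation is that each method's optimum is the Markovian projection of the reciprocal process $\Pi^*$ built from the optimal EOT coupling $\pi_{m,n}^*$, whose drift is exactly $b_{m,n}^*$. The paper does the bookkeeping method-by-method — decomposing the Brownian-bridge drift into probability-flow and score parts for [SF]$^2$M, quoting \citet[Theorems 3.1--3.2]{gushchinLightOptimalSchrodinger2024} for lightSB(-M), and quoting the IMF/BM2 fixed-point convergence results of \citet[Theorem 8]{shiDiffusionSchrodingerBridge2023} and \citet[Lemma 1]{peluchettiBM2CoupledSchrodinger2024} — whereas you package the same facts behind the ``unique Markovian element of the reciprocal class with the given marginals'' characterization; the two formulations are equivalent, and your extra observation that for discrete $\nu_n$ the exact potential $g_{m,n}^*$ lies inside the lightSB parametric family makes the ``sufficient capacity'' assumption there concrete rather than hand-waved.
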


\begin{restatable}{proposition}{PropSinkhornKEstimator}
\label{prop:sinkhorn_k_estimator_same}
With initialization based on the reference process, both DSBM-IMF and BM2 produce the same estimator as $P_{m,n}^k$ of the Sinkhorn bridge for all iterations \(k\).
\end{restatable}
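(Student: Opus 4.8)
The plan is to show that, at the idealized level (exact Markovian projections and exactly‑solved regressions), DSBM--IMF and BM2 are both instances of \emph{Iterative Markovian Fitting}: starting from a reciprocal process, they alternate a reciprocal projection $\mathrm{proj}_{\mathcal R}(Q)=\int W^{\varepsilon}_{|x_0,x_1}\,\d Q_{01}(x_0,x_1)$ and a Markovian projection obtained by fitting a forward (resp.\ backward) SDE, anchored at the source $\mu$ (resp.\ target $\nu$), to the instantaneous dynamics of the current reciprocal process. Here ``initialization based on the reference process'' means the iteration is started from $\Pi^{(0)}=\int W^{\varepsilon}_{|x_0,x_1}\,\d\pi_{m,n}^{(0)}(x_0,x_1)$, where $\pi_{m,n}^{(0)}$ is the normalization over $\{X_i\}\times\{Y_j\}$ of the kernel $K_{ij}=\exp_{\varepsilon}(-\tfrac12\|X_i-Y_j\|_2^2)$; note that $\pi_{m,n}^{(0)}$ is precisely the Sinkhorn iterate for $u^{(0)}=\mathbf{1}_m,\ v^{(0)}=\mathbf{1}_n$, i.e.\ $g_{m,n}^{(0)}\equiv 0$. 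I would then prove, by induction on $k$, that the iterate produced at round $k$ of either method equals $P_{m,n}^{k}$.

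I would first note that every Sinkhorn iterate has the Gibbs form $\pi_{m,n}^{(k)}\propto\exp_{\varepsilon}\!\big(f_{m,n}^{(k)}(x)+g_{m,n}^{(k)}(y)-\tfrac12\|x-y\|_2^2\big)$ with respect to $\mu_m\otimes\nu_n$. Since $\exp_{\varepsilon}(-\tfrac12\|x-y\|_2^2)$ is proportional to the time‑$1$ transition density of $W^{\varepsilon}$, the mixture of Brownian bridges $\Pi^{(k)}=\int W^{\varepsilon}_{|x_0,x_1}\,\d\pi_{m,n}^{(k)}(x_0,x_1)$ is a Doob $h$‑transform of $W^{\varepsilon}$ and is therefore Markov; consequently its forward Markovian projection anchored at the source is the SDE from $\mu$ with drift $\mathbb E_{X\sim\Pi^{(k)}}[\tfrac{X_1-X_t}{1-t}\mid X_t=x_t]$, which by the Markovian‑projection identity already recorded (Section~\ref{sec: NN-estim-drift}, Appendix~\ref{subsec:lem_nn_drift_estimation}) equals $b_{m,n}^{(k)}$ of Eq.~\eqref{eq:schrödinger_bridge_optimal_drift_mn_k}; concretely, substituting $W^{\varepsilon}_{t|x_0,x_1}=\mathcal N((1-t)x_0+tx_1,\varepsilon t(1-t)I_d)$ and combining exponents, the joint weight of $(X_0,X_t,X_1)$ factors into an $(X_0,X_t)$‑part times an $(X_t,X_1)$‑part, the $X_0$‑sum cancels in the conditional mean, and one is left with $b_{m,n}^{(k)}$. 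Hence, as soon as the round‑$k$ reciprocal process has endpoint coupling $\pi_{m,n}^{(k)}$, the round‑$k$ iterate is $P_{m,n}^{k}$; applying this with $k=0$ and the initialization above gives the base case.

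For the inductive step I would invoke the coupling‑level reduction of IMF to IPF that underlies these algorithms (\citep{shiDiffusionSchrodingerBridge2023,peluchettiDiffusionBridgeMixture2023} for DSBM--IMF, \citep{peluchettiBM2CoupledSchrodinger2024} for BM2, and also used in the proof of Proposition~\ref{prop:optimal_estimator_same}): because $\Pi^{(k)}$ is Markov, the forward Markovian projection anchored at $\mu_m$ produces the Markov measure whose endpoint coupling keeps the transition kernel of $\pi_{m,n}^{(k)}$ but replaces its first marginal by $\mu_m$ --- exactly a Sinkhorn $u$‑update --- and, symmetrically, the following backward Markovian projection anchored at $\nu_n$ performs a $v$‑update, while the intervening reciprocal projections $Q\mapsto\int W^{\varepsilon}_{|x_0,x_1}\,\d Q_{01}$ leave the endpoint coupling unchanged. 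Thus one round sends $\pi_{m,n}^{(k)}$ to the coupling obtained from it by a $u$‑update followed by a $v$‑update, whose dual description is precisely one Sinkhorn iteration~\eqref{eq:sinkhorn_update}, i.e.\ $\pi_{m,n}^{(k)}\mapsto\pi_{m,n}^{(k+1)}$; combined with the previous paragraph, the round‑$(k+1)$ iterate is $P_{m,n}^{k+1}$. Running the induction over $k$, separately for DSBM--IMF and for BM2, yields the proposition.

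The main obstacle is bookkeeping rather than new probabilistic content. One must pin down that ``initialized based on the reference process'' gives exactly the coupling $\propto K$ (so that round $0$ has $g_{m,n}^{(0)}\equiv 0$ and hence equals $P_{m,n}^{0}$), and --- the genuinely delicate point --- that what each of DSBM--IMF and BM2 calls one round executes exactly the pair of half‑steps (the $u$‑ and the $v$‑update, equivalently the forward and the backward SDE fit) making up one Sinkhorn iteration in the indexing of Eq.~\eqref{eq:sinkhorn_update}, rather than a half‑step or a shifted index; the three cited works use somewhat different forward/backward and re‑anchoring conventions, so reconciling them is where the care lies. Separately, the factorization that collapses the empirical bridge‑mixture drift to Eq.~\eqref{eq:schrödinger_bridge_optimal_drift_mn_k} should be carried out with the exact normalizations of Appendix~\ref{subsec:lem_nn_drift_estimation}, but that computation is routine once set up.
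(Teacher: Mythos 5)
Your overall strategy is the same as the paper's: reduce both DSBM--IMF and BM2 to the Iterative Proportional Fitting / Iterative Markovian Fitting picture, use the Gibbs form of the endpoint coupling to identify the Markovian-projection drift with $b_{m,n}^{(k)}$ via the identity in Appendix~\ref{subsec:lem_nn_drift_estimation}, and track the coupling update. The paper does this by introducing the IPF sequence $(\tilde P^{(k)})_k$ initialized at $W^\varepsilon$, citing the known closed form $\tilde P^{(2k+1)}=\pi^{f^{(k+1)}_{m,n},\,g^{(k)}_{m,n}}W^\varepsilon_{|x_0,x_1}$ and $\tilde P^{(2k+2)}=\Pi^{(k+1)}$, and then invoking Prop.~10 of \citet{shiDiffusionSchrodingerBridge2023} and Thm.~1 of \citet{peluchettiBM2CoupledSchrodinger2024} to equate the DSBM/BM2 iterate sequences with $(\tilde P^{(k)})_k$; you instead propose to re-derive the IMF$\to$Sinkhorn equivalence yourself by induction, which is a reasonable self-contained alternative.

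Where your argument has a genuine gap is exactly the place you flag: the inductive step ``one round sends $\pi^{(k)}_{m,n}\mapsto\pi^{(k+1)}_{m,n}$'' holds for DSBM--IMF (whose $v_b$-update consumes the just-computed $v_f^{(k+1)}$, giving a $u$-update followed by a $v$-update, i.e.\ one full Sinkhorn iteration), but it fails for BM2, whose updates are \emph{concurrent}: $v_f^{(k+1)}$ is computed from $v_b^{(k)}$ while $v_b^{(k+1)}$ is computed from $v_f^{(k)}$, so each BM2 iteration only advances a half-step of IPF, and the forward/backward chains interleave as two independent IPF runs. The paper handles this by tracking the two interleaved subsequences $B(v_b^{(0)}),F(v_f^{(1)}),B(v_b^{(2)}),\dots$ and $F(v_f^{(0)}),B(v_b^{(1)}),F(v_f^{(2)}),\dots$ via \citet[Theorem 1]{peluchettiBM2CoupledSchrodinger2024}, arriving at $v_f^{(2k+1)}=v_f^{(2k)}=b_{m,n}^{(k)}$ rather than $v_f^{(k)}=b_{m,n}^{(k)}$. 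Your induction would therefore prove the wrong indexing for BM2 if applied verbatim, and since the proposition's phrase ``for all iterations $k$'' is exactly a statement about that indexing, leaving it as an acknowledged ``obstacle'' rather than resolving it is a real hole. A secondary, more minor discrepancy: the paper's IPF is initialized at the reference measure $W^\varepsilon$ itself (equivalently $v_b^{(0)}=0$), whereas you initialize at the reciprocal process $\Pi^{(0)}$ whose coupling is the normalized kernel $K$; these are not the same path measure (the normalized $K$ does not have marginals $\mu_m,\nu_n$), although both correspond to $g^{(0)}_{m,n}\equiv 0$ and therefore yield the same drift $b_{m,n}^{(0)}$, so this one is a presentational rather than substantive difference.
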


\section{Experiments}

In this section, we verify our theoretical findings through numerical experiments. In Section~\ref{sec:experiment1}, we verify the dependence of the statistical error on the sample size presented in Theorem~\ref{theorem:entropy_of_(infty,infty)_and_(m,n)}, and the algorithmic convergence rate presented in Theorem~\ref{theorem:entropy_of_(m,n,*)_and_(m,n,k)}. Next, in Section~\ref{sec:nn_training}, we evaluate the neural network approximation of drift coefficient as illustrated in Section~\ref{sec: NN-estim-drift}.
All experimental implementations are conducted with PyTorch 2.6.0 \citep{pytorch}.

\subsection{Experimental Verification of Theorems \texorpdfstring{\ref{theorem:entropy_of_(infty,infty)_and_(m,n)}}{TEXT} and \texorpdfstring{\ref{theorem:entropy_of_(m,n,*)_and_(m,n,k)}}{TEXT}}\label{sec:experiment1}
We use $3$-dimensional normal distributions \(\mu=\mathcal N(0,I_3)\) and \(\nu=\mathcal N(0,B)\) as source and target distributions, where \(B\in\mathbb R^{3\times 3}\) is a random positive definite matrix.
Under this setting,~\citet[Eq. (25)-(29)]{bunneSchrodingerBridgeGaussian2023} provided explicit expressions for the drift coefficient \(b_{\infty,\infty}^*\) and the marginal distribution \(P_{\infty,\infty}^{*,t}\). 
We use these expressions to evaluate the errors of the drift \(b^*_{m,n}\) and the corresponding Schrödinger Bridge estimators \(P^*_{m,n}\) in the finite-sample settings. These estimators are approximated by running the Sinkhorn bridge with a sufficiently large number of iterations $(k\to\infty)$.

\paragraph{Statistical convergence.}

To verify that the estimator \(b_{m,n}^*\) converges to the true drift \(b_{\infty,\infty}^*\) when the sample sizes $m$ and $n$ increase, we draw \(m\) samples from \(\mu\) and \(n\) samples from \(\nu\), respectively, and compute the estimator \(b_{m,n}^*(\cdot,t)\) for an arbitrary time \(t\in[0,1)\).
Subsequently, we evaluate the mean squared error (MSE) against the true drift \(b_{\infty,\infty}^*(\cdot,t)\).
\[
\text{MSE}_{\text{sample}}(m,n, t) = \mathbb E_{\substack{X_1,\dots X_m\sim\mu \\ Y_1,\dots,Y_n\sim\nu}} \| b_{m,n}^*(z,t) - b_{\infty,\infty}^*(z,t) \|_{L^2(P_{\infty,\infty}^{*,t}(z))}^2.
\]
To compute this, the norm \(\|\cdot\|_{L^2(P_{\infty,\infty}^{*,t})}\) is approximated using Monte Carlo with 10,000 samples drawn from \(P_{\infty,\infty}^{*,t}\). The expectation over the samples is computed by averaging over 10 independent sampling trials.
With a fixed parameter \(\varepsilon=0.1\), we generated heatmaps for several \(t\in[0,1)\) while varying the sample sizes \(m, n\) used in the estimator definition (Figure \ref{fig:mse_of_(infty,infty)_and_(m,n)}).

\begin{figure}[thbp]
    \centering
    \includegraphics[width=1.0\textwidth]{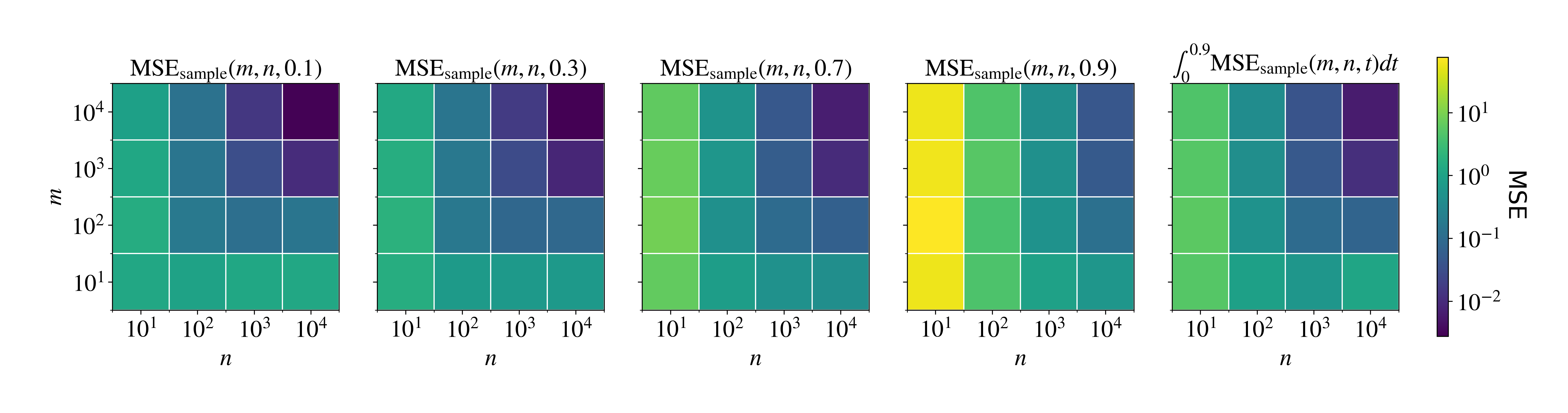}
    \caption{Heatmaps illustrating \( \text{MSE}_{\text{sample}}(m,n, t) \) as a function of sample sizes \(m\) and \(n\) for various time points \(t\). The error decreases roughly proportionally to \( (m^{-1} + n^{-1}) \).}
    \label{fig:mse_of_(infty,infty)_and_(m,n)}
\end{figure}

As evident from the figure, the mean squared error deteriorates as \(t\to 1\), but the overall convergence rate remains unchanged.
For all \(t\) shown in the figure, the convergence rate is observed to be approximately proportional to \((m^{-1}+n^{-1})\).
This aligns with the result predicted by Theorem \ref{theorem:entropy_of_(infty,infty)_and_(m,n)}.

\paragraph{Algorithmic convergence.}
To verify that the estimator \(P_{m,n}^{(k),[0,\tau]}\) exponentially approaches \(P_{m,n}^{*,[0,\tau]}\) as the iteration count \(k\) increases, we obtain \(m\) and \(n\) independent samples from distributions \(\mu\) and \(\nu\), respectively. We then consider the integral of the difference between estimators \(b_{m,n}^{(k)}(\cdot, t)\) and \(b_{m,n}^*(\cdot, t)\) over the interval \([0, \tau]\).
Specifically, we evaluate the mean squared error integrated over \(t \in [0, \tau]\):

\[
\int_0^\tau \text{MSE}_{\text{sinkhorn}}(k, t)\d t = \int_0^\tau \mathbb{E}_{\substack{X_1,\dots, X_m \sim \mu \\ Y_1,\dots,Y_n \sim \nu}} \left\| b_{m,n}^{(k)}(z, t) - b_{m,n}^*(z, t) \right\|_{L^2(P_{m,n}^{*, t})}^2 \d t.
\]

\begin{wrapfigure}{r}{0.4\textwidth}
    \vspace{-2mm}
    \centering
    \includegraphics[width=0.4\textwidth]{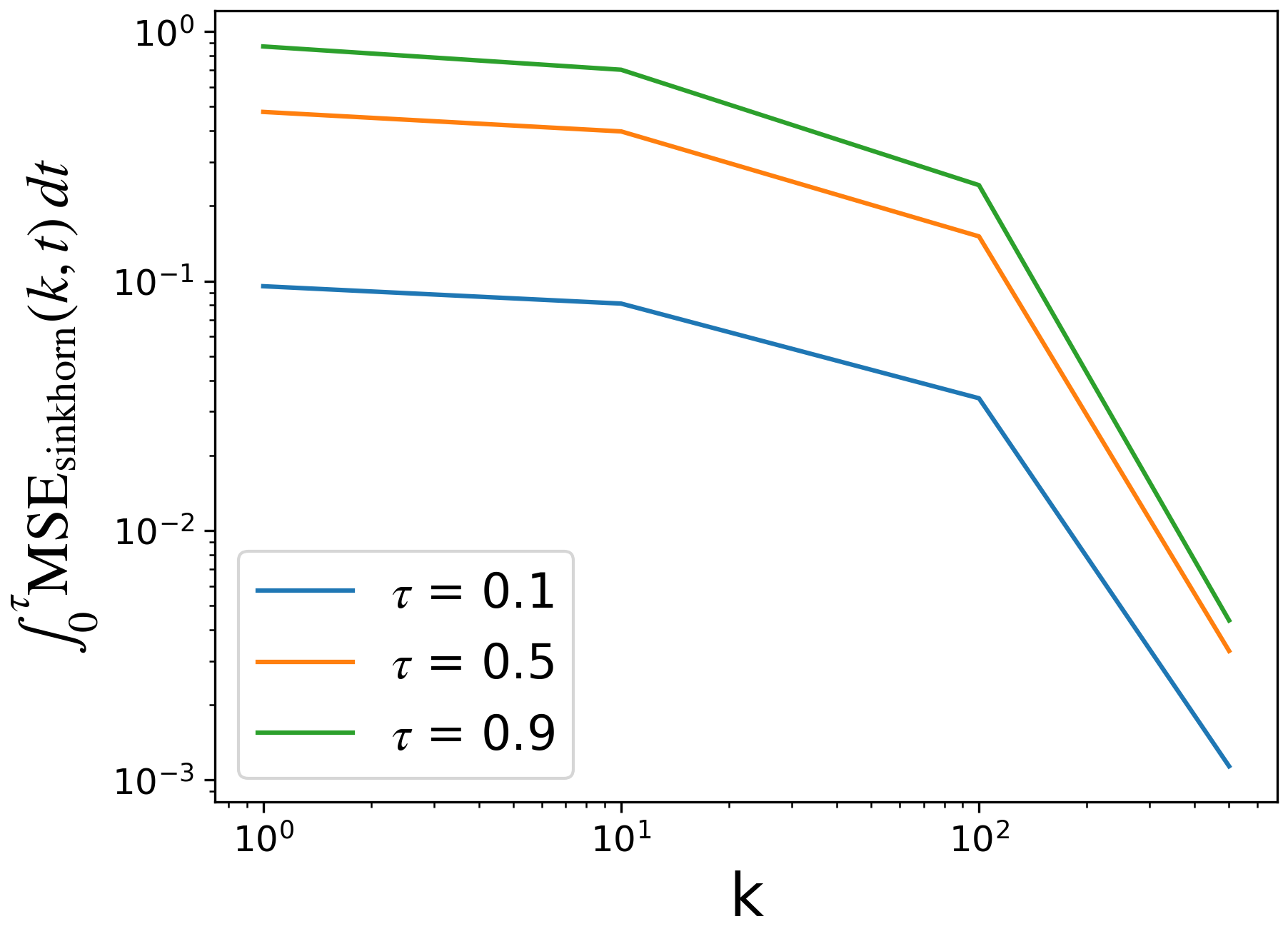}
    \caption{Integrated mean squared error as a function of Sinkhorn iterations \(k\) for multiple integration intervals \([0, \tau]\).}
    \label{fig:mse_of_(m,n,*)_and_(m,n,k)}
\end{wrapfigure}

Time integrals over the interval \([0,\tau]\) are approximated via Monte Carlo integration by uniformly sampling $1,000$ time‐points. The norm \(\|\cdot\|_{L^2(P_{m,n}^{*,t})}\) is also estimated via Monte Carlo integration, using $1,000$ samples drawn from \(P_{m,n}^{*,t}\). We set \(m = n = 1,000\) and compute the expectation by averaging over 10 independent samplings. With the regularization parameter fixed at \(\varepsilon = 0.005\), we generate graphs of the integral values over \([0, \tau]\) for varying \(k\) and multiple \(\tau\) values (Figure \ref{fig:mse_of_(m,n,*)_and_(m,n,k)}).
For all \(\tau\) values shown in the figure, the convergence rate exhibits exponential decay with respect to \(k\). This observation corroborates the theoretical prediction in Theorem \ref{theorem:entropy_of_(m,n,*)_and_(m,n,k)}. 

Moreover, under the experimental setup of Section~\ref{sec:nn_training}, we illustrate in Figure~\ref{fig:sinkhorn_nn} the evolution of the drift when the number of Sinkhorn iterations is set to 1, 5, and 10. In the figure, it is observed that as the number of iterations \(k\) increases, the drift \(b_{m,n}^{(k)}\) rapidly converges to the optimal drift \(b_{m,n}^*\).

\subsection{Experimental Verification of Neural Network-Based Drift Estimation}
\label{sec:nn_training}

Next, we evaluate the effectiveness of the drift approximation by a neural network using Algorithm~\ref{alg:schrödinger_bridge_drift_approx_nn}.
In this experiment, we set \(\varepsilon = 0.1\), defined \(\mu\) as the eight-Gaussians distribution and \(\nu\) as the moons distribution, and drew \(1,000\) independent samples from each.
Using these samples, the Sinkhorn algorithm is employed to approximate the optimal EOT coupling \(\pi_{m,n}^*\) between \(\mu_m\) and \(\nu_n\), and, via Algorithm~\ref{alg:schrödinger_bridge_drift_approx_nn}, a neural network approximation \(b_\theta\) of the drift \(b_{m,n}^*\) is obtained.
We employ an \(4\)-layer neural network with $512$-$512$-$512$ hidden neurons for \(b_\theta\), and train it using the AdamW optimizer with a learning rate of \(1\!\times\!10^{-3}\), weight decay of \(1\!\times\!10^{-5}\), and a mini-batch size of \(4,096\). Finally, starting from each sample of \(\mu_m\), we simulate trajectories using either \(b_{m,n}^{(k)}\) or the neural network drift \(b_{\theta}\), and present the results in Figure~\ref{fig:sinkhorn_nn}. Trajectory simulations are performed using the Euler–Maruyama approximation with $1,000$ discretization steps.

\begin{figure}[thbp]
    \centering
    \includegraphics[width=1\textwidth]{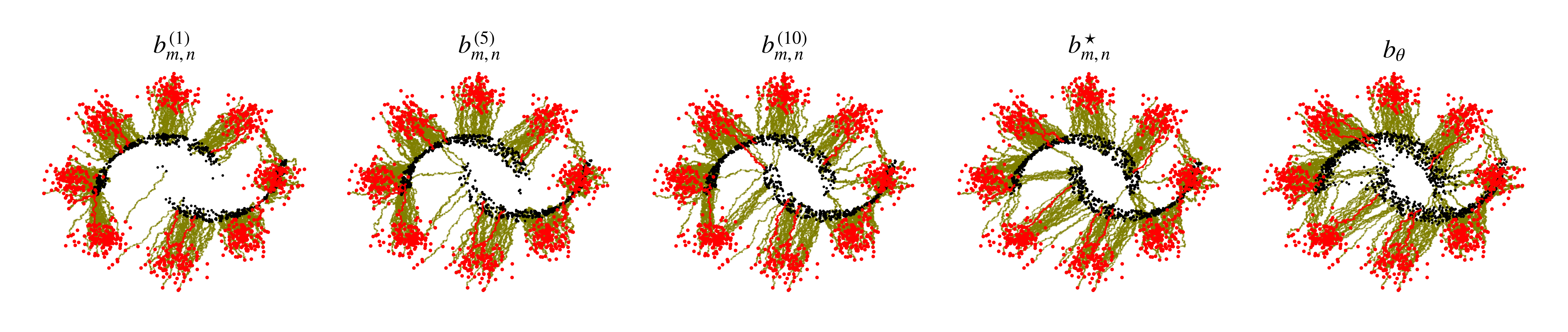}
    \caption{From left to right, the simulation results of the Schrödinger bridge using the estimated drifts \(b_{m,n}^{(1)}\), \(b_{m,n}^{(5)}\), and \(b_{m,n}^{(10)}\) obtained by terminating the Sinkhorn iteration after 1, 5, and 10 iterations, respectively, the optimal drift \(b_{m,n}^*\), and the neural network–approximated drift \(b_{\theta}\). }
    \label{fig:sinkhorn_nn}
\end{figure}

\section{Conclusion and Discussion}

In this study, we provide a comprehensive analysis of statistical guarantees for the Schrödinger Bridge problem. Specifically, we establish theoretical guarantees in two key settings: (i) the two-sample estimation task and (ii) intermediate estimators during the learning process.

Our main contributions are as follows. First, we establish a statistical convergence analysis in the two-sample estimation setting for the Schr\"{o}dinger bridge estimator with $k\to \infty$, which demonstrates a statistical convergence rate of \(O\left(\frac{1}{n}+\frac{1}{m}\right)\). Second, we establish new convergence guarantees for the dual potentials obtained during intermediate iterations of the Sinkhorn algorithm, proving exponential convergence in the finite-sample setting. These results allow for a clear estimation of the number of samples $m, n$ and Sinkhorn iterations $k$ required to achieve a desired precision.
Experimental results align with our theoretical analysis, confirming that the error decreases at a rate of approximately \((m^{-1}+n^{-1})\) with respect to the sample size, and that the error decreases exponentially with respect to the number of Sinkhorn iterations \(k\). These findings strengthen the statistical guarantees for existing methods proposed in works such as \cite{korotinLightSchrodingerBridge2023,peluchettiDiffusionBridgeMixture2023, gushchinLightOptimalSchrodinger2024,shiDiffusionSchrodingerBridge2023, peluchettiBM2CoupledSchrodinger2024} through the connections among these methods.
The theoretical framework of this study deepens the understanding of generalization error in the Schrödinger Bridge problem and offers new insights into the convergence properties and stability of the Sinkhorn algorithm during intermediate iterations.

Future research directions include extending the analysis to more complex distributional settings and more general reference processes, and removing linear dependence on the ambient dimension $d$.
Another interesting research direction is to explore alternative optimization methods for EOT problems and analyze their performance using our theoretical framework beyond the Sinkhorn algorithm. 

\section*{Acknowledgment}
This research is supported by the National Research Foundation, Singapore, Infocomm Media Development Authority under its Trust Tech Funding Initiative, and the Ministry of Digital Development and Information under the AI Visiting Professorship Programme (award number AIVP-2024-004). Any opinions, findings and conclusions or recommendations expressed in this material are those of the author(s) and do not reflect the views of National Research Foundation, Singapore, Infocomm Media Development Authority, and the Ministry of Digital Development and Information.

\bibliographystyle{apalike}
\bibliography{ref.bib}

\newpage
\part*{\Large{Appendix}}
\section{Omitted Proofs}\label{sec:omitted_proofs}

\subsection{Proof of Theorem \texorpdfstring{\ref{theorem:entropy_of_(infty,infty)_and_(m,n)}}{TEXT}}
\label{proof:prop_entropy_of_(infty,infty)_and_(m,n)}

We here provide the complete definitions of path measures $P_{m,n}^{k},~P_{m,n}^* \in \mathcal{P}(\Omega)$.
Using the Sinkhorn iterations \((f_{m,n}^{(k)}, g_{m,n}^{(k)})\) with respect to \(\mu_m\) and \(\nu_n\), we define the drift function $b_{m,n}^{(k)}$ as follows:
\begin{align*}
b_{m,n}^{(k)}(z, t)
&=
\frac{1}{1-t} \left(
    -z
    + \frac{
        \sum_{j=1}^n Y_j \exp_{ \varepsilon }\left((g_{m,n}^{(k)}(Y_j) - \frac{1}{2(1-t)} \|Y_j - z\|_2^2)\right)
    }{
        \sum_{j=1}^n \exp_{ \varepsilon }\left((g_{m,n}^{(k)}(Y_j) - \frac{1}{2(1-t)} \|Y_j - z\|_2^2)\right)
    }
\right) \\
&=
\frac{1}{1-t} \left(
    -z
    + \frac{
        \sum_{j=1}^n \gamma_{m,n}^{(k),t}(Y_j, z)\, Y_j
    }{
        \sum_{j=1}^n \gamma_{m,n}^{(k),t}(Y_j, z)
    }
\right),
\end{align*}
where
\begin{equation*}
    \gamma_{m,n}^{(k),t}(Y_j, z) = \exp_{ \varepsilon }\left(g_{m,n}^{(k)}(Y_j) - \frac{1}{2(1-t)} \|Y_j - z\|_2^2\right).
\end{equation*}
Then, \( P_{m,n}^{(k)} \) is defined as the path measure of the following SDE:
\begin{equation*}
    \d X_t = b_{m,n}^{(k)}(X_t) \d t + \sqrt{\varepsilon}\d B_t
    , \quad X_0 \sim \mu.
\end{equation*}

$P_{m,n}^* \in \mathcal{P}(\Omega)$ is also defined in the similar way by replacing the Sinkhorn iterations to the corresponding optimal Schr\"{o}dinger potentials.
That is, using the optimal Schr\"{o}dinger potentials \((f_{m,n}^*, g_{m,n}^*)\) with respect to \(\mu_m\) and \(\nu_n\), we define the drift function $b_{m,n}^*$ as follows:
\begin{align}
\label{eq:schrödinger_bridge_optimal_drift_mn}
b_{m,n}^*(z)
&=
\frac{1}{1-t} \left(
    -z
    + \frac{
        \sum_{j=1}^n Y_j \exp_{ \varepsilon }(g_{m,n}^*(Y_j) - \frac{1}{2(1-t)} \|Y_j - z\|_2^2)
    }{
        \sum_{j=1}^n \exp_{ \varepsilon }(g_{m,n}^*(Y_j) - \frac{1}{2(1-t)} \|Y_j - z\|_2^2)
    }
\right) \notag\\
&=
\frac{1}{1-t} \left(
    -z
    + \frac{
        \sum_{j=1}^n  \gamma_{m,n}^{*,t}(Y_j, z) Y_j
    }{
        \sum_{j=1}^n  \gamma_{m,n}^{*,t}(Y_j, z)
    }
\right),
\end{align}
where
\begin{equation*}
    \gamma_{m,n}^{*,t}(Y_j, z) = \exp_\varepsilon\left(g_{m,n}^*(Y_j) - \frac{1}{2(1-t)} \|Y_j - z\|_2^2\right).
\end{equation*}
Then, \( P_{m,n}^* \) is defined as the path measure of the following SDE:
\begin{equation}
    \label{eq:Schrödinger_bridge_SDE_m_n}
    \d X_t = b_{m,n}^*(X_t) \d t + \sqrt{\varepsilon} \d B_t
    , \quad X_0 \sim \mu.
\end{equation}

\PropTVOfInftyInftyAndMN*

\begin{proof}[Proof of Theorem \ref{theorem:entropy_of_(infty,infty)_and_(m,n)}]


The proof of Theorem \ref{theorem:entropy_of_(infty,infty)_and_(m,n)} is based on the main idea presented in \cite{pooladianPluginEstimationSchrodinger2024,strommeMinimumIntrinsicDimension2023}: we introduce the following entropic plan.
\begin{equation}
\label{eq:rounded_entropic_plan}
\d\bar\pi_{\infty,n}(x,y) := \exp_{ \varepsilon }(\bar f_{\infty,n}(x) + g_{\infty,\infty}^*(y) - \frac12\|x-y\|_2^2) \d\mu(x) \d\nu_n(y).
\end{equation}
Here, the function \( \bar f_{\infty,n}:\mathbb{R}^d \to \mathbb{R} \) is defined as follows:
\begin{equation*}
\bar f_{\infty,n}(x) 
= -\varepsilon \log \left(
    \frac1n \sum_{j=1}^n \exp_{ \varepsilon }\left(g_{\infty,\infty}^*(Y_j) - \frac12\|x-Y_j\|_2^2\right)
\right).
\end{equation*}
Denoting by \(\bar P_{\infty,n}\) the measure obtained by substituting the coupling~\eqref{eq:rounded_entropic_plan} into~\eqref{eq:schrödinger_bridge_optimal_plan}, Proposition~3.1 in \cite{pooladianPluginEstimationSchrodinger2024} implies that the drift \(\bar b_{\infty,n}\) with law \(\bar P_{\infty,n}\) is expressed as follows:
\begin{align*}
\bar b_{\infty,n}(z)
&=
\frac{1}{1-t} \left(
    -z
    + \frac{
        \sum_{j=1}^n  \gamma_{\infty,\infty}^{*,t}(Y_j, z) Y_j
    }{
        \sum_{j=1}^n  \gamma_{\infty,\infty}^{*,t}(Y_j, z)
    }
\right).
\end{align*}
By incorporating the path measure \(\bar P_{\infty,n}\) into the bound via the triangle inequality and subsequently applying Pinsker’s inequality, we obtain:
\begin{align*}
\mathbb E[\text{TV}^2(P_{\infty,\infty}^{*,[0,\tau]} , P_{m,n}^{*,[0,\tau]})]
&\lesssim
\mathbb E[\text{TV}^2(P_{\infty,\infty}^{*,[0,\tau]} , \bar P_{\infty,n}^{[0,\tau]})]
+ \mathbb E[\text{TV}^2(\bar P_{\infty,n}^{[0,\tau]} , P_{m,n}^{*,[0,\tau]})] \\
&\lesssim
\mathbb E[H(P_{\infty,\infty}^{*,[0,\tau]} | \bar P_{\infty,n}^{[0,\tau]})]
+ \mathbb E[H(\bar P_{\infty,n}^{[0,\tau]} | P_{m,n}^{*,[0,\tau]})].
\end{align*}
We analyze these two terms separately. For the first term, we apply Proposition~4.3 of \cite{pooladianPluginEstimationSchrodinger2024}


\begin{equation*}
\mathbb E[H(P_{\infty,\infty}^{*,[0,\tau]} | \bar P_{\infty,n}^{[0,\tau]})]
\lesssim
\frac{R^2}{n (1-\tau)^{d_{\nu}+2} \varepsilon^{d_{\nu}}}.
\end{equation*}

For the second term, we apply Girsanov’s theorem to derive the difference between the drifts.
\begin{align*}
&\mathbb E[H(\bar P_{\infty,n}^{[0,\tau]} | P_{m,n}^{*,[0,\tau]})] \\
&\le
\int_0^\tau \mathbb E\left\|\bar b_{\infty,n} - b_{m,n}^* \right\|^2_{L^2(\bar P_{\infty,n}^t)} \d t \\
&=
\int_0^\tau \frac{1}{(1-t)^2} \mathbb E\left\|
    \frac{ \sum_{j=1}^n  \gamma_{\infty,\infty}^{*,t}(Y_j, z) Y_j }
        { \sum_{j=1}^n  \gamma_{\infty,\infty}^{*,t}(Y_j, z)}
    - \frac{ \sum_{j=1}^n  \gamma_{m,n}^{*,t}(Y_j, z) Y_j }
        { \sum_{j=1}^n  \gamma_{m,n}^{*,t}(Y_j, z)}
\right\|^2_{L^2(\bar P_{\infty,n}^t)} \d t \\
&\lesssim
\int_{0}^\tau \frac{1}{(1-t)^2}
\left( \varepsilon^2 + \frac{dR^8}{ \varepsilon^2 } \right) \cdot \left( \frac{R}{ \varepsilon } \right)^{9d_{\nu}+4} \cdot \left(\frac{1}{m} + \frac{1}{n}\right)
\d t \\
&=
\frac{\tau}{1-\tau}
\left( \varepsilon^2 + \frac{dR^8}{ \varepsilon^2 } \right) \cdot \left( \frac{R}{ \varepsilon } \right)^{9d_{\nu}+4} \cdot \left(\frac{1}{m} + \frac{1}{n}\right),
\end{align*}
where for the second inequality, we used Lemma~\ref{lem:drift_of_bar(infty,n)_and_(m,n)}.
\end{proof}

\begin{lemma}
Under assumptions of Theorem \ref{theorem:entropy_of_(infty,infty)_and_(m,n)}, we have
\label{lem:drift_of_bar(infty,n)_and_(m,n)}
\begin{align*}
\mathbb E
&\left\|
    \frac{\sum_{j=1}^n \gamma_{\infty,\infty}^{*,t}(Y_j,\cdot)Y_j}{\sum_{j=1}^n \gamma_{\infty,\infty}^{*,t}(Y_j,\cdot)}
    - \frac{\sum_{j=1}^n \gamma_{m,n}^{*,t}(Y_j,\cdot)Y_j}{\sum_{j=1}^n \gamma_{m,n}^{*,t}(Y_j,\cdot)}
\right\|^2_{L^2(p_{\infty,\infty}^t)}
\\
&\lesssim  \left( \varepsilon^2 + \frac{dR^8}{ \varepsilon^2 } \right) \cdot \left( \frac{R}{ \varepsilon } \right)^{9d_{\nu}+4} \cdot \left(\frac{1}{m} + \frac{1}{n}\right).
\end{align*}
\end{lemma}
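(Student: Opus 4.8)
The plan is to bound the $L^2$-difference between the two conditional-expectation-type quantities by controlling the discrepancy between the weight functions $\gamma_{\infty,\infty}^{*,t}$ and $\gamma_{m,n}^{*,t}$, which in turn is governed by the discrepancy between the Schr\"odinger potentials $g_{\infty,\infty}^*$ and $g_{m,n}^*$. First, I would write each quantity as a barycenter $\sum_j w_j Y_j$ with weights $w_j \propto \gamma^{\ast,t}(Y_j,z)$ summing to one, and use the elementary fact that for two probability weight vectors $w,\tilde w$ on points $Y_1,\dots,Y_n$ inside $B(0,R)$,
\[
\left\| \sum_j w_j Y_j - \sum_j \tilde w_j Y_j \right\|_2 \le R \sum_j |w_j - \tilde w_j| \lesssim R \max_j \left| \log \frac{\gamma_{\infty,\infty}^{*,t}(Y_j,z)}{\gamma_{m,n}^{*,t}(Y_j,z)} \right|,
\]
where the last step uses that a ratio of normalized exponential weights is Lipschitz-stable in the sup-norm of the log-weights (a softmax-stability estimate). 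Since $\log \gamma_{\infty,\infty}^{*,t}(Y_j,z) - \log \gamma_{m,n}^{*,t}(Y_j,z) = \varepsilon^{-1}(g_{\infty,\infty}^*(Y_j) - g_{m,n}^*(Y_j))$, the quadratic distance between the drifts is bounded, pointwise in $z$ and $t$, by $\varepsilon^{-2} R^2 \|g_{\infty,\infty}^* - g_{m,n}^*\|_{\infty,\mathrm{supp}(\nu)}^2$, up to a constant. The subtlety here is that the normalization denominators $\sum_j \gamma^{*,t}(Y_j,z)$ could be small when $z$ is far from all $Y_j$; but integration is against $p_{\infty,\infty}^t$, whose mass concentrates where the true bridge lives, and the factor $(1-t)^{-2}$ has already been extracted, so one expects the $L^2(p_{\infty,\infty}^t)$ norm to be controlled by the same bound possibly at the cost of extra powers of $R/\varepsilon$.

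Next I would invoke the statistical stability of entropic Schr\"odinger potentials from \cite{strommeMinimumIntrinsicDimension2023}: under Assumptions~\ref{ass:distribution_support_compact} and~\ref{ass:supp_nu_manifold}, the optimal potential $g_{m,n}^*$ for $\mathrm{OT}_\varepsilon(\mu_m,\nu_n)$ converges to $g_{\infty,\infty}^*$ with the MID-type rate
\[
\mathbb E\!\left[\|g_{\infty,\infty}^* - g_{m,n}^*\|_{\infty}^2\right] \lesssim \left(\frac{R}{\varepsilon}\right)^{\Theta(d_\nu)}\left(\frac1m + \frac1n\right),
\]
where the two-sample error decomposes as a source-sampling term $1/m$ and a target-sampling term $1/n$; the source term contributes the ambient-dimension factor $d$ through the covering-number estimate for $\mathrm{supp}(\mu)\subset B(0,R)$ (this is the only place $d$ enters). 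Here the precise bookkeeping of the polynomial-in-$R/\varepsilon$ prefactor — tracking it through the potential-stability argument to land on the exponent $9d_\nu+4$ and the prefactor $\varepsilon^2 + dR^8/\varepsilon^2$ — is the main bottleneck, and I would follow the chain of estimates in \cite{strommeMinimumIntrinsicDimension2023} (covering numbers of the manifold, Hilbert-metric contraction, and the $L^\infty$-to-$L^2$ transfer) rather than re-deriving it.

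Combining the two displays, $\mathbb E[\text{(drift difference)}^2_{L^2(p_{\infty,\infty}^t)}] \lesssim \varepsilon^{-2}R^2 \cdot (R/\varepsilon)^{\Theta(d_\nu)}(1/m+1/n)$, and after absorbing the $\varepsilon^2$ and $dR^8/\varepsilon^2$ factors coming from the two parts of the potential-stability bound into the stated form, one arrives at the claimed estimate. \textbf{The hard part} is not the softmax/barycenter manipulation, which is routine, but rather (i)~establishing that the $L^2(p_{\infty,\infty}^t)$ norm of the barycenter difference is genuinely controlled by the $L^\infty$ potential gap without incurring a blow-up from the normalization (requiring upper and lower bounds on $\gamma_{\infty,\infty}^{*,t}$ on $\mathrm{supp}(\nu)\times\mathrm{supp}(p_{\infty,\infty}^t)$, available from Assumption~\ref{ass:distribution_support_compact} since all arguments lie in $B(0,R)$), and (ii)~carefully importing and combining the one-sample ($1/n$) and the added source-sample ($1/m$) pieces of the potential-stability estimate with their respective $R/\varepsilon$ and $d$ dependences so that the final exponents match.
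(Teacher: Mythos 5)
Your proposal matches the paper's proof: bound the barycenter difference by $R$ times the $\ell_1$-distance of the softmax weight vectors, apply the softmax Lipschitz estimate (Lemma~\ref{lem:softmax_lip} with $(p,q)=(1,\infty)$, giving a constant $1/\varepsilon$) to reduce to $\frac{R}{\varepsilon}\|g^*_{\infty,\infty}-g^*_{m,n}\|_{L^\infty(\nu)}$, and then invoke the $L^\infty$-stability of the potentials (Lemma~\ref{lem:diff_infg_mng_sup}, built on Stromme's MID estimates). Your worry about small normalization denominators costing extra powers of $R/\varepsilon$ is unnecessary: the softmax Lipschitz bound is pointwise and uniform in $z$ and $t$ (the quadratic term in the log-weights cancels in the difference), so the $L^2(p^t_{\infty,\infty})$ integration is trivial and no lower bounds on $\sum_j\gamma^{*,t}(Y_j,z)$ are needed.
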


\begin{proof}[Proof of Lemma~\ref{lem:drift_of_bar(infty,n)_and_(m,n)}]
We perform the expansion while keeping \(Y_j\) and \(z\) fixed
\begin{align*}
&\left\|
\frac{\sum_{j=1}^n \gamma_{\infty,\infty}^{*,t}(Y_j,z) Y_j }{\sum_{j=1}^n \gamma_{\infty,\infty}^{*,t}(Y_j,z)}
-
\frac{\sum_{j=1}^n \gamma_{m,n}^{*,t}(Y_j,z) Y_j}{\sum_{j=1}^n \gamma_{m,n}^{*,t}(Y_j,z)}
\right\|_{2} \\
&=
\left\|
\sum_{i=1}^n \left(
\frac{\gamma_{\infty,\infty}^{*,t}(Y_j,z)}{\sum_{j=1}^n \gamma_{\infty,\infty}^{*,t}(Y_j,z)}
-
\frac{\gamma_{m,n}^{*,t}(Y_j,z)}{\sum_{j=1}^n \gamma_{m,n}^{*,t}(Y_j,z)}
\right) Y_{j}
\right\|_{2} \\
&\le
R
\sum_{i=1}^n \left|
\frac{\gamma_{\infty,\infty}^{*,t}(Y_j,z)}{\sum_{j=1}^n \gamma_{\infty,\infty}^{*,t}(Y_j,z)}
-
\frac{\gamma_{m,n}^{*,t}(Y_j,z)}{\sum_{j=1}^n \gamma_{m,n}^{*,t}(Y_j,z)}
\right|.
\end{align*}
From the case \((p,q)=(1,\infty)\) of Lemma~\ref{lem:softmax_lip}, we have
\begin{align*}
&\left\|
\frac{\sum_{j=1}^n \gamma_{\infty,\infty}^{*,t}(Y_j,z) Y_j }{\sum_{j=1}^n \gamma_{\infty,\infty}^{*,t}(Y_j,z)}
-
\frac{\sum_{j=1}^n \gamma_{m,n}^{*,t}(Y_j,z) Y_j}{\sum_{j=1}^n \gamma_{m,n}^{*,t}(Y_j,z)}
\right\|_{2} \\
&\le
\frac{R}{\varepsilon} \max_{j \in [n]} \left|
    \left( \infg(Y_{j}) - \frac{1}{2(1-t)} \|Y_{j}-z\|_{2}^2 \right)
    - \left( g_{m,n}^*(Y_{j}) - \frac{1}{2(1-t)} \|Y_{j}-z\|_{2}^2 \right)
\right| \\
&\le
\frac{R}{\varepsilon} \left\| \infg - \mng \right\|_{L^\infty(\nu)}.
\end{align*}

By applying Lemma \ref{lem:diff_infg_mng_sup}, we obtain
\begin{align}
\mathbb E
&\left\|
\frac{\sum_{j=1}^n \gamma_{\infty,\infty}^{*,t}(Y_j,z)Y_j}{\sum_{j=1}^n \gamma_{\infty,\infty}^{*,t}(Y_j,z)}
-
\frac{\sum_{j=1}^n \gamma_{m,n}^{*,t}(Y_j,z)Y_j}{\sum_{j=1}^n \gamma_{m,n}^{*,t}(Y_j,z)}
\right\|_{L^2(\bar P_{\infty,n}^t)}^2 \notag \\
&\lesssim
\label{eq:expectation_of_diff_gamma_infty_infty_and_m_n_upper_bound}
\frac{R^2}{\varepsilon^2}
\mathbb E\left\| g_{m,n}^* - g_{\infty,\infty}^* \right\|_{L^\infty(\nu)}^2 \notag \\
&\lesssim  \left( \varepsilon^2 + \frac{dR^8}{ \varepsilon^2 } \right) \cdot \left( \frac{R}{ \varepsilon } \right)^{9d_{\nu}+4} \cdot \left(\frac{1}{m} + \frac{1}{n}\right). \notag
\end{align}
\end{proof}

\subsection{Proof of Theorem \texorpdfstring{\ref{theorem:entropy_of_(m,n,*)_and_(m,n,k)}}{TEXT}}

\PropTVOfMNAndMNK*

\begin{proof}[Proof of Theorem \ref{theorem:entropy_of_(m,n,*)_and_(m,n,k)}]
We start by applying Girsanov's theorem to obtain a difference in the drifts
\begin{align*}
&\mathbb E[H(P_{m,n}^{*,[0,\tau]} | P_{m,n}^{k,[0,\tau]})] \\
&\le
\int_0^\tau \mathbb E\left\| b_{m,n}^* - b_{m,n}^k \right\|^2_{L^2(P_{m,n}^{*,t})} \d t \\
&=
\int_0^\tau \mathbb E\left\|
    \frac{1}{1-t} \left(
        \frac{\sum_{j=1}^n \gamma_{m,n}^{*,t}(Y_j,\cdot)Y_j}{\sum_{j=1}^n \gamma_{m,n}^{*,t}(Y_j,\cdot)}
        - \frac{\sum_{j=1}^n \gamma_{m,n}^{k,t}(Y_j,\cdot)Y_j}{\sum_{j=1}^n \gamma_{m,n}^{k,t}(Y_j,\cdot)}
    \right)
\right\|^2_{L^2(P_{m,n}^{*,t})} \d t \\
&\lesssim
\int_0^\tau \frac{1}{(1-t)^2}
\frac{ R^6 }{\varepsilon^2} \left( \tanh\left( \frac{R^2}{\varepsilon} \right) \right)^{4k} \d t \\
&=
\frac{\tau}{1-\tau}
\frac{R^6}{\varepsilon^2} \left( \tanh\left( \frac{R^2}{\varepsilon} \right) \right)^{4k},
\end{align*}
where we applied Lemma \ref{lem:drift_m_n_star_and_m_n_k}.
\end{proof}

\begin{lemma}
Under assumptions of Theorem \ref{theorem:entropy_of_(infty,infty)_and_(m,n)}, we have
\label{lem:drift_m_n_star_and_m_n_k}
\begin{equation*}
\mathbb E
\left\|
\frac{\sum_{j=1}^n \gamma_{m,n}^{*,t}(Y_j,z) Y_j }{\sum_{j=1}^n \gamma_{m,n}^{*,t}(Y_j,z)}
-
\frac{\sum_{j=1}^n \gamma_{m,n}^{k,t}(Y_j,z) Y_j}{\sum_{j=1}^n \gamma_{m,n}^{k,t}(Y_j,z)}
\right\|_{L^2(P_{m,n}^{*,t})}^2
\lesssim
\frac{ R^6 }{\varepsilon^2} \left( \tanh\left( \frac{R^2}{\varepsilon} \right) \right)^{4k}.
\end{equation*}
\end{lemma}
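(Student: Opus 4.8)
\textbf{Proof proposal for Lemma~\ref{lem:drift_m_n_star_and_m_n_k}.}
The plan is to bound the difference of the two barycentric maps by the sup-norm distance between the Sinkhorn potentials $g_{m,n}^{(k)}$ and the optimal potential $g_{m,n}^*$, and then invoke the exponential contraction of the Sinkhorn iteration in the Hilbert metric. First I would follow the same argument as in the proof of Lemma~\ref{lem:drift_of_bar(infty,n)_and_(m,n)}: since $\|Y_j\|_2\le R$ under Assumption~\ref{ass:distribution_support_compact}, the difference of the two weighted averages is at most $R$ times the $\ell^1$-distance between the two weight vectors, which are exactly the softmax of $\big(g_{m,n}^{(k)}(Y_j) - \tfrac1{2(1-t)}\|Y_j-z\|_2^2\big)_j$ and of $\big(g_{m,n}^{*}(Y_j) - \tfrac1{2(1-t)}\|Y_j-z\|_2^2\big)_j$. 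Applying the $(p,q)=(1,\infty)$ case of the softmax-Lipschitz estimate (Lemma~\ref{lem:softmax_lip}), the shared quadratic term $-\tfrac1{2(1-t)}\|Y_j-z\|_2^2$ cancels inside the max, leaving the pointwise bound
\[
\left\|
\frac{\sum_j \gamma_{m,n}^{*,t}(Y_j,z)Y_j}{\sum_j \gamma_{m,n}^{*,t}(Y_j,z)}
-
\frac{\sum_j \gamma_{m,n}^{k,t}(Y_j,z)Y_j}{\sum_j \gamma_{m,n}^{k,t}(Y_j,z)}
\right\|_2
\le \frac{R}{\varepsilon}\,\big\|g_{m,n}^{(k)} - g_{m,n}^*\big\|_{L^\infty(\nu_n)},
\]
which holds uniformly in $z$, so the $L^2(P_{m,n}^{*,t})$-norm is controlled by the same quantity and the time index $t$ drops out entirely.

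Next I would translate the potential error into the Hilbert metric. Writing $v^{(k)}_j = \exp_\varepsilon(g_{m,n}^{(k)}(Y_j))$ and $v^*_j = \exp_\varepsilon(g_{m,n}^*(Y_j))$, one has $\big|g_{m,n}^{(k)}(Y_j) - g_{m,n}^*(Y_j) - c\big| = \varepsilon\,|\log(v^{(k)}_j/v^*_j) - \log\lambda|$ for the best constant shift, and taking the max over $j$ together with the minimum this is bounded by $\varepsilon\, d_{\mathrm{Hilb}}(v^{(k)},v^*)$. Under Assumption~\ref{ass:potential_g_sum_zero} both potentials are normalized to $\nu_n$-mean zero, so the constant shift is pinned down and $\|g_{m,n}^{(k)} - g_{m,n}^*\|_{L^\infty(\nu_n)} \le \varepsilon\, d_{\mathrm{Hilb}}(v^{(k)},v^*)$ up to the mean-zero renormalization (which only improves the bound by at most a factor of $2$). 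Then the Franklin--Lorenz / Birkhoff contraction bound quoted in the excerpt gives $d_{\mathrm{Hilb}}(v^{(k)},v^*) \le \lambda(K)^{2k}\, d_{\mathrm{Hilb}}(v^{(0)},v^*)$, and since $v^{(0)} = \mathbbm{1}_n$ the initial Hilbert distance is $d_{\mathrm{Hilb}}(\mathbbm{1}_n, v^*) = \log\big(\max_j v^*_j \big/ \min_j v^*_j\big)$, which is bounded in terms of $R^2/\varepsilon$ using Assumption~\ref{ass:distribution_support_compact} and the Schrödinger system (the optimal $g_{m,n}^*$ has oscillation at most $\sup_{x,x'}\big|\tfrac12\|x-y\|_2^2 - \tfrac12\|x'-y\|_2^2\big| = O(R^2)$). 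Combining, $\|g_{m,n}^{(k)} - g_{m,n}^*\|_{L^\infty(\nu_n)} \lesssim R^2\,\lambda(K)^{2k}$, and squaring with the $R/\varepsilon$ prefactor yields $\tfrac{R^2}{\varepsilon^2}\cdot R^4\,\lambda(K)^{4k} = \tfrac{R^6}{\varepsilon^2}\,\lambda(K)^{4k}$.

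Finally I would relate $\lambda(K)$ to $\tanh(R^2/\varepsilon)$. By definition $\gamma(K) = \max_{i,j,k,l} K_{ik}K_{jl}/(K_{jk}K_{il})$, and with $K_{ij} = \exp_\varepsilon(-\tfrac12\|X_i-Y_j\|_2^2)$ the exponent in $\log\gamma(K)$ is $\tfrac1{2\varepsilon}$ times a difference of four squared distances; since all $X_i,Y_j$ lie in $B(0,R)$ each squared distance is at most $(2R)^2$ and at least $0$, so $\log\gamma(K) \le 4R^2/\varepsilon$ (a cruder but simpler bound suffices — any $O(R^2/\varepsilon)$ bound works), whence $\sqrt{\gamma(K)} \le e^{2R^2/\varepsilon}$ and
\[
\lambda(K) = \frac{\sqrt{\gamma(K)}-1}{\sqrt{\gamma(K)}+1} \le \frac{e^{2R^2/\varepsilon}-1}{e^{2R^2/\varepsilon}+1} = \tanh\!\left(\frac{R^2}{\varepsilon}\right).
\]
Since $\lambda(K)<1$ and the map $s\mapsto s^{4k}$ is monotone, this gives $\lambda(K)^{4k} \le \tanh(R^2/\varepsilon)^{4k}$, completing the bound. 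The main obstacle I anticipate is pinning down the constant-shift issue cleanly — i.e.\ checking that the mean-zero normalization in Assumption~\ref{ass:potential_g_sum_zero} is compatible with the Hilbert-metric contraction bound, which is a projective (scale-invariant) statement — and making sure the conversion between $\|\cdot\|_{L^\infty}$ of potentials and $d_{\mathrm{Hilb}}$ of the exponentiated vectors is carried out with the correct direction of inequality; the distance/$\tanh$ estimates are routine once the constant factors $R^2/\varepsilon$ are tracked and absorbed into $\lesssim$.
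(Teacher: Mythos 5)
Your proposal is correct and lands on the paper's bound, but it takes a genuinely (if mildly) different intermediate route through the softmax-Lipschitz lemma and the $L^\infty$ potential distance. The paper's proof agrees with yours on the first step (the barycentric difference is at most $R\,\|\mathbf{w}^*-\mathbf{w}^k\|_1$ with $\mathbf{w}^*,\mathbf{w}^k$ the two softmax weight vectors), but then applies Lemma~\ref{lem:l1_hilbert_metric} (which gives $\|\mathbf{w}^*-\mathbf{w}^k\|_1 \lesssim d_{\mathrm{Hilb}}(\mathbf{w}^*,\mathbf{w}^k)$ for probability vectors) together with the scale- and elementwise-multiplicative invariances of the Hilbert metric (Lemma~\ref{lem:properties_of_hilbert_metric}): the shared Gaussian factors $\exp_\varepsilon\bigl(-\tfrac{1}{2(1-t)}\|Y_j-z\|_2^2\bigr)$ and the softmax normalizers both cancel inside $d_{\mathrm{Hilb}}$, giving $d_{\mathrm{Hilb}}(\mathbf{w}^*,\mathbf{w}^k)=d_{\mathrm{Hilb}}(\mathbf{v}^*,\mathbf{v}^k)$ with no gauge ambiguity at all. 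You instead pass from $\|\mathbf{w}^*-\mathbf{w}^k\|_1$ to $\varepsilon^{-1}\|g_{m,n}^*-g_{m,n}^{(k)}\|_{L^\infty(\nu_n)}$ via Lemma~\ref{lem:softmax_lip} and then back to $d_{\mathrm{Hilb}}(\mathbf{v}^{(k)},\mathbf{v}^*)$, which works but triggers the very constant-shift worry you flag: Assumption~\ref{ass:potential_g_sum_zero} pins down only the gauge of $g_{m,n}^*$, not of the Sinkhorn iterate $g_{m,n}^{(k)}$, so the two potentials are not a priori co-normalized. This is harmless because softmax is itself shift-invariant, so your Lipschitz bound upgrades for free to $\inf_c\|g_{m,n}^*-g_{m,n}^{(k)}-c\|_{L^\infty(\nu_n)}$, which is $\le \varepsilon\,d_{\mathrm{Hilb}}(\mathbf{v}^{(k)},\mathbf{v}^*)$ unconditionally; but the paper's use of the Hilbert-metric invariances dispenses with this bookkeeping at the outset, which is the cleaner way to kill the issue. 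From there on the two arguments coincide: apply the Franklin--Lorenz contraction $d_{\mathrm{Hilb}}(\mathbf{v}^{(k)},\mathbf{v}^*)\le\lambda(K)^{2k}d_{\mathrm{Hilb}}(\mathbf{v}^{(0)},\mathbf{v}^*)$, bound $d_{\mathrm{Hilb}}(\mathbbm{1}_n,\mathbf{v}^*)\lesssim R^2/\varepsilon$ using $\|g^*_{m,n}\|_\infty\le 4R^2$ (Lemma~\ref{lem:upper_bound_of_potential}), and bound $\lambda(K)\le\tanh(R^2/\varepsilon)$ via the same $\gamma(K)\le\exp_\varepsilon(4R^2)$ computation. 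One small note on your closing hedge: it is not true that ``any $O(R^2/\varepsilon)$ bound'' on $\log\gamma(K)$ works --- the constant inside $\tanh$ is dictated by the specific constant $4R^2$ --- but your actual derivation tracks the correct constant, so this is only a matter of phrasing.
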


\begin{proof}[Proof of Lemma \ref{lem:drift_m_n_star_and_m_n_k}]
We perform the expansion while keeping \(Y_j\) and \(z\) fixed
\begin{align*}
&\left\|
\frac{\sum_{j=1}^n \gamma_{m,n}^{*,t}(Y_j,z) Y_j }{\sum_{j=1}^n \gamma_{m,n}^{*,t}(Y_j,z)}
-
\frac{\sum_{j=1}^n \gamma_{m,n}^{k,t}(Y_j,z) Y_j}{\sum_{j=1}^n \gamma_{m,n}^{k,t}(Y_j,z)}
\right\|_{2} \\
&=
\left\|
\sum_{j=1}^n
\left(
    \frac{\gamma_{m,n}^{*,t}(Y_j,z)}{\sum_{j=1}^n \gamma_{m,n}^{*,t}(Y_j,z)}
    -
    \frac{\gamma_{m,n}^{k,t}(Y_j,z)}{\sum_{j=1}^n \gamma_{m,n}^{k,t}(Y_j,z)}
\right) Y_j
\right\|_{2} \\
&\le
\sum_{j=1}^n
\left|
    \frac{\gamma_{m,n}^{*,t}(Y_j,z)}{\sum_{j=1}^n \gamma_{m,n}^{*,t}(Y_j,z)}
    -
    \frac{\gamma_{m,n}^{k,t}(Y_j,z)}{\sum_{j=1}^n \gamma_{m,n}^{k,t}(Y_j,z)}
\right| \|Y_j\|_{2} \\
&\le
R \sum_{j=1}^n
\left|
    \frac{\gamma_{m,n}^{*,t}(Y_j,z)}{\sum_{j=1}^n \gamma_{m,n}^{*,t}(Y_j,z)}
    -
    \frac{\gamma_{m,n}^{k,t}(Y_j,z)}{\sum_{j=1}^n \gamma_{m,n}^{k,t}(Y_j,z)}
\right|.
\end{align*}
We define \(\mathbf w^*, \mathbf w^k, \mathbf v^*, \mathbf v^k \in \mathbb{R}^n\) as follows:
\begin{align*}
\mathbf{w}^*_j := \frac{\gamma_{m,n}^{*,t}(Y_j,z)}{\sum_{j=1}^n \gamma_{m,n}^{*,t}(Y_j,z)}
&,\quad
\mathbf{w}^k_j := \frac{\gamma_{m,n}^{k,t}(Y_j,z)}{\sum_{j=1}^n \gamma_{m,n}^{k,t}(Y_j,z)} ,\\
\mathbf{v}^*_j := \exp\left( \frac{g_{m,n}^*(Y_j)}{\varepsilon} \right)
&,\quad
\mathbf{v}^k_j := \exp\left( \frac{g_{m,n}^{(k)}(Y_j)}{\varepsilon} \right).
\end{align*}
where both \( \mathbf{w}^* \) and \( \mathbf{w}^k \) satisfy the conditions of a probability vector, i.e., \(\mathbf w^*_j, \mathbf w^k_j > 0, \sum_{j=1}^n \mathbf w^*_j = \sum_{j=1}^n \mathbf w^k_j=1\) .
\begin{align*}
&\left\|
\frac{\sum_{j=1}^n \gamma_{m,n}^{*,t}(Y_j,z) Y_j }{\sum_{j=1}^n \gamma_{m,n}^{*,t}(Y_j,z)}
-
\frac{\sum_{j=1}^n \gamma_{m,n}^{k,t}(Y_j,z) Y_j}{\sum_{j=1}^n \gamma_{m,n}^{k,t}(Y_j,z)}
\right\|_{2}^2 \\
&\le
\left(
R \sum_{j=1}^n
\left|
    \frac{\gamma_{m,n}^{*,t}(Y_j,z)}{\sum_{j=1}^n \gamma_{m,n}^{*,t}(Y_j,z)}
    -
    \frac{\gamma_{m,n}^{k,t}(Y_j,z)}{\sum_{j=1}^n \gamma_{m,n}^{k,t}(Y_j,z)}
\right|
\right)^2 \\
&=
R^2 \| \mathbf{w}^* - \mathbf{w}^k \|_1^2 \\
&\lesssim R^2 d_\text{Hilb}(\mathbf{w}^*, \mathbf{w}^k)^2
\qquad (\text{using Lemma \ref{lem:l1_hilbert_metric}}) \\
&= R^2 d_\text{Hilb}(\mathbf{v}^*, \mathbf{v}^k)^2
\qquad (\text{using Lemma \ref{lem:properties_of_hilbert_metric}}) \\
&\le R^2 \lambda(K)^{4k} d_\text{Hilb}(\mathbf{v}^*, \mathbf{v}^0)^2 \\
&= R^2 \lambda(K)^{4k} d_\text{Hilb}(\mathbf{v}^*, \mathbbm{1}_{m})^2 \\
&= R^2 \lambda(K)^{4k}
\left(
    \log \left(
        \left( \max_j \exp_{ \varepsilon }\left( \mng(Y_j) \right) \right)
        \left( \max_j \exp_{ \varepsilon }\left( - \mng(Y_j) \right) \right)
    \right)
\right)^2 \\
&\le R^2 \lambda(K)^{4k}
\left(
    \log \left(
        \exp_{ \varepsilon }\left( 4R^2 \right)
        \exp_{ \varepsilon }\left( 4R^2 \right)
    \right)
\right)^2
\qquad (\text{using Lemma \ref{lem:upper_bound_of_potential}}) \\
&\lesssim \frac{ R^6 }{\varepsilon^2} \lambda(K)^{4k}.
\end{align*}
Since
\begin{align*}
\gamma(K)
= \max_{ijkl} \frac{K_{ik}K_{jl}}{K_{jk}K_{il}}
= \max_{ijkl} \exp\left( \frac1\varepsilon (X_i-X_j)^T(Y_k-Y_l) \right)
\le \exp_{ \varepsilon }\left( 4R^2 \right),
\end{align*}
the quantity \(\lambda(K)\) can be upper bounded as follows:
\begin{align*}
\lambda(K)
&= \frac{\sqrt{\gamma(K)}-1}{\sqrt{\gamma(K)}+1} \\
&\le \frac{\exp_{ \varepsilon }\left( 2R^2 \right)-1}{\exp_{ \varepsilon }\left( 2R^2 \right)+1}
\quad \left(\because \frac{x-1}{x+1} \: (x>0) \: \text{is increasing function} \right) \\
&= \tanh\left( \frac{R^2}{\varepsilon} \right).
\end{align*}
Since the upper bound of \(\lambda(K)\) is independent of \(Y_j\) and \(z\), the proof is complete.
\end{proof}

\subsection{Proof of Proposition \ref{prop:optimal_estimator_same}}

\PropOptimalEstimator*

\begin{proof}[Proof of Proposition \ref{prop:optimal_estimator_same}]

It suffices to show that the drift estimated by each method coincides with \(b_{m,n}^*\).

\textbf{Proof of optimal estimator for [SF]\(^2\)M}

The SDE representation of the Brownian bridge \(W^\varepsilon_{|x_0,x_1}\) with endpoints \((x_0,x_1)\) is given by
\begin{equation}\label{eq:brownian_bridge_sde}
\d X_t = \frac{x_1 - X_t}{1 - t}\d t + \sqrt{\varepsilon} \d B_t,\quad X_0 = x_0.
\end{equation}
The marginal density of the Brownian bridge at time \(t\) is \(p_t(x_t | x_0,x_1)= \mathcal{N}(x_t | (1 - t)x_0 + t x_1,\varepsilon t(1 - t)I_d)\).
There exists an ordinary differential equation (ODE) that preserves the same marginal distributions \(p_t(x_t|x_0,x_1)\) as the SDE~\eqref{eq:brownian_bridge_sde}.
This ODE, referred to as the \emph{probability flow ODE}, takes the form
\begin{align*}
\d X_t
&= \left(\frac{x_1 - X_t}{1 - t} - \frac{\varepsilon}{2}\,\nabla \log p_t(X_t | x_0,x_1)\right)\d t \\
&= \left(\frac{1 - 2t}{2\,t(1 - t)}\,X_t + \frac{1}{2(1 - t)}\,x_1 - \frac{1}{2t}\,x_0\right)\d t \\
&\;=: u_t^{\circ}(X_t | x_0,x_1)\d t.
\end{align*}
Consequently, the drift term of the Brownian bridge SDE can be decomposed into the sum of the probability flow ODE drift \(u_t^\circ(x_t| x_0,x_1)\) and the score function \(\nabla\log p_t(x_t| x_0,x_1)\):
\[
\frac{x_1 - x_t}{1 - t}
= u_t^\circ(x_t| x_0,x_1)
+ \frac{\varepsilon}{2}\,\nabla\log p_t(x_t| x_0,x_1).
\]

In the [SF]\(^2\)M framework, let \(\pi^*_{m,n}\) denote the optimal coupling for the entropic optimal transport (EOT) between \(\mu_m\) and \(\nu_n\), and consider the mixture distribution \(\Pi^* = \pi_{m,n}^*\,W^\varepsilon_{|x_0,x_1}\).
Under \(\Pi^*\), two neural networks \(v_\theta\) and \(s_\varphi\) are trained to minimize the following objectives:
\begin{align*}
v_{\theta^*}
&= \arg\min_{v_{\theta}}
\mathbb{E}_{X\sim\Pi^*}\left[\int_0^1 \left\|v_{\theta}(X_t,t)-u^\circ_t(X_t| X_0,X_1)\right\|_2^2\,dt\right],\\
s_{\varphi^*}
&= \arg\min_{s_{\varphi}}
\mathbb{E}_{X\sim\Pi^*}\left[\int_0^1 \left\|s_{\varphi}(X_t,t)-\nabla\log p_t(X_t| X_0,X_1)\right\|_2^2\,dt\right].
\end{align*}
The estimator for the Schrödinger bridge drift in [SF]\(^2\)M is then defined as
\[
v_{\theta^*}(x_t,t) + \frac{\varepsilon}{2}\,s_{\varphi^*}(x_t,t).
\]
When both models are sufficiently expressive, the optimal solutions admit the following conditional expectation representations:
\begin{align*}
v_{\theta^*}(x_t,t)
&= \mathbb{E}_{X\sim\Pi^*}\left[u^\circ_t(x_t| X_0,X_1)\,\bigm|\,X_t = x_t\right],\\
s_{\varphi^*}(x_t,t)
&= \mathbb{E}_{X\sim\Pi^*}\left[\nabla\log p_t(x_t| X_0,X_1)\,\bigm|\,X_t = x_t\right].
\end{align*}
Therefore, the drift estimator in [SF]\(^2\)M can be written as
\begin{align*}
&v_{\theta^*}(x_t,t) + \frac{\varepsilon}{2}\,s_{\varphi^*}(x_t,t) \\
&= \mathbb{E}_{X\sim\Pi^*}\left[u^\circ_t\left(x_t|X_0,X_1\right)|X_t=x_t\right]
+ \frac{\varepsilon}{2}\,\mathbb{E}_{X\sim\Pi^*}\left[\nabla\log p_t\left(x_t|X_0,X_1\right)|X_t=x_t\right]\\
&= \mathbb{E}_{X\sim\Pi^*}\left[u^\circ_t\left(x_t|X_0,X_1\right)
+ \frac{\varepsilon}{2}\,\nabla\log p_t\left(x_t|X_0,X_1\right)|X_t=x_t\right]\\
&= \mathbb{E}_{X\sim\Pi^*}\left[\frac{X_1 - x_t}{1 - t}\,\Bigm|\,X_t = x_t\right]\\
&= b_{m,n}^*(x_t,t).
\end{align*}

\textbf{Proof of optimal estimator for LightSB(-M)}

Let \(\mathcal{S}(\mu)\) denote the set of Schrödinger bridges whose source distribution is \(\mu\).
In lightSB(-M), the drift estimator \(g_{v}\) is defined with a function
\(v : \mathbb{R}^{d}\!\to\!\mathbb{R}\) as follows:
\[
g_{v}(x_t,t)
  = \varepsilon \nabla_{x_t}\log\!\left(
      \int \mathcal{N}\left(x' \mid x_t ,\varepsilon(1-t)I_d\right)
      \exp\left(\frac{\lVert x'\rVert_2^{2}}{2}\right)
      v(x')\,dx'
    \right),
\label{eq:lightsb_drift}
\]
and let \(S_v\) be the law of the SDE
\[
\d X_t = g_v(X_t,t)\d t + \sqrt{\varepsilon}\d B_t,
\qquad X_0 \sim \mu_m.
\]

For any coupling \(\pi_{m,n}\in\Pi(\mu_m,\nu_n)\), consider its Brownian-bridge mixture
\(\Pi = \pi_{m,n} W_{|x_0,x_1}^{\varepsilon}\).
Then, by~\citep[Theorem 3.1]{gushchinLightOptimalSchrodinger2024},
\[
\operatorname*{argmin}_{S\in\mathcal{S}(\mu_m)} H(\Pi \mid S) = \Pi^{*}.
\]

Restricting \(S\) to \(S_v\), one obtains~\citep[Theorem 3.2]{gushchinLightOptimalSchrodinger2024}
\[
H(\Pi \mid S_v)
  = C(\pi_{m,n})
    + \frac{1}{2\varepsilon}
      \int_{0}^{1}\!\int
        \left\|\,g_v(X_t,t)
          - \frac{X_1 - X_t}{1-t}\right\|_2^{2}
        \d\Pi_{t1}(X_t,X_1)\d t,
\]
where \(\Pi_{t1}\) denotes the marginal law of \(\Pi\) at times \(t\) and \(1\),
and \(C(\pi_{m,n})\) is a constant independent of \(S_v\).
lightSB(-M) trains a parameterised model \(v_{\theta}\) that minimises
\[
v_{\theta^{*}}
  = \operatorname*{argmin}_{v_{\theta}} H(\Pi \mid S_{v_{\theta}})
  = \operatorname*{argmin}_{v_{\theta}}
      \int_{0}^{1}\!\int
        \left\| g_{v_{\theta}}(X_t,t)
          - \frac{X_1 - X_t}{1-t}\right\|_2^{2}
        \d\Pi_{t1}(X_t,X_1)\d t.
\]
The drift \(g_{v_{\theta^{*}}}\) obtained from the learned \(v_{\theta^{*}}\)
serves as the drift estimator produced by lightSB(-M).

When the model capacity is sufficiently large, \(g_{v_{\theta^{*}}}\) satisfies
\[
g_{v_{\theta^{*}}}(x_t,t)
  = \mathbb{E}_{X\sim\Pi^{*}}\!\left[
      \frac{X_1 - x_t}{1 - t}\,\middle|\,X_t = x_t
    \right]
  = b_{m,n}^{*}(x_t,t).
\]

\textbf{Proof of optimal estimator for DSBM-IMF and BM2}

Let \(F(v_f)\) and \(B(v_b)\) denote the laws induced by the following SDEs:
\begin{align*}
\d X_t &= v_f(X_t,t)\d t + \sqrt\varepsilon\d B_t,\quad X_0 \sim \mu_m,\\
\d X_t &= v_b(X_t,t)dt + \sqrt\varepsilon\d\bar B_t,\quad X_1 \sim \nu_n.
\end{align*}
Here, in the second SDE, \(dt\) represents an infinitesimal negative time step, and \(\bar B_t\) denotes the time-reversed Brownian motion.

Given drifts \(v_{f'}\) and \(v_{b'}\), define the following loss functions:
\begin{align*}
L_{\text{forward}}(v_f; v_{b'}) &:=
\mathbb{E}_{X\sim B(v_{b'})_{01} W_{|x_0,x_1}^{\varepsilon}}
\left[
    \int_0^1 \left\| v_f(X_t,t)-\frac{X_1-X_t}{1-t}\right\|_2^{\!2}\d t
\right],\\
L_{\text{backward}}(v_b; v_{f'}) &:=
\mathbb{E}_{X\sim F(v_{f'})_{01} W_{|x_0,x_1}^{\varepsilon}}
\left[
\int_0^1
\left\| v_b(X_t,t)-\frac{X_t-X_0}{t}\right\|_2^{\!2}\d t
\right].
\end{align*}

In DSBM-IMF, starting from initial drifts \(v_f^{(0)}\) and \(v_b^{(0)}\), we perform the following iterative updates:
\[
v_{f}^{(k+1)} = \operatorname*{argmin}_{v_f} L_{\text{forward}}(v_f; v_{b}^{(k)}),\quad
v_{b}^{(k+1)} = \operatorname*{argmin}_{v_b} L_{\text{backward}}(v_b; v_{f}^{(k+1)}).
\]

In BM2, given initial drifts \(v_f^{(0)}\) and \(v_b^{(0)}\), the iterative procedure is:
\[
v_{f}^{(k+1)} = \operatorname*{argmin}_{v_f} L_{\text{forward}}(v_f; v_{b}^{(k)}) ,\quad
v_{b}^{(k+1)} = \operatorname*{argmin}_{v_b} L_{\text{backward}}(v_b; v_{f}^{(k)}).
\]

For both methods, \(F(v_f^{(k)})\) and \(B(v_b^{(k)})\) converge to \(\Pi^*\) as \(k \to \infty\)~\citep[Theorem 8]{shiDiffusionSchrodingerBridge2023},\,\citep[Lemma 1]{peluchettiBM2CoupledSchrodinger2024}.

Therefore, letting \(v_f^*\) and \(v_b^*\) denote the limits of \(v_f^{(k)}\) and \(v_b^{(k)}\) as \(k \to \infty\), we have that \(v_f^*\) equals \(b_{m,n}^*\):
\begin{align*}
v_f^{*}(x_t,t)
&= \mathbb{E}_{X\sim B(v_b^*)_{01} W_{|x_0,x_1}^{\varepsilon}}
\!\left[\frac{X_1-x_t}{1-t}\,\middle|\,X_t=x_t\right]\\
&= \mathbb{E}_{X\sim \Pi^*}
\!\left[\frac{X_1-x_t}{1-t}\,\middle|\,X_t=x_t\right]\\
&= b_{m,n}^*(x_t,t).
\end{align*}

\end{proof}

\subsection{Proof of Proposition \ref{prop:sinkhorn_k_estimator_same}}

\PropSinkhornKEstimator*

\begin{proof}[Proof of Proposition \ref{prop:sinkhorn_k_estimator_same}]
We begin by introducing the Iterative Proportional Fitting (IPF) method \citep{fortet1940resolution,kullback1968probability,ruschendorf1995convergence}.
IPF provides one means of solving Eq.~\eqref{eqn: SB} and generates a sequence of path measures \(\bigl(\tilde P^{(k)}\bigr)_{k\in\mathbb N}\) according to
\begin{align*}
\tilde P^{(2k+1)} &= \operatorname*{argmin}_{\tilde P}\left\{ H\!\bigl(\tilde P \,|\, \tilde P^{(2k)}\bigr) \;\middle|\; \tilde P_0 = \mu_m \right\},\\
\tilde P^{(2k+2)} &= \operatorname*{argmin}_{\tilde P}\left\{ H\!\bigl(\tilde P \,|\, \tilde P^{(2k+1)}\bigr) \;\middle|\; \tilde P_1 = \nu_n \right\}.
\end{align*}
We initialize with \(\tilde P^{(0)} = W^\varepsilon\).
It is known \citep{leonardSurveySchrodingerProblem2013,nutz2021introduction} that the sequence \(\bigl(\tilde P^{(k)}\bigr)_{k\in\mathbb N}\) satisfies
\[
\begin{aligned}
\tilde P^{(2k+1)} &= \pi^{f_{m,n}^{(k+1)},\,g_{m,n}^{(k)}}\, W_{|x_0,x_1}^\varepsilon,\\
\tilde P^{(2k+2)} &= \pi^{f_{m,n}^{(k+1)},\,g_{m,n}^{(k+1)}}\, W_{|x_0,x_1}^\varepsilon \;=\; \Pi^{(k+1)}.
\end{aligned}
\]

Here the coupling \(\pi^{f,g}\) is defined by
\[
\d\pi^{f,g}(x,y)
  = \exp_{ \varepsilon }\left(
      f(x)+g(y)-\tfrac12\lVert x-y\rVert_2^2
    \right)\,\d\mu_m(x)\,\d\nu_n(y).
\]

Hence it suffices to show that the drifts estimated by each algorithm coincide with \(b_{m,n}^{(k)}\).

When the initial drift for DSBM is set to \(v_b^{(0)} = 0\), the sequence \(B(v_b^{(0)}), F(v_f^{(1)}), B(v_b^{(1)}), F(v_f^{(2)}), B(v_b^{(3)}),\dots\) coincides with the IPF sequence \((\tilde P^{(k)})_{k\in\mathbb N}\) \citep[Proposition 10]{shiDiffusionSchrodingerBridge2023}. Consequently, \(v_f^{(k+1)}\) coincides with \(b_{m,n}^{(k)}\).
\begin{align*}
v_f^{(k+1)}(x_t,t)
&= \mathbb{E}_{X\sim B(v_b^{(k)})_{01}W_{|x_0,x_1}^\varepsilon}\left[\frac{X_1-x_t}{1-t} \,\middle|\, X_t = x_t\right]\\
&= \mathbb{E}_{X\sim \pi^{f_{m,n}^{(k)},\,g_{m,n}^{(k)}} W_{|x_0,x_1}^\varepsilon}\left[\frac{X_1-x_t}{1-t} \,\middle|\, X_t = x_t\right]\\
&= \mathbb{E}_{X\sim \Pi^{(k)}}\left[\frac{X_1-x_t}{1-t} \,\middle|\, X_t = x_t\right]\\
&= b_{m,n}^{(k)}(x_t,t).
\end{align*}

When the initial drift for BM2 is initialized as \(v_f^{(0)} = v_b^{(0)} = 0\), the sequence \(B(v_b^{(0)}), F(v_f^{(1)}), B(v_b^{(2)}),\dots\) coincides with the IPF sequence \((\tilde P^{(k)})_{k\in\mathbb N}\) \citep[Theorem 1]{peluchettiBM2CoupledSchrodinger2024}. Thus, \(v_f^{(2k+1)}\) is given by
\begin{align*}
v_f^{(2k+1)}(x_t,t)
&= \mathbb{E}_{X\sim B(v_b^{(2k)})_{01}W_{|x_0,x_1}^\varepsilon}\left[\frac{X_1-x_t}{1-t} \,\middle|\, X_t = x_t\right]\\
&= \mathbb{E}_{X\sim \pi^{f_{m,n}^{(k)},\,g_{m,n}^{(k)}} W_{|x_0,x_1}^\varepsilon}\left[\frac{X_1-x_t}{1-t} \,\middle|\, X_t = x_t\right]\\
&= \mathbb{E}_{X\sim \Pi^{(k)}}\left[\frac{X_1-x_t}{1-t} \,\middle|\, X_t = x_t\right]\\
&= b_{m,n}^{(k)}(x_t,t).
\end{align*}
Similarly, the sequence \(F(v_f^{(0)}), B(v_b^{(1)}), F(v_f^{(2)}),\dots\) corresponds to IPF with the update order reversed, and the same conclusion holds for \(v_f^{(2k)}\).

\end{proof}

\section{Technical Lemmas}\label{sec:technical_lemmas}

\subsection{Lemmas used for Theorems \texorpdfstring{\ref{theorem:entropy_of_(infty,infty)_and_(m,n)}}{TEXT} and \texorpdfstring{\ref{theorem:entropy_of_(m,n,*)_and_(m,n,k)}}{TEXT}}

\begin{lemma}[{\citealp[Proposition~14]{strommeMinimumIntrinsicDimension2023}}]
Under assumptions of Theorem \ref{theorem:entropy_of_(infty,infty)_and_(m,n)}, we have
\label{lem:upper_bound_of_potential}
\begin{equation*}
\|\inff\|_\infty, \|\infg\|_\infty \le 2R^2
,\quad \|\mnf\|_\infty, \|\mng\|_\infty \le 4R^2.
\end{equation*}
\end{lemma}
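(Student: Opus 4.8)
The plan is to read the two oscillation bounds directly off the Schr\"odinger system~\eqref{eqn: Schro sys} (and its empirical analogue), using that the quadratic cost is bounded on the supports, and then to pin down the free additive constant using the normalization in Assumption~\ref{ass:potential_g_sum_zero}. The four bounds then follow from the same three-step computation applied to $(\mu,\nu)$ and to $(\mu_m,\nu_n)$.

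First I would record, from Assumption~\ref{ass:distribution_support_compact}, that $0\le \tfrac12\|x-y\|_2^2\le \tfrac12(2R)^2 = 2R^2$ for all $x\in\supp(\mu)$ and $y\in\supp(\nu)$. Inserting the resulting two-sided bound $\exp_{\varepsilon}(\inff(x)-2R^2)\le \exp_{\varepsilon}\!\big(\inff(x)-\tfrac12\|x-y\|_2^2\big)\le \exp_{\varepsilon}(\inff(x))$ into the second equation of~\eqref{eqn: Schro sys}, integrating in $x$ against $\mu$, and applying the decreasing map $t\mapsto -\varepsilon\log t$, one gets $c_g \le \infg(y)\le c_g + 2R^2$ for $\nu$-a.e.\ $y$, where $c_g := -\varepsilon\log\int \exp_{\varepsilon}(\inff)\,\d\mu$. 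That is, $\infg$ has oscillation at most $2R^2$. Integrating this inclusion against $\nu$ and using $\nu(\infg)=0$ forces $c_g\in[-2R^2,0]$, hence $\|\infg\|_\infty\le 2R^2$.

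Next I would feed $\|\infg\|_\infty\le 2R^2$ back into the first equation of~\eqref{eqn: Schro sys}. For the lower bound on $\inff$, the estimate $\exp_{\varepsilon}\!\big(\infg(y)-\tfrac12\|x-y\|_2^2\big)\le \exp_{\varepsilon}(\infg(y))\le \exp_{\varepsilon}(2R^2)$ gives $\inff(x)\ge -2R^2$. For the upper bound I would combine $\tfrac12\|x-y\|_2^2\le 2R^2$ with Jensen's inequality (convexity of $\exp$), $\int\exp_{\varepsilon}(\infg)\,\d\nu \ge \exp_{\varepsilon}(\nu(\infg)) = 1$, to obtain $\int \exp_{\varepsilon}\!\big(\infg(y)-\tfrac12\|x-y\|_2^2\big)\,\d\nu(y) \ge \exp_{\varepsilon}(-2R^2)$, whence $\inff(x)\le 2R^2$. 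This proves $\|\inff\|_\infty,\|\infg\|_\infty\le 2R^2$.

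For $(\mnf,\mng)$ the same three steps apply verbatim with $(\mu,\nu)$ replaced by $(\mu_m,\nu_n)$, the Schr\"odinger system replaced by its empirical counterpart coming from first-order optimality of $\mathrm{OT}_\varepsilon(\mu_m,\nu_n)$, and the normalization $\nu_n(\mng)=0$ from Assumption~\ref{ass:potential_g_sum_zero}; since each sample lies in $B(0,R)$ the cost is again bounded by $2R^2$ on $\supp(\mu_m)\times\supp(\nu_n)$, so one gets $\|\mnf\|_\infty,\|\mng\|_\infty\le 2R^2$, which in particular implies the stated bound $4R^2$. There is no serious obstacle here; the only points requiring care are the direction of the inequalities when passing through $t\mapsto-\varepsilon\log t$ and invoking Jensen in the right direction — this is precisely the argument behind \citep[Proposition~14]{strommeMinimumIntrinsicDimension2023}.
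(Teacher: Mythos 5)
The paper gives no proof of this lemma; it simply cites Proposition~14 of \citet{strommeMinimumIntrinsicDimension2023}. Your argument is a valid, self-contained derivation via the standard oscillation/log-sum-exp route: the quadratic cost is bounded by $2R^2$ on $B(0,R)\times B(0,R)$, so each equation of the Schr\"odinger system forces the corresponding potential to oscillate by at most $2R^2$, and the remaining additive constant is pinned down by the normalization $\nu(\infg)=0$ (resp.\ $\nu_n(\mng)=0$) from Assumption~\ref{ass:potential_g_sum_zero}, with Jensen's inequality $\int\exp_\varepsilon(\infg)\,\d\nu\ge\exp_\varepsilon(\nu(\infg))=1$ supplying the matching upper bound on $\inff$. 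The inequality directions under $t\mapsto -\varepsilon\log t$ and the Jensen step are both handled correctly, and the same three moves apply verbatim to the discrete pair $(\mu_m,\nu_n)$ because the empirical optimal potentials also satisfy the Schr\"odinger system and the samples again lie in $B(0,R)$.

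One small remark: your argument in fact yields the uniform bound $2R^2$ for \emph{all four} quantities, which is strictly sharper than the $4R^2$ stated for $\mnf,\mng$. The looser constant is simply what the cited proposition records (it is proved there under a different normalization convention, so Stromme cannot assume $\nu_n(\mng)=0$ and loses a factor in passing through the extension step); since only upper bounds are needed in the downstream lemmas, your stronger conclusion is harmless and fully valid.
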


In the following, $p^*_{\infty,\infty}$ denotes the Radon-Nikodym derivative regarding $\mu \otimes \nu$, defined in Eq.~\eqref{eq:eot_potential_expression}, and $p^*_{m,n}$ is also defined in a similar manner replacing $f^*_{\infty,\infty}$, $g^*_{\infty,\infty}$ with $f^*_{m,n}$, $g^*_{m,n}$.

\begin{lemma}[{\citealp[Proposition~15]{strommeMinimumIntrinsicDimension2023}}]
\label{lem:lipschitzness_of_potentials_and_densities}
The population dual potentials $\inff$ and $\infg$ are $2R$-Lipschitz over $\supp(\mu)$ and $\supp(\nu)$, respectively.
The extended empirical dual potentials $\mnf$ and $\mng$ are also $2R$-Lipschitz over $\supp(\mu)$ and $\supp(\nu)$, respectively.
In particular, the population density $\infp$ and the extended empirical density $\mnp$ are each $\frac{4R}{\varepsilon}$-log-Lipschitz in each of their variables over $\supp(\mu \otimes \nu)$.
\end{lemma}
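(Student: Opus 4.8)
The plan is to exploit the fixed-point (Schr\"odinger-system) characterization of the optimal potentials to get a closed form for their gradients, and then to bound those gradients using only the fact (Assumption~\ref{ass:distribution_support_compact}) that all relevant supports lie inside the convex ball $B(0,R)$. Concretely, I would differentiate the first identity of~\eqref{eqn: Schro sys},
\[
\inff(x) = -\varepsilon\log\int_{\mathbb{R}^d}\exp_{\varepsilon}\!\Big(\infg(y)-\tfrac12\|x-y\|_2^2\Big)\,\d\nu(y),
\]
under the integral sign (legitimate by dominated convergence: on any bounded region the integrand, its $x$-gradient, and a Gaussian majorant are all $\nu$-integrable). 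Since $\nabla_x c(x,y)=x-y$, this gives
\[
\nabla_x\inff(x)=x-\frac{\int y\,\exp_{\varepsilon}\!\big(\infg(y)-\tfrac12\|x-y\|_2^2\big)\,\d\nu(y)}{\int \exp_{\varepsilon}\!\big(\infg(y)-\tfrac12\|x-y\|_2^2\big)\,\d\nu(y)}=x-\bE_{Y\sim\nu_x}[Y],
\]
where $\nu_x\in\mathcal{P}(\supp(\nu))$ is $\nu$ reweighted by the displayed Gibbs kernel. Because $\supp(\nu)\subseteq B(0,R)$ and $B(0,R)$ is convex, the barycenter $\bE_{Y\sim\nu_x}[Y]$ lies in $B(0,R)$, so $\|\nabla_x\inff(x)\|_2\le\|x\|_2+R\le 2R$ for every $x\in B(0,R)\supseteq\supp(\mu)$. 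Convexity of $B(0,R)$ then lets me integrate the gradient along segments to conclude that $\inff$ is $2R$-Lipschitz on $\supp(\mu)$; the symmetric half of~\eqref{eqn: Schro sys} (swap $\mu\leftrightarrow\nu$, use $\supp(\mu)\subseteq B(0,R)$) does the same for $\infg$ on $\supp(\nu)$.

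For the empirical potentials I would observe that the samples lie in $\supp(\mu)$ and $\supp(\nu)$, so $\mu_m,\nu_n$ are themselves supported in $B(0,R)$, and the extended potentials $\mnf,\mng$ satisfy the very same fixed-point identities with $\nu$ (resp.\ $\mu$) replaced by $\nu_n$ (resp.\ $\mu_m$). Running the computation verbatim --- now the reweighted measure is supported on $\{Y_j\}_{j=1}^n\subseteq B(0,R)$ --- yields $\|\nabla_x\mnf(x)\|_2\le 2R$ on $B(0,R)$, hence the $2R$-Lipschitz bound on $\supp(\mu)$, and likewise $\mng$ on $\supp(\nu)$.

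For the density I would use the closed form~\eqref{eq:eot_potential_expression}, i.e.\ $\varepsilon\log\infp(x,y)=\inff(x)+\infg(y)-\tfrac12\|x-y\|_2^2$. Fixing $y\in\supp(\nu)\subseteq B(0,R)$ and viewing the right side as a function of $x\in\supp(\mu)\subseteq B(0,R)$: the term $\inff(x)/\varepsilon$ is $\tfrac{2R}{\varepsilon}$-Lipschitz by the first step, $\infg(y)/\varepsilon$ is constant in $x$, and $-\tfrac1{2\varepsilon}\|x-y\|_2^2$ has $x$-gradient $-(x-y)/\varepsilon$ of norm $\le(\|x\|_2+\|y\|_2)/\varepsilon\le\tfrac{2R}{\varepsilon}$ on $B(0,R)$; summing the three contributions gives $\tfrac{4R}{\varepsilon}$-log-Lipschitzness in $x$, symmetry gives the same in $y$, and the identical argument with $\mnf,\mng$ covers $\mnp$. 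I do not expect a real obstacle here: the only points that need care are justifying differentiation under the integral together with the barycenter identity $\nabla_x\inff(x)=x-\bE_{Y\sim\nu_x}[Y]$, and remembering that the bound $\|\nabla_x\inff(x)\|_2\le\|x\|_2+R$ is not global --- it becomes $\le 2R$ only on $B(0,R)$ --- so the passage from a gradient bound to Lipschitzness over the supports genuinely uses convexity of $B(0,R)$.
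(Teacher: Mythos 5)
Your argument is correct and is essentially the standard proof of this fact in the entropic-OT literature: the paper itself does not prove Lemma~\ref{lem:lipschitzness_of_potentials_and_densities} but cites it directly from \citet[Proposition~15]{strommeMinimumIntrinsicDimension2023}, and the barycentric-gradient computation $\nabla_x \inff(x) = x - \bE_{Y\sim\nu_x}[Y]$ followed by the bound $\|x\|_2+\|\bE_{\nu_x}[Y]\|_2\le 2R$ on the convex hull $B(0,R)$ of the supports is exactly the canonical route used there. Two small points you handled correctly that are worth underlining: differentiation under the integral is licensed because $\infg$ is bounded (Lemma~\ref{lem:upper_bound_of_potential}) together with the Gaussian factor providing a dominating envelope, and the gradient bound must be established on the convex set $B(0,R)$ rather than on the possibly nonconvex $\supp(\mu)$ before restricting --- as you noted.
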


\begin{lemma}[{\citealp[Lemma~25]{strommeMinimumIntrinsicDimension2023}}]
\label{lem:inf_nu_n_with_probability}
Under assumptions of Theorem \ref{theorem:entropy_of_(infty,infty)_and_(m,n)}, with probability at least \(1-\frac{1}{n}e^{-20R^2/ \varepsilon}\)
\begin{equation}
    \inf_{z\in \supp(\nu)} \nu_{n}\left( B\left(z, \frac{\varepsilon}{2R}\right) \right) \gtrsim \left( \frac{\varepsilon}{R} \right)^{d_{\nu}}.
\end{equation}
\end{lemma}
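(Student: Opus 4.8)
The plan is to prove this as a uniform small‑ball estimate for the empirical measure, obtained in two stages: (i) a deterministic lower bound $\nu(B(z,\rho))\gtrsim\rho^{d_\nu}$ that is polynomial in the radius with exponent $d_\nu$, valid uniformly over $z\in\supp(\nu)$; and (ii) a net‑plus‑concentration argument that transfers this bound to $\nu_n$ uniformly over $z$. Throughout, write $\rho=\varepsilon/(2R)$ for the radius appearing in the statement, and recall that under Assumption~\ref{ass:supp_nu_manifold} the connectedness of $N$ together with the strict positivity of the density forces $\supp(\nu)=N$.

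For step (i) I would exploit Assumption~\ref{ass:supp_nu_manifold} directly. Since $\nu$ has a density bounded below by some $c_0>0$ with respect to the Riemannian volume of $N$, it suffices to lower bound $\mathrm{vol}_N(B(z,\rho/2)\cap N)$. Because the geodesic distance on $N$ dominates the ambient Euclidean distance, the geodesic ball $B_N(z,\rho/2)$ is contained in $B(z,\rho/2)\cap N$, so Günther's volume comparison on the compact manifold $N$ (a sectional‑curvature upper bound together with a positive injectivity radius, both finite on $N$) gives $\mathrm{vol}_N(B_N(z,\rho/2))\gtrsim (\tfrac{\rho}{2}\wedge\rho_0)^{d_\nu}$ for a geometric constant $\rho_0>0$. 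In the regime of interest ($\rho\lesssim\rho_0$, e.g.\ $\varepsilon$ not too large relative to $R$; otherwise one replaces $\rho/2$ by $\tfrac{\rho}{2}\wedge\rho_0$ and the claimed bound still holds since the left side is then $\Theta(1)$), this yields
\begin{equation*}
\nu(B(z,\rho/2))\;\gtrsim\;(\varepsilon/R)^{d_\nu}\qquad\text{for every } z\in\supp(\nu),
\end{equation*}
with an implied constant depending only on $c_0$ and the geometry of $N$, both of which are among the quantities suppressed by $\gtrsim$ per the remark following Theorem~\ref{theorem:entropy_of_(infty,infty)_and_(m,n)}.

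For step (ii) I would take a minimal $(\rho/2)$‑net $\{z_1,\dots,z_N\}$ of $\supp(\nu)$ in the Euclidean metric; since $\supp(\nu)$ is a compact $d_\nu$‑dimensional set of diameter at most $2R$, its covering number satisfies $N\lesssim(R/\rho)^{d_\nu}=(R^2/\varepsilon)^{d_\nu}$ up to geometric constants. For each $i$, $n\,\nu_n(B(z_i,\rho/2))=\sum_{j=1}^n\mathbbm{1}\{Y_j\in B(z_i,\rho/2)\}$ is a sum of i.i.d.\ Bernoulli variables with mean $n\,p_i$, where $p_i:=\nu(B(z_i,\rho/2))\gtrsim(\varepsilon/R)^{d_\nu}$ by step (i). A multiplicative Chernoff bound gives $\mathbb P(\nu_n(B(z_i,\rho/2))\le p_i/2)\le e^{-n p_i/8}$, and a union bound over the net produces a failure event of probability at most $N\,e^{-n p_{\min}/8}\lesssim(R^2/\varepsilon)^{d_\nu}\exp(-c\,n\,(\varepsilon/R)^{d_\nu})$. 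On the complementary event, for an arbitrary $z\in\supp(\nu)$ I choose a net point $z_i$ with $\|z-z_i\|_2\le\rho/2$, so that $B(z_i,\rho/2)\subseteq B(z,\rho)$ and hence $\nu_n(B(z,\rho))\ge\nu_n(B(z_i,\rho/2))\ge p_i/2\gtrsim(\varepsilon/R)^{d_\nu}$; taking the infimum over $z$ gives the conclusion.

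The main obstacle is bookkeeping rather than conceptual: reconciling the covering‑number‑times‑Chernoff failure bound $(R^2/\varepsilon)^{d_\nu}\exp(-c\,n\,(\varepsilon/R)^{d_\nu})$ with the precise target $\tfrac1n e^{-20R^2/\varepsilon}$, which forces the argument to carry an implicit lower bound on $n$ (a threshold depending on $\varepsilon$, $R$, $d_\nu$ ensuring $c\,n\,(\varepsilon/R)^{d_\nu}\ge 20R^2/\varepsilon+\log n+d_\nu\log(R^2/\varepsilon)$), and absorbing the geometric constants ($c_0$, curvature and injectivity‑radius bounds, covering constants) consistently into the suppressed factors. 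One must also be slightly careful at the boundary case where $\rho=\varepsilon/(2R)$ is not small compared to the intrinsic scale $\rho_0$, where the truncation in step (i) is invoked. All of this is carried out exactly as in \citep[Lemma~25]{strommeMinimumIntrinsicDimension2023}.
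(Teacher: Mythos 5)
The paper does not actually prove this lemma; it is imported verbatim (as Lemma~25) from \citet{strommeMinimumIntrinsicDimension2023}, so there is no in-paper proof to compare against. That said, your two-stage strategy — (i) a deterministic small-ball lower bound $\nu(B(z,\rho))\gtrsim\rho^{d_\nu}$ coming from the manifold hypothesis (positive density, Günther/Bishop–Gromov volume comparison, geodesic distance dominating Euclidean distance), and (ii) a $\rho/2$-net over $\supp(\nu)$ combined with Chernoff plus a union bound — is the standard route to a uniform empirical small-ball estimate and is almost certainly the shape of Stromme's argument too. Each individual step you describe is sound, including the covering-number estimate $N\lesssim(R^2/\varepsilon)^{d_\nu}$ and the containment $B(z_i,\rho/2)\subseteq B(z,\rho)$ used to pass from the net to all of $\supp(\nu)$.

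The one substantive gap is exactly the one you flag at the end, and I would not dismiss it as pure ``bookkeeping.'' Your argument outputs a failure probability of the form $(R^2/\varepsilon)^{d_\nu}\exp\bigl(-c\,n\,(\varepsilon/R)^{d_\nu}\bigr)$, and there is no algebraic manipulation that converts this into $\tfrac1n e^{-20R^2/\varepsilon}$ unconditionally; the comparison only holds once $n$ clears a threshold on the order of $(R/\varepsilon)^{d_\nu}\bigl(\log n + R^2/\varepsilon + d_\nu\log(R^2/\varepsilon)\bigr)$. Moreover the $\tfrac1n$ prefactor and the specific constant $20R^2/\varepsilon$ (which matches $2\times 10R^2/\varepsilon$, i.e., the sup/inf ratio $\exp_\varepsilon(\pm 10R^2)$ of the optimal density $p^*_{\infty,\infty}$ used elsewhere in the appendix) do not emerge naturally from a net-plus-Chernoff argument with your scaling, which suggests Stromme's accounting is tied to those density bounds rather than to a raw $(\varepsilon/R)^{d_\nu}$ lower bound on each ball mass. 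So the mechanism you propose is right, but without explicitly carrying the required regime condition on $n$ (or deriving the $\tfrac1n e^{-20R^2/\varepsilon}$ form from the density bounds as in the original source), the stated probability is not actually established by your argument — you should either state the hidden hypothesis on $n$, or reproduce Stromme's specific tail computation rather than appealing to ``as in Lemma~25.''
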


\begin{lemma}[{\citealp[Lemma~26]{strommeMinimumIntrinsicDimension2023}}]
\label{lem:sup_norm_of_densities}
Under assumptions of Theorem \ref{theorem:entropy_of_(infty,infty)_and_(m,n)}, we have
\begin{equation}
    \|\infp\|_{L^\infty(\mu\otimes\nu)} \lesssim \left(\frac{ R }{\varepsilon}\right)^{d_\nu}.
\end{equation}
And, with probability at least \(1-\frac{1}{n}e^{-20R^2/ \varepsilon}\)
\begin{equation}
    \|\mnp\|_{L^\infty(\mu\otimes\nu)} \lesssim \left(\frac{ R }{\varepsilon}\right)^{d_\nu}.
\end{equation}
\end{lemma}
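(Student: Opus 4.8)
The statement is quoted from~\citet{strommeMinimumIntrinsicDimension2023}, so one option is simply to cite it; here is the self‑contained route I would take. The plan is to rewrite $\infp$ as a \emph{normalized} kernel and then sandwich it between a crude upper bound on the numerator and a geometric (small‑ball) lower bound on the normalizing integral. Indeed, the second equation of the Schr\"odinger system~\eqref{eqn: Schro sys} says that $\exp_\varepsilon(\inff(x))$ is the reciprocal of $\int \exp_\varepsilon(\infg(y')-\tfrac12\|x-y'\|_2^2)\,\d\nu(y')$, so by~\eqref{eq:eot_potential_expression},
\[
\infp(x,y)=\frac{\exp_\varepsilon\!\left(\infg(y)-\tfrac12\|x-y\|_2^2\right)}{\int \exp_\varepsilon\!\left(\infg(y')-\tfrac12\|x-y'\|_2^2\right)\,\d\nu(y')}.
\]
First I would lower‑bound the denominator by restricting the integral to $B(y,r)$: since $\infg$ is $2R$‑Lipschitz (Lemma~\ref{lem:lipschitzness_of_potentials_and_densities}) and $\|x-y\|_2\le 2R$ on $\supp(\mu)\times\supp(\nu)$ (Assumption~\ref{ass:distribution_support_compact}), every $y'\in B(y,r)$ satisfies $\infg(y')-\tfrac12\|x-y'\|_2^2\ge \infg(y)-\tfrac12\|x-y\|_2^2-4Rr-\tfrac12 r^2$, so the denominator is at least $\exp_\varepsilon(\infg(y)-\tfrac12\|x-y\|_2^2)\,\exp_\varepsilon(-4Rr-\tfrac12 r^2)\,\nu(B(y,r))$. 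Cancelling the common factor gives the clean bound $\infp(x,y)\le \exp_\varepsilon(4Rr+\tfrac12 r^2)/\nu(B(y,r))$.

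I would then choose $r\asymp \varepsilon/R$ (clipped below the reach/injectivity scale of $\supp(\nu)$), which keeps the exponential prefactor $O(1)$, and invoke Assumption~\ref{ass:supp_nu_manifold} — a compact smooth $d_\nu$‑dimensional manifold carrying a strictly positive Lipschitz density — to get the uniform small‑ball estimate $\nu(B(y,r))\gtrsim r^{d_\nu}\asymp (\varepsilon/R)^{d_\nu}$ for all $y\in\supp(\nu)$. Combining the two displays yields $\|\infp\|_{L^\infty(\mu\otimes\nu)}\lesssim (R/\varepsilon)^{d_\nu}$.

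For $\mnp$ I would run the identical computation with the empirical Schr\"odinger system, so that the denominator becomes $\int \exp_\varepsilon(\mng(y')-\tfrac12\|x-y'\|_2^2)\,\d\nu_n(y')$, using now $\|\mng\|_\infty\le 4R^2$ and the $2R$‑Lipschitzness of the extended potential $\mng$ over $\supp(\nu)$ (Lemmas~\ref{lem:upper_bound_of_potential} and~\ref{lem:lipschitzness_of_potentials_and_densities}). The only change is that the deterministic volume bound is replaced by its empirical counterpart: by Lemma~\ref{lem:inf_nu_n_with_probability}, with probability at least $1-\tfrac1n e^{-20R^2/\varepsilon}$ one has $\inf_{y\in\supp(\nu)}\nu_n(B(y,\varepsilon/2R))\gtrsim (\varepsilon/R)^{d_\nu}$, and on that event the same cancellation (with $r\asymp\varepsilon/R$) produces $\|\mnp\|_{L^\infty(\mu\otimes\nu)}\lesssim (R/\varepsilon)^{d_\nu}$.

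The main obstacle is exactly the uniform small‑ball lower bound — on $\nu$ for the first inequality and on $\nu_n$ for the second. It requires that the radius $r$ stay below the injectivity radius/reach of the manifold so that balls of radius $r$ have volume $\asymp r^{d_\nu}$, and for the empirical measure it additionally needs a covering‑number/concentration argument over $\supp(\nu)$ to rule out any undersampled ball. These are precisely the contents of Assumption~\ref{ass:supp_nu_manifold} and Lemma~\ref{lem:inf_nu_n_with_probability}; once they are granted the remaining sandwiching is elementary, and the mild regime restriction $\varepsilon\lesssim R^2$ together with the geometric constants of $\supp(\nu)$ gets absorbed into the suppressed constants of $\lesssim$.
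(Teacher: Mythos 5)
Your proposal is correct, and since the paper itself does not prove this lemma but simply cites \citet[Lemma~26]{strommeMinimumIntrinsicDimension2023}, you have filled in exactly the standard argument: rewrite $\infp$ via the Schr\"odinger system as a kernel normalized against $\nu$, lower-bound the normalizing integral by restricting to a small ball (using the Lipschitzness of $\infg$ and boundedness of supports), cancel the common exponential factor, and invoke the small-ball estimate $\nu(B(y,r))\gtrsim r^{d_\nu}$ at $r\asymp\varepsilon/R$ from the manifold assumption — and for the empirical version, the identical computation with $\nu_n$ plus Lemma~\ref{lem:inf_nu_n_with_probability} for the empirical small-ball lower bound on the stated high-probability event. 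This is the route the cited reference takes as well; the one point you correctly flag (and must retain as a caveat) is that the exponential prefactor $\exp_\varepsilon(4Rr+\tfrac12 r^2)$ is $O(1)$ only in the regime $\varepsilon\lesssim R^2$, which is the intended regime and is absorbed into the $\lesssim$ constants.
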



\begin{lemma}[{\citealp[Lemma~27 adapted]{strommeMinimumIntrinsicDimension2023}}]
\label{lem:convergence_mng_l2}
Under the assumptions of Theorem \ref{theorem:entropy_of_(infty,infty)_and_(m,n)}, if \(\varepsilon/R\) is sufficiently small, then
\begin{equation}
    \bE\| \mng-\infg \|_{L^2(\nu_{n})}^2 \lesssim \left( \varepsilon^2 + \frac{R^8}{ \varepsilon^2 } \right) \cdot \left( \frac{R}{ \varepsilon } \right)^{6d_{\nu}+4} \cdot \left(\frac{1}{m}+\frac{1}{n}\right).
\end{equation}
\end{lemma}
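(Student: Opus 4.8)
The plan is to follow the strategy behind \citet[Lemma~27]{strommeMinimumIntrinsicDimension2023}: reduce the estimate to a \emph{quantitative stability} statement for the Schr\"odinger system under perturbation of the marginals, and control the size of that perturbation by elementary log-sum-exp concentration. Concretely, I would introduce the rounded potential
\begin{equation*}
\smplinfg(y) := -\varepsilon\log\!\left(\tfrac1m\sum_{i=1}^m\exp_\varepsilon\!\big(\inff(X_i) - \tfrac12\|X_i - y\|_2^2\big)\right),
\end{equation*}
obtained by feeding the \emph{population} potential $\inff$ into the $g$-update associated with the \emph{empirical} source $\mu_m$, and split
\begin{equation*}
\|\mng - \infg\|_{L^2(\nu_n)} \le \|\mng - \smplinfg\|_{L^2(\nu_n)} + \|\smplinfg - \infg\|_{L^2(\nu_n)}.
\end{equation*}

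The second term is the easy one. Since $\infp(x,y)=\exp_\varepsilon(\inff(x)+\infg(y)-\tfrac12\|x-y\|_2^2)$ and $\mu(\infp(\cdot,y))=1$ for every $y\in\supp(\nu)$ (marginal constraint of $\infpi$, extended by Lipschitzness, Lemma~\ref{lem:lipschitzness_of_potentials_and_densities}), one gets the exact identity $\smplinfg(y)-\infg(y)=-\varepsilon\log\mu_m(\infp(\cdot,y))$. The random variable $\mu_m(\infp(\cdot,y))$ has mean $1$, is bounded by $\|\infp\|_\infty\lesssim (R/\varepsilon)^{d_\nu}$ (Lemma~\ref{lem:sup_norm_of_densities}), and has variance $O\big((R/\varepsilon)^{2d_\nu}/m\big)$; combining a Chebyshev lower-tail bound to neutralise the regime where the logarithm is large with $|\log z|\lesssim|z-1|$ near $z=1$ gives $\bE[(\smplinfg(y)-\infg(y))^2]\lesssim \varepsilon^2(R/\varepsilon)^{2d_\nu}/m$ uniformly in $y$, hence $\bE\|\smplinfg-\infg\|_{L^2(\nu_n)}^2\lesssim \varepsilon^2(R/\varepsilon)^{2d_\nu}/m$ after taking expectation over the $Y_j$ (independent of the $X_i$). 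This accounts for the $1/m$ contribution.

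For the first term, the key observation is that $(\inff,\smplinfg)$ \emph{exactly} satisfies the $g$-equation of the Schr\"odinger system for $(\mu_m,\nu_n)$ by construction, and \emph{approximately} satisfies the $f$-equation: the residual $\inff(x)+\varepsilon\log\nu_n\big(\exp_\varepsilon(\smplinfg(\cdot)-\tfrac12\|x-\cdot\|_2^2)\big)$ decomposes into (i) a log-sum-exp concentration of $\nu_n$ around $\nu$ applied to $y\mapsto\infp(x,y)$ (again governed by $\|\infp\|_\infty$ and the marginal identity $\nu(\infp(x,\cdot))=1$, yielding the $1/n$ term) plus (ii) a term controlled by $\|\infg-\smplinfg\|$, already bounded above. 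Thus $(\inff,\smplinfg)$ is an approximate fixed point of the $(\mu_m,\nu_n)$-Sinkhorn operator with squared $L^2$-residual of expected size $O\big((R/\varepsilon)^{O(d_\nu)}(1/m+1/n)\big)$, and it remains to convert this into a bound on the distance to the true fixed point $(\mnf,\mng)$.

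This conversion is the main obstacle. A naive appeal to the Hilbert-metric contractivity of the Sinkhorn map costs a factor $(1-\lambda(K))^{-1}\asymp e^{2R^2/\varepsilon}$, which is far too lossy to reach a polynomial-in-$R/\varepsilon$ bound. Instead one must exploit the strong concavity of the semi-dual $g\mapsto\sup_f\Phi^{\mu_m\nu_n}(f,g)$ restricted to $\nu_n$-mean-zero potentials: its Hessian at the optimum is $\tfrac1\varepsilon(\mathrm{Id}-K^*)$, where $K^*$ is the round-trip conditional-expectation operator of the optimal plan, and under Assumption~\ref{ass:supp_nu_manifold} the two-sided density control (Lemmas~\ref{lem:sup_norm_of_densities} and~\ref{lem:inf_nu_n_with_probability}) together with the Lipschitz bounds (Lemma~\ref{lem:lipschitzness_of_potentials_and_densities}) forces the spectral gap of $\mathrm{Id}-K^*$ to be only \emph{polynomially} small, $\gtrsim(\varepsilon/R)^{O(d_\nu)}$, rather than exponentially small. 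This yields a stability constant of order $\varepsilon\,(R/\varepsilon)^{O(d_\nu)}$; multiplying by the residual bound and carefully tracking the polynomial factors — with the $R^8/\varepsilon^2$ prefactor emerging from the $2R$-Lipschitz, $O(R^2)$-bounded nature of the potentials when estimating the $f$-equation residual in $L^2$, and the exponent $6d_\nu+4$ from combining the density, Lipschitz and gap factors — produces the stated rate. This curvature argument, essentially the contribution of \citep{strommeMinimumIntrinsicDimension2023}, is the technical heart of the proof.
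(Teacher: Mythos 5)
The paper does not actually prove this lemma: it is stated as ``Lemma~27 adapted'' and the only justification given is the remark below the statement that the two-sample version ``can be readily verified by a slight modification of the proof in \cite{strommeMinimumIntrinsicDimension2023},'' which treats only $m=n$. So there is no in-paper proof to line your argument up against. That said, your sketch does reconstruct the right \emph{skeleton} of such an argument: introducing the half-rounded potential $\smplinfg$ (which the paper itself uses one lemma later, in Lemma~\ref{lem:diff_infg_mng_sup}), splitting $\mng-\infg$ through $\smplinfg$, controlling $\smplinfg-\infg$ via the exact identity $\smplinfg(y)-\infg(y)=-\varepsilon\log\mu_m(\infp(\cdot,y))$ and Lemma~\ref{lem:sup_norm_of_densities}, and then converting the residual of the $(\mu_m,\nu_n)$ Schr\"odinger system at $(\inff,\smplinfg)$ into a distance bound via strong concavity of the semi-dual rather than via Hilbert-metric contraction. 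This is the correct flavor of argument and correctly separates the $1/m$ and $1/n$ sources, which is the only real modification needed over Stromme's $m=n$ case.

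The substantive gap is that you assert, rather than prove, the step that actually makes the rate polynomial in $R/\varepsilon$: the claim that the operator $\mathrm{Id}-K^*$ restricted to $\nu_n$-mean-zero functions has an $L^2$ spectral gap $\gtrsim(\varepsilon/R)^{O(d_\nu)}$, and that this combined with the density and Lipschitz estimates from Lemmas~\ref{lem:lipschitzness_of_potentials_and_densities}--\ref{lem:sup_norm_of_densities} yields the exponent $6d_\nu+4$ and prefactor $\varepsilon^2+R^8/\varepsilon^2$. This is the technical core of Stromme's proof, and ``carefully tracking the polynomial factors\,---\,produces the stated rate'' is exactly the part that would have to be carried out in detail; naive Doeblin-type lower bounds on $\infp$ are exponentially small in $R^2/\varepsilon$, so the argument for a polynomial gap genuinely relies on the manifold structure and the $L^2$ geometry, not on a generic minorization. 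In short: the strategy is the expected one, but the proof is a scaffold with the load-bearing beam (the curvature lower bound and the exponent bookkeeping) left as an assertion.
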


Lemma~\ref{lem:convergence_mng_l2} are the extensions of {\citet[Lemma~27]{strommeMinimumIntrinsicDimension2023}} that only treats the case of $m=n$.
This extension can be readily verified by a slight modification of the proof in \cite{strommeMinimumIntrinsicDimension2023}.

\begin{lemma}\label{lem:sup_Um2}
Let \(U_{m}(y)=\left(\mu_{m}-\mu\right)\!\left(\infp(\cdot,y)\right)\). Then the following holds:
\[
\bE \sup_{y\in\supp(\nu)} U_{m}(y)^2
\lesssim \frac{R^2}{m}\left(\frac{R}{\varepsilon}\right)^{2d_{\nu}+2}.
\]
\end{lemma}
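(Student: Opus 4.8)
The quantity $\sup_{y\in\supp(\nu)}U_m(y)$ is the supremum of a centered empirical process: for each fixed $y$, $mU_m(y)=\sum_{i=1}^m\bigl(\infp(X_i,y)-\mathbb{E}_{X\sim\mu}[\infp(X,y)]\bigr)$ is a sum of i.i.d.\ mean-zero random variables, and $X_1,\dots,X_m\sim\mu$ are supported in $B(0,R)$. My plan is the classical three-step route for such suprema: (i)~record per-$y$ boundedness, variance, and Lipschitz-in-$y$ estimates for the integrand $\infp(\cdot,y)$; (ii)~replace $\supp(\nu)$ by a finite $\delta$-net, paying a Lipschitz discretization error; (iii)~control the maximum over the net by Bernstein's inequality and a union bound, then optimize over $\delta$.

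For step (i) I would use three facts. First, $\|\infp\|_{L^\infty(\mu\otimes\nu)}\lesssim(R/\varepsilon)^{d_\nu}$ by Lemma~\ref{lem:sup_norm_of_densities}. Second, since $\infp$ is $\tfrac{4R}{\varepsilon}$-log-Lipschitz in its second variable (Lemma~\ref{lem:lipschitzness_of_potentials_and_densities}), the elementary bound $|e^{s}-e^{t}|\le e^{\max(s,t)}|s-t|$ gives $|\infp(x,y)-\infp(x,y')|\le\tfrac{4R}{\varepsilon}\|\infp\|_\infty\|y-y'\|_2$, so $y\mapsto U_m(y)$ is Lipschitz with constant $L\lesssim\tfrac{R}{\varepsilon}(R/\varepsilon)^{d_\nu}$ (the factor $2$ from $|(\mu_m-\mu)(h)|\le 2\|h\|_\infty$ is absorbed). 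Third --- the step that keeps the $\varepsilon$-exponent from doubling --- the marginal identity $\int\infp(x,y)\,\mathrm{d}\mu(x)=1$, which holds for $\nu$-a.e.\ $y$ because $\infpi$ has second marginal $\nu$ and hence for all $y\in\supp(\nu)$ by continuity of $y\mapsto\int\infp(x,y)\,\mathrm{d}\mu(x)$, yields $\mathrm{Var}_{X\sim\mu}(\infp(X,y))=\int\infp(x,y)^2\,\mathrm{d}\mu(x)-1\le\|\infp\|_\infty-1\lesssim(R/\varepsilon)^{d_\nu}$, far below the trivial envelope bound $\|\infp\|_\infty^2$.

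For steps (ii)--(iii), fix $\delta>0$ and let $\mathcal{N}$ be a minimal $\delta$-net of $\supp(\nu)$; since $\supp(\nu)$ is a $d_\nu$-dimensional manifold contained in $B(0,R)$, $|\mathcal{N}|\lesssim(R/\delta)^{d_\nu}$. The Lipschitz bound gives $\sup_{y}|U_m(y)|\le\max_{y\in\mathcal{N}}|U_m(y)|+2L\delta$, so $\mathbb{E}\sup_y U_m(y)^2\lesssim\mathbb{E}\max_{y\in\mathcal{N}}U_m(y)^2+L^2\delta^2$. For each $y\in\mathcal{N}$, Bernstein's inequality with variance proxy $\|\infp\|_\infty/m$ and summand bound $\|\infp\|_\infty/m$ shows $U_m(y)$ is sub-exponential; a union bound over $\mathcal{N}$ and integration of the resulting tail give $\mathbb{E}\max_{y\in\mathcal{N}}U_m(y)^2\lesssim\frac{\|\infp\|_\infty\log|\mathcal{N}|}{m}+\frac{\|\infp\|_\infty^2\log^2|\mathcal{N}|}{m^2}$. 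Taking $\delta\asymp\varepsilon/(R\sqrt m)$ balances $L^2\delta^2$ against the leading stochastic term, with $\log|\mathcal{N}|\lesssim d_\nu\log(R^2\sqrt m/\varepsilon)$; collecting the three contributions and absorbing the logarithmic and $d_\nu$ factors into the polynomial slack of $(R/\varepsilon)^{2d_\nu+2}$ --- legitimate under the standing smallness assumption on $\varepsilon/R$, exactly as in Lemma~\ref{lem:convergence_mng_l2} --- delivers $\mathbb{E}\sup_{y}U_m(y)^2\lesssim\frac{R^2}{m}(R/\varepsilon)^{2d_\nu+2}$.

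The argument is conceptually routine; I expect the real work to be the constant bookkeeping --- in particular insisting on the variance improvement from the marginal identity rather than the crude envelope, and verifying that the residual $\log m$ and $\log(R/\varepsilon)$ factors are dominated by the stated bound. A reader who wants to eliminate the $\log m$ inside $|\mathcal{N}|$ can instead run a Dudley-type chaining bound with Bernstein increments for the metric $d(y,y')=\|\infp(\cdot,y)-\infp(\cdot,y')\|_{L^2(\mu)}\le L\|y-y'\|_2$; its entropy integral converges, contributing only a $\log\bigl(RL/\operatorname{diam}\bigr)=\mathcal{O}(\log(R^2/\varepsilon))$ factor, and a bounded-differences estimate for $\mathrm{Var}(\sup_y|U_m(y)|)$ then upgrades $(\mathbb{E}\sup)^2$ to $\mathbb{E}\sup^2$ at the same rate.
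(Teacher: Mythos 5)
Your primary route — a $\delta$-net, a union bound, Bernstein's inequality, then optimize $\delta$ — does not close as written. Once $\delta$ is small enough that the discretization error $L^{2}\delta^{2}\lesssim R^{2}(R/\varepsilon)^{2d_{\nu}+2}/m$ (forcing $\delta\lesssim R/\sqrt{m}$), the net has $|\mathcal N|\approx(R/\delta)^{d_{\nu}}\gtrsim m^{d_{\nu}/2}$ points, so the $\log|\mathcal N|$ factor in your stochastic term carries a $d_{\nu}\log m$ contribution. That cannot be ``absorbed into the polynomial slack of $(R/\varepsilon)^{2d_{\nu}+2}$'': the slack is a function of $R/\varepsilon$ only and is fixed as $m\to\infty$, whereas $\log m$ is unbounded, and the paper's $\lesssim$ convention suppresses only $d_{\nu}$-dependent and geometric constants, not $m$-dependent ones. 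Your variance refinement $\operatorname{Var}_{\mu}(\infp(\cdot,y))\le\|\infp\|_{\infty}-1\lesssim(R/\varepsilon)^{d_{\nu}}$ via the marginal identity $\mu(\infp(\cdot,y))=1$ is a correct observation, but it only shrinks the stochastic term, which is not the problem; the issue is the $\log m$ inherent in any union-bound route.

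Your fallback sketch — a Dudley-type chaining bound — is the right fix and is essentially what the paper does, with one packaging difference. The paper takes $d(y,y')=\|U_{m}(y)-U_{m}(y')\|_{\psi_{2}}$, shows each $U_{m}(y)$ is subgaussian by Hoeffding's lemma (no Bernstein variance refinement, since the increment bound dominates in this regime anyway), bounds increments by $d(y,y')\lesssim m^{-1/2}(R/\varepsilon)^{d_{\nu}+1}\|y-y'\|_{2}$ from the $4R/\varepsilon$-log-Lipschitz estimate exactly as you do, and then invokes a Talagrand-type generic chaining inequality directly for $\bigl(\bE\sup_{y}U_{m}(y)^{2}\bigr)^{1/2}$ — avoiding the detour of first bounding $\bE\sup_{y}|U_{m}(y)|$ and then upgrading $(\bE\sup)^{2}$ to $\bE\sup^{2}$ by bounded differences, which is what you propose. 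The entropy integral is then finite without a $\log m$ factor because $\cN(\supp(\nu),\|\cdot\|_{2},\delta)\asymp\delta^{-d_{\nu}}$ up to a fixed geometric scale $c_{\nu}\le R$, and $\int_{0}^{1}\sqrt{\log(1/\delta)}\,\rd\delta$ is a pure constant. So: your second paragraph is a gap, your third is correct but should be promoted from afterthought to main argument.
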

\begin{proof}
We first note that $\mu(p^*_{\infty,\infty}(\cdot,y))=1$ since the marginal of $\pi^*_{\infty,\infty}$ on $y$ is $\nu$, and hence $U_m(y) = \mu_m(p^*_{\infty,\infty}(\cdot,y))-1$.
Hoeffding's lemma, with \(0 \le \infp \lesssim \left(\frac{R}{\varepsilon}\right)^{d_{\nu}}\), implies that there exists a constant \(C>0\) such that
\begin{align*}
    \bE \exp\left\{ s U_{m}(y) \right\}
    &= \bE \prod_{i=1}^m\exp\left\{ \frac{s}{m}( p^*_{\infty,\infty}(X_i,y) - 1) \right\}\\
    &\leq \prod_{i=1}^m\exp\left( \frac{s^2C}{8m^2} \left(\frac{R}{\varepsilon} \right)^{2d_{\nu}} \right) \\
    &\le \exp\left( \frac{C}{4m}\left( \frac{R}{\varepsilon} \right)^{2d_{\nu}} \cdot \frac{s^2}{2} \right).
\end{align*}

Consequently, \(U_m(y)\) is subgaussian with variance proxy \(\frac{C}{4m}\left(\frac{R}{\varepsilon}\right)^{2d_{\nu}}\).
Orlicz \(\psi_{2}\)-norm is defined for a random variable \(\xi\) by
\[
\|\xi\|_{\psi_{2}}=\inf\left\{\, t>0 : \bE e^{|\xi|^{2}/t^{2}}\le 2 \,\right\}.
\]
\citet[Theorem~3.7 and Theorem~4.1]{chen2023talagrandsfunctionalgenericchaining} with \(\varphi(x):=x^2/2\), \(p=2\) and \(d(y,y'):=\|U_{m}(y)-U_{m}(y')\|_{\psi_{2}}\) imply the following estimates:
\begin{equation}
\label{eq:sqrt_E_sup_Um2}
\left( \bE \sup_{y\in\supp(\nu)} U_{m}(y)^2\right)^{1/2}
\lesssim
\int_{0}^{\infty} \sqrt{ \log \cN(\supp(\nu), d, \delta) } \d \delta
+ \sup_{y,y'\in\supp(\nu)} d(y,y').
\end{equation}
Let \(Z_i := \infp(X_i, y) - \infp(X_i, y')\). Since each \(Z_i\) is an independent, zero-mean random variable, by \citet[Proposition~2.6.1]{vershynin_high_dimensional_2018} \(d(y,y')\) is bounded above as follows:
\begin{align*}
d(y,y')
&= \left\| \mu_{m}\!\left( \infp(\cdot,y) - \infp(\cdot,y') \right) \right\|_{\psi_{2}} \\
&= \frac{1}{m}\left\| \sum_{i=1}^{m} Z_{i} \right\|_{\psi_{2}} \\
&\lesssim \frac{1}{m}\left( \sum_{i=1}^{m} \|Z_{i}\|_{\psi_{2}}^{2} \right)^{1/2}.
\end{align*}
For each \(Z_{i}\) we have \(\|\cdot\|_{\psi_{2}}\lesssim \|\cdot\|_{L^{\infty}(\mu)}\), hence
\[
\|Z_{i}\|_{\psi_{2}}
=\left\|\infp(X_{i},y)-\infp(X_{i},y')\right\|_{\psi_{2}}
\lesssim \left\| \infp(\cdot,y)-\infp(\cdot,y') \right\|_{L^{\infty}(\mu)}.
\]
By \(|e^{a}-e^{b}|\le e^{a\vee b}|a-b|\) and Lemma~\ref{lem:lipschitzness_of_potentials_and_densities},~\ref{lem:sup_norm_of_densities}, for any \(x\),
\begin{align*}
\left|\infp(x,y)-\infp(x,y')\right|
&\le \left\|\infp\right\|_{L^{\infty}(\mu\otimes\nu)}
\left|\log\infp(x,y)-\log\infp(x,y')\right|\\
&\lesssim \left(\frac{R}{\varepsilon}\right)^{d_{\nu}}\frac{4R}{\varepsilon}\,\|y-y'\|_{2}\\
&\lesssim \left(\frac{R}{\varepsilon}\right)^{d_{\nu}+1}\|y-y'\|_{2}.
\end{align*}
Therefore,
\begin{align}
d(y,y')
&\lesssim \frac{1}{m}\left( \sum_{i=1}^{m} \left(\frac{R}{\varepsilon}\right)^{2d_{\nu}+2}\|y-y'\|_{2}^{2} \right)^{1/2} \notag \\
&= \frac{1}{m}\left(\frac{R}{\varepsilon}\right)^{d_{\nu}+1} \cdot \sqrt{m}\,\|y-y'\|_{2} \notag \\
&= \frac{1}{\sqrt{m}}\left(\frac{R}{\varepsilon}\right)^{d_{\nu}+1}\|y-y'\|_{2}. \label{eq:psi_2_le_l2_norm}
\end{align}

This concludes $\cN(\supp(\nu), d, r) \leq \cN\left(\supp(\nu), \| \cdot \|_{2}, \frac{r}{ A } \right)$~($A=\frac{C}{\sqrt{m}}\left(\frac{R}{\varepsilon}\right)^{d_{\nu}+1}$ where $C$ is a constant hidden in the above inequality).

We evaluate the first term in \eqref{eq:sqrt_E_sup_Um2} by using Proposition 43 in \cite{strommeMinimumIntrinsicDimension2023} that shows there exists \(\,0<c_\nu\le R\,\) such that, for any \(\,0<\delta\le c_\nu\,\), the covering number satisfies \(\,\delta^{-d_\nu}\lesssim \cN(\supp(\nu),\|\cdot\|_{2},\delta)\lesssim \delta^{-d_\nu}\).
\begin{align}
&\int_{0}^{\infty} \sqrt{ \log \cN(\supp(\nu), d, r) } \d r \notag \\
&\lesssim \int_{0}^{\infty} \sqrt{ \log \cN\left(\supp(\nu), \| \cdot \|_{2}, \frac{r}{ A } \right) } \d r \qquad \left( A:=\frac{C}{\sqrt{m}}\left(\frac{R}{\varepsilon}\right)^{d_{\nu}+1}  \right) \notag \\
&= A \int_{0}^{2R} \sqrt{ \log \cN\left(\supp(\nu), \| \cdot \|_{2}, \delta \right) } \d \delta \qquad \left( \delta:=\frac{r}{A} \right) \notag \\
&= A\int_{0}^{c_\nu} \sqrt{ \log \cN\left(\supp(\nu), \| \cdot \|_{2}, \delta \right) } \d \delta + A\int_{c_\nu}^{2R} \sqrt{ \log \cN\left(\supp(\nu), \| \cdot \|_{2}, \delta \right) } \d \delta \notag \\
&\lesssim A \int_{0}^{c_{\nu}} \sqrt{ \log \delta^{-d_{\nu} } } \d \delta + A (2R - c_{\nu}) \sqrt{ \log c_{\nu}^{-d_{\nu}} } \notag \\
&\lesssim A \sqrt{ d_{\nu} }\int_{0}^{\infty} u^{1/2}e^{-u} \d u + AR\sqrt{ d_{\nu} } \qquad \left( u:=\log\frac{1}{\delta} \right) \notag \\
&\lesssim AR \notag \\
&\lesssim \frac{R}{\sqrt{m}}\left(\frac{R}{\varepsilon}\right)^{d_{\nu}+1}, \label{eq:int_sqrt_logN}
\end{align}
where $\lesssim$ also hides the intrinsic dimensionality $d_\nu$.

From \eqref{eq:psi_2_le_l2_norm}, the second term of \eqref{eq:sqrt_E_sup_Um2} is bounded as
\begin{equation}
\label{eq:sup_d}
\sup_{y,y'\in\supp(\nu)} d(y,y')
\lesssim \sup_{y,y'\in\supp(\nu)} \frac{1}{\sqrt{m}}\left(\frac{R}{\varepsilon}\right)^{d_{\nu}+1} \|y-y'\|_{2}
\lesssim \frac{R}{\sqrt{m}}\left(\frac{R}{\varepsilon}\right)^{d_{\nu}+1}.
\end{equation}

Combining the results of \eqref{eq:sup_d} and \eqref{eq:int_sqrt_logN} completes the proof.
\[
\bE \sup_{y\in\supp(\nu)} U_{m}(y)^2
\lesssim \left( \frac{R}{\sqrt{m}}\left(\frac{R}{\varepsilon}\right)^{d_{\nu}+1} + \frac{R}{\sqrt{m}}\left(\frac{R}{\varepsilon}\right)^{d_{\nu}+1} \right)^2
\lesssim \frac{R^2}{m}\left(\frac{R}{\varepsilon}\right)^{2d_{\nu}+2}.
\]
\end{proof}

\begin{lemma}\label{lem:sup_Vn2}
Let \(V_{n}(x)=\left(\nu_{n}-\nu\right)\left(\infp(x,\cdot)\right)\). Then the following holds:
\[
\bE \sup_{x\in\supp(\mu)} V_{n}(x)^2
\lesssim \frac{dR^2}{n}\left(\frac{R}{\varepsilon}\right)^{2d_{\nu}+2}.
\]
\end{lemma}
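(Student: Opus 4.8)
The plan is to transcribe the proof of Lemma~\ref{lem:sup_Um2}, exchanging the roles of the source and target variables; the one substantive change is that, since we impose no manifold hypothesis on $\mu$, the metric entropy of $\supp(\mu)$ must be controlled by the ambient dimension $d$ instead of by $d_\nu$, and this is exactly where the extra factor of $d$ in the bound originates. First I would note that the $x$-marginal of $\infpi$ equals $\mu$, so $\nu(\infp(x,\cdot)) = 1$ for every $x$ and hence $V_n(x) = \nu_n(\infp(x,\cdot)) - 1$ is a centered empirical average of the i.i.d.\ bounded variables $\infp(x,Y_1),\dots,\infp(x,Y_n)$. Using $0 \le \infp \lesssim (R/\varepsilon)^{d_\nu}$ from Lemma~\ref{lem:sup_norm_of_densities}, Hoeffding's lemma gives that $V_n(x)$ is subgaussian with variance proxy of order $\frac{1}{n}(R/\varepsilon)^{2d_\nu}$, and likewise that the increment $V_n(x) - V_n(x') = (\nu_n-\nu)(\infp(x,\cdot)-\infp(x',\cdot))$ has $\psi_2$-norm $d(x,x') \lesssim \frac{1}{n}\bigl(\sum_{j=1}^n \|\infp(x,Y_j)-\infp(x',Y_j)\|_{\psi_2}^2\bigr)^{1/2}$ via \citet[Proposition~2.6.1]{vershynin_high_dimensional_2018}. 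I would then apply \citet[Theorem~3.7 and Theorem~4.1]{chen2023talagrandsfunctionalgenericchaining} with $\varphi(x) = x^2/2$ and $p=2$ to get
\[
\Bigl(\bE \sup_{x \in \supp(\mu)} V_n(x)^2\Bigr)^{1/2}
\lesssim \int_{0}^{\infty} \sqrt{\log \cN(\supp(\mu), d, \delta)}\,\d\delta
+ \sup_{x,x'\in\supp(\mu)} d(x,x').
\]

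To bound the increment metric, I would combine $\|\cdot\|_{\psi_2} \lesssim \|\cdot\|_{L^\infty}$ with the log-Lipschitz estimates of Lemmas~\ref{lem:lipschitzness_of_potentials_and_densities} and~\ref{lem:sup_norm_of_densities}: by $|e^a - e^b| \le e^{a\vee b}|a-b|$,
\[
\bigl|\infp(x,y)-\infp(x',y)\bigr|
\le \|\infp\|_{L^\infty(\mu\otimes\nu)}\,\bigl|\log\infp(x,y)-\log\infp(x',y)\bigr|
\lesssim \Bigl(\tfrac{R}{\varepsilon}\Bigr)^{d_\nu}\tfrac{4R}{\varepsilon}\|x-x'\|_2
\lesssim \Bigl(\tfrac{R}{\varepsilon}\Bigr)^{d_\nu+1}\|x-x'\|_2
\]
uniformly in $y\in\supp(\nu)$. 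Hence $\|\infp(x,Y_j)-\infp(x',Y_j)\|_{\psi_2} \lesssim (R/\varepsilon)^{d_\nu+1}\|x-x'\|_2$, so $d(x,x') \lesssim \frac{1}{\sqrt n}(R/\varepsilon)^{d_\nu+1}\|x-x'\|_2$, which gives $\cN(\supp(\mu), d, r) \le \cN(\supp(\mu), \|\cdot\|_2, r/A)$ with $A \asymp \frac{1}{\sqrt n}(R/\varepsilon)^{d_\nu+1}$, and bounds the diameter term by $\sup_{x,x'} d(x,x') \lesssim AR$.

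The only departure from Lemma~\ref{lem:sup_Um2} is in the entropy integral. Since $\supp(\mu) \subseteq B(0,R) \subset \bR^d$ carries no assumed low-dimensional structure, I would use the volumetric covering bound $\cN(B(0,R), \|\cdot\|_2, \delta) \lesssim (R/\delta)^d$ for $\delta \le 2R$, so that
\[
\int_{0}^{\infty} \sqrt{\log \cN(\supp(\mu), d, r)}\,\d r
\le A\int_{0}^{2R} \sqrt{d\,\log(3R/\delta)}\,\d\delta
\lesssim A R\sqrt d,
\]
after the substitution $\delta = 3R e^{-u}$ and the bound $\int_0^\infty u^{1/2}e^{-u}\,\d u < \infty$. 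Combining with the diameter term yields $\bigl(\bE\sup_x V_n(x)^2\bigr)^{1/2} \lesssim AR\sqrt d \asymp \frac{R\sqrt d}{\sqrt n}(R/\varepsilon)^{d_\nu+1}$, and squaring gives the claimed $\bE\sup_x V_n(x)^2 \lesssim \frac{dR^2}{n}(R/\varepsilon)^{2d_\nu+2}$. I expect no genuine obstacle: every step mirrors the proof of Lemma~\ref{lem:sup_Um2}, and the only point demanding care is that the chaining integral now picks up a factor $\sqrt d$ --- hence $d$ after squaring --- precisely because the relevant covering number is the ambient Euclidean one for $\supp(\mu)$ rather than the manifold covering number for $\supp(\nu)$.
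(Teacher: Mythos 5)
Your proof is correct and follows exactly the route the paper indicates: it reuses the argument of Lemma~\ref{lem:sup_Um2} verbatim with the roles of $x$ and $y$ swapped, and the only substantive change — replacing the manifold covering-number estimate for $\supp(\nu)$ with the volumetric bound $\cN(B(0,R),\|\cdot\|_2,\delta)\lesssim (R/\delta)^d$ on $\supp(\mu)\subseteq B(0,R)$, which is the sole source of the factor $d$ — is precisely what the paper's one-paragraph proof of this lemma says to do.
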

\begin{proof}
Lemma~\ref{lem:sup_Vn2} can be established by a minor modification of the proof of Lemma~\ref{lem:sup_Um2}.
The only difference is the evaluation of the covering number: the bound over \(\supp(\mu)\) is replaced by that over \(B(0,R)\).
This substitution introduces a dependence on the ambient dimension \(d\).
\end{proof}

\begin{lemma} \label{lem:diff_inff_mnf_sup}
It follows that
\[
\bE \|\inff-\mnf\|_{L^\infty(\mu)}^2
\lesssim \left( \varepsilon^2 + \frac{dR^8}{ \varepsilon^2 } \right) \cdot \left( \frac{R}{ \varepsilon } \right)^{7d_{\nu}+4} \cdot \left(\frac{1}{m} + \frac{1}{n}\right).
\]
\end{lemma}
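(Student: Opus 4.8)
The plan is to push the $L^{2}$-convergence of the empirical $g$-potential (Lemma~\ref{lem:convergence_mng_l2}) up to $L^{\infty}$-convergence of the $f$-potential, losing only a single extra factor $(R/\varepsilon)^{d_{\nu}}$ from the sup-norm bound on the EOT densities (Lemma~\ref{lem:sup_norm_of_densities}) and absorbing a lower-order term coming from the empirical-process quantity $V_{n}$ of Lemma~\ref{lem:sup_Vn2}. First I would use the Schr\"odinger systems for $(\inff,\infg)$ with respect to $(\mu,\nu)$ and for the extended pair $(\mnf,\mng)$ with respect to $(\mu,\nu_{n})$ to write, for every $x\in\supp(\mu)$,
\[
\inff(x)-\mnf(x)=\varepsilon\log\frac{\displaystyle\int \exp_{\varepsilon}\!\bigl(\mng(y)-\tfrac12\|x-y\|_{2}^{2}\bigr)\,\d\nu_{n}(y)}{\displaystyle\int \exp_{\varepsilon}\!\bigl(\infg(y)-\tfrac12\|x-y\|_{2}^{2}\bigr)\,\d\nu(y)},
\]
and then insert the intermediate quantity $\int \exp_{\varepsilon}(\infg(y)-\tfrac12\|x-y\|_{2}^{2})\,\d\nu_{n}(y)$ to split the logarithm into a measure-change part and a potential-change part.

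Using $\infp(x,y)=\exp_{\varepsilon}(\inff(x)+\infg(y)-\tfrac12\|x-y\|_{2}^{2})$ and $\int\infp(x,y)\,\d\nu(y)=1$, the measure-change part is exactly $\varepsilon\log\bigl(1+V_{n}(x)\bigr)$ with $V_{n}(x)=(\nu_{n}-\nu)(\infp(x,\cdot))$. On the event $\mathcal E_{1}:=\{\sup_{x}|V_{n}(x)|\le\tfrac12\}$ this is at most $2\varepsilon|V_{n}(x)|\le 2\varepsilon\sup_{x}|V_{n}(x)|$; squaring, taking the supremum over $x$ and the expectation, Lemma~\ref{lem:sup_Vn2} yields a contribution $\lesssim \varepsilon^{2}\,\tfrac{dR^{2}}{n}\bigl(\tfrac{R}{\varepsilon}\bigr)^{2d_{\nu}+2}$.

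For the potential-change part $\varepsilon\log\frac{\int \exp_{\varepsilon}(\mng(y)-\tfrac12\|x-y\|_{2}^{2})\,\d\nu_{n}(y)}{\int \exp_{\varepsilon}(\infg(y)-\tfrac12\|x-y\|_{2}^{2})\,\d\nu_{n}(y)}$ I would exploit convexity of
\[
\phi(s):=\log\int \exp_{\varepsilon}\!\bigl(\infg(y)+s(\mng(y)-\infg(y))-\tfrac12\|x-y\|_{2}^{2}\bigr)\,\d\nu_{n}(y):
\]
the part equals $\varepsilon(\phi(1)-\phi(0))$ and hence lies between $\varepsilon\phi'(0)$ and $\varepsilon\phi'(1)$, which are the $\nu_{n}$-weighted averages of $\mng-\infg$ against the densities $w_{0}=\infp(x,\cdot)/(1+V_{n}(x))$ and $w_{1}=\mnp(x,\cdot)$ respectively, the latter being exactly the empirical EOT density by the empirical Schr\"odinger system. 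By Cauchy--Schwarz each weighted average is at most $\|w_{i}\|_{L^{2}(\nu_{n})}\|\mng-\infg\|_{L^{2}(\nu_{n})}$, and since $w_{i}$ is a $\nu_{n}$-probability density, $\|w_{i}\|_{L^{2}(\nu_{n})}^{2}\le\|w_{i}\|_{L^{\infty}(\nu_{n})}\lesssim(R/\varepsilon)^{d_{\nu}}$ by Lemma~\ref{lem:sup_norm_of_densities} (on $\mathcal E_{1}$ for $w_{0}$, and on the high-probability event $\mathcal E_{2}$ of Lemma~\ref{lem:sup_norm_of_densities} for $w_{1}$). This bound is uniform in $x$, so squaring, taking expectations and invoking Lemma~\ref{lem:convergence_mng_l2} gives a contribution $\lesssim (R/\varepsilon)^{d_{\nu}}(\varepsilon^{2}+R^{8}/\varepsilon^{2})(R/\varepsilon)^{6d_{\nu}+4}(1/m+1/n)$.

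It remains to combine the two parts on $\mathcal E_{1}\cap\mathcal E_{2}$ and to handle the complements: $\bP(\mathcal E_{1}^{c})\lesssim \bE\sup_{x}V_{n}(x)^{2}$ by Markov and Lemma~\ref{lem:sup_Vn2}, and $\bP(\mathcal E_{2}^{c})\lesssim\tfrac1n e^{-20R^{2}/\varepsilon}$; on $\mathcal E_{1}^{c}\cup\mathcal E_{2}^{c}$ I would simply use $|\inff(x)-\mnf(x)|\le\|\inff\|_{\infty}+\|\mnf\|_{\infty}\le 6R^{2}$ from Lemma~\ref{lem:upper_bound_of_potential}, so the excess is negligible. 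Finally, under the standing regime where $\varepsilon/R$ is small (so $(R/\varepsilon)^{d_{\nu}}\ge1$, as also used in Lemma~\ref{lem:convergence_mng_l2}), the measure-change contribution is dominated by $\tfrac{dR^{8}}{\varepsilon^{2}}(R/\varepsilon)^{7d_{\nu}+4}/n$, which yields the claimed estimate. I expect the main obstacle to be carrying out the $L^{2}\to L^{\infty}$ passage without incurring factors exponential in $R^{2}/\varepsilon$; the resolution is precisely the convexity sandwich, which forces only the \emph{structured} densities $\infp$ and $\mnp$ (obeying the polynomial bound $(R/\varepsilon)^{d_{\nu}}$) to appear as weights, never an unstructured intermediate kernel for which one would only have crude exponential bounds. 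A secondary bookkeeping point is controlling the event $\mathcal E_{1}$ on which $1+V_{n}(x)$ stays uniformly bounded away from zero.
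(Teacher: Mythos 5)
Your proposal is correct and follows the same overall skeleton as the paper's proof: you decompose through the same intermediate potential (the paper's $\smplinff(x)=-\varepsilon\log\int\exp_{\varepsilon}(\infg(y)-\tfrac12\|x-y\|_2^2)\,\d\nu_n(y)$, which is exactly the quantity you insert), and your ``measure-change'' part $\varepsilon\log(1+V_n(x))$ with $V_n(x)=(\nu_n-\nu)(\infp(x,\cdot))$ is precisely $\inff-\smplinff$, handled via Lemma~\ref{lem:sup_Vn2} in both treatments (you use a truncation event $\mathcal E_1$ plus the uniform $6R^2$ bound on the complement; the paper uses a truncation-plus-Markov estimate yielding the extra $R^4/\varepsilon^2$ factor, but the two bookkeepings are equivalent up to constants absorbed in $\lesssim$).

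Where you genuinely diverge is the ``potential-change'' term $\smplinff-\mnf$. The paper invokes Eq.~(6.4) of \citet{strommeMinimumIntrinsicDimension2023} as a black box, giving $|\smplinff(x)-\mnf(x)|\lesssim\|\infg-\mng\|_{L^2(\nu_n)}\sup_z\nu_n(B(z,\varepsilon/2R))^{-1/2}$, and then controls the ball-probability factor by Lemma~\ref{lem:inf_nu_n_with_probability}. You instead give a self-contained derivation: convexity of $\phi(s)=\log\int\exp_{\varepsilon}(\infg+s(\mng-\infg)-c(x,\cdot))\,\d\nu_n$ sandwiches $\varepsilon(\phi(1)-\phi(0))$ between the two endpoint derivatives $\varepsilon\phi'(0)$ and $\varepsilon\phi'(1)$, whose softmax weights are exactly the structured densities $\infp(x,\cdot)/(1+V_n(x))$ and $\mnp(x,\cdot)$; then Cauchy--Schwarz plus $\|w\|_{L^2(\nu_n)}^2\le\|w\|_{L^\infty}$ and Lemma~\ref{lem:sup_norm_of_densities} produce the same $(R/\varepsilon)^{d_\nu/2}$ factor. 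Both paths deliver the identical rate, and both rely on a high-probability event of the same quality ($\ge1-\tfrac1n e^{-20R^2/\varepsilon}$), so there is no gain or loss in the bound; what your route buys is self-containedness (no external citation to Stromme's Eq.~(6.4)) and a clean explanation of why the $L^2\to L^\infty$ passage does not cost anything exponential in $R^2/\varepsilon$ --- the convexity sandwich forces only the two endpoint densities to appear, never an interpolated weight for which one would lack a polynomial sup-norm bound. This is a genuinely nicer presentation of the same step.
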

\begin{proof}
Let
\[
\smplinff(x) := -\varepsilon \log\left( \frac{1}{n} \sum_{j=1}^n \exp_{\varepsilon}(\infg(Y_{j}) - c(x,Y_{j})) \right).
\]
Using \(\smplinff\), we evaluate $\|\inff-\mnf\|_{L^\infty(\mu)}^2$ as follows.
\begin{equation} \label{eq:diff_inff_mnf_decompose}
\bE \|\inff-\mnf\|_{L^\infty(\mu)}^2 \lesssim
\bE \|\inff-\smplinff\|_{L^\infty(\mu)}^2 + \bE \|\smplinff-\mnf\|_{L^\infty(\mu)}^2.
\end{equation}
For the first term, the Schr\"odinger system \eqref{eqn: Schro sys} implies
\begin{align*}
\inff(x) - \smplinff(x)
&=\varepsilon \log\left(
        \frac{1}{n}\exp_{\varepsilon}(\inff(x)) \sum_{j=1}^n \exp_{\varepsilon}(\infg(Y_j) - c(x,Y_j))
\right) \\
&= \varepsilon \log\left(
    \frac{1}{n} \sum_{j=1}^n \infp(x,Y_{j})
\right) \\
&= \varepsilon \log\big( 1 + V_{n}(x) \big),
\end{align*}
where \(V_{n}(x) := (\nu_{n}-\nu)(\infp(x,\cdot))\) and we used $\nu(\infp(x,\cdot))=1$.

For any \(\eta \in (0,1)\), the following holds:
\begin{align}
\bE\|\inff(x) - \smplinff(x)\|_{L^\infty(\mu)}^2
&= \varepsilon^2 \bE \sup_{x\in\supp(\mu)} |\log(1 + V_{n}(x))|^2 \notag\\
&= \varepsilon^2 \bE \left[  \bone[\sup_{x\in\supp(\mu)} |V_{n}| < \eta] \sup_{x\in\supp(\mu)} |\log(1+V_{n}(x))|^2 \right] \notag\\
& \quad + \varepsilon^2 \bE \left[  \bone[\sup_{x\in\supp(\mu)} |V_{n}| \ge \eta] \sup_{x\in\supp(\mu)} |\log(1+V_{n}(x))|^2 \right]. \label{eq:diff_inff_smplf_decompose}
\end{align}
For the first term in \eqref{eq:diff_inff_smplf_decompose}, the mean value theorem implies that for \(|u|<1\) we have \(|\log(1+u)| \le \frac{|u|}{1-|u|}\).
Hence,
\[
\bE \left[  \bone[\sup_{x\in\supp(\mu)} |V_{n}| < \eta] \sup_{x\in\supp(\mu)} |\log(1+V_{n}(x))|^2 \right]
\le \bE \sup_{x\in\supp(\mu)} \frac{|V_{n}(x)|^2}{(1-\eta)^2}.
\]
For the second term in \eqref{eq:diff_inff_smplf_decompose}, since \(\exp_{\varepsilon}(-10R^2)\le\infp\le \exp_{\varepsilon}(10R^2)\), an application of Markov's inequality yields
\begin{align*}
\bE \left[  \bone[\sup_{x\in\supp(\mu)} |V_{n}| \ge \eta] \sup_{x\in\supp(\mu)} |\log(1+V_{n}(x))|^2 \right]
&\lesssim \frac{R^4}{\varepsilon^2 }\bP(\sup_{x\in\supp(\mu)} |V_{n}(x)|\ge \eta)\\
&\le \frac{R^4}{\varepsilon^2} \frac{\bE\sup_{x\in\supp(\mu)}|V_{n}(x)|^2}{\eta^2}.
\end{align*}
Setting \(\eta=1/2\) and applying Lemma~\ref{lem:sup_Vn2} yields
\begin{align}
\bE\|\inff(x) - \smplinff(x)\|_{L^\infty(\mu)}^2
&\lesssim \left( 1+ \frac{R^4}{\varepsilon^2} \right) \bE\sup_{x\in\supp(\mu)}|V_{n}(x)|^2 \notag\\
&\lesssim \frac{\varepsilon^2 + dR^4}{n} \left( \frac{R}{\varepsilon} \right)^{2d_{\nu}+4}. \label{eq:bound_diff_inff_smplf}
\end{align}
For the second term in \eqref{eq:diff_inff_mnf_decompose}, Eq.~(6.4) in \cite{strommeMinimumIntrinsicDimension2023} implies
\[
|\smplinff(x) - \mnf(x)| \lesssim \|\infg-\mng\|_{L^2(\nu_{n})} \sup_{z\in\supp(\nu)} \nu_{n} \left( B\left(z, \frac{\varepsilon}{2R}\right) \right)^{-1/2}.
\]
Let \(\mathcal{E}\) denote the event on which Lemma~\ref{lem:inf_nu_n_with_probability} holds. From Lemma~\ref{lem:upper_bound_of_potential} and \(\bP[\cE^C]\le \frac{1}{n}\), it follows that
\begin{align*}
\bE \|\inff-\mnf\|_{L^\infty(\mu)}^2
&\lesssim \bE \left[ \bone[\cE] \|\inff-\mnf\|_{L^\infty(\mu)}^2 \right] + \frac{R^2}{n}.
\end{align*}
Combining this with Lemma~\ref{lem:convergence_mng_l2} and Lemma~\ref{lem:inf_nu_n_with_probability} yields
\begin{align}
\bE \|\inff-\mnf\|_{L^\infty(\mu)}^2
&\lesssim \bE \left[ \bone[\cE] \|\inff-\mnf\|_{L^\infty(\mu)}^2 \right] + \frac{R^2}{n} \notag\\
&\lesssim \bE \left[ \bone[\cE] \left( \|\inff-\smplinff\|_{L^\infty(\mu)}^2 +  \|\smplinff-\mnf\|_{L^\infty(\mu)}^2 \right) \right] + \frac{R^2}{n} \notag\\
&\lesssim \frac{\varepsilon^2 + dR^4}{n} \left( \frac{R}{\varepsilon} \right)^{2d_{\nu}+4} 
+ \left( \frac{R}{\varepsilon}\right)^{d_{\nu}} \bE \|\infg-\mng\|_{L^2(\nu_{n})}^2 + \frac{R^2}{n} \notag\\
&\lesssim \left( \varepsilon^2 + \frac{dR^8}{ \varepsilon^2 } \right) \cdot \left( \frac{R}{ \varepsilon } \right)^{7d_{\nu}+4} \cdot \left(\frac{1}{m} + \frac{1}{n}\right). \notag
\end{align}
\end{proof}

\begin{lemma} \label{lem:diff_infg_mng_sup}
It follows that
\[
\bE \|\infg-\mng\|_{L^\infty(\nu)}^2
\lesssim  \left( \varepsilon^2 + \frac{dR^8}{ \varepsilon^2 } \right) \cdot \left( \frac{R}{ \varepsilon } \right)^{7d_{\nu}+4} \cdot \left(\frac{1}{m} + \frac{1}{n}\right).
\]
\end{lemma}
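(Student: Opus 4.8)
The plan is to mirror the proof of Lemma~\ref{lem:diff_inff_mnf_sup} with the roles of the source and target exchanged, and then to feed the dominant piece back into Lemma~\ref{lem:diff_inff_mnf_sup} itself. Concretely, I would introduce the semi-empirical potential $\smplinfg$, defined as the $c$-transform of $\inff$ against the empirical source $\mu_m$,
\[
\smplinfg(y) := -\varepsilon \log\!\left( \frac1m \sum_{i=1}^m \exp_{\varepsilon}\!\big(\inff(X_i) - \tfrac12\|X_i - y\|_2^2\big)\right) = -\varepsilon\log\!\int \exp_{\varepsilon}\!\big(\inff(x) - c(x,y)\big)\,\d\mu_m(x),
\]
and split, via the triangle inequality, $\bE\|\infg - \mng\|_{L^\infty(\nu)}^2 \lesssim \bE\|\infg - \smplinfg\|_{L^\infty(\nu)}^2 + \bE\|\smplinfg - \mng\|_{L^\infty(\nu)}^2$; it then suffices to bound the two terms.

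For $\bE\|\infg - \smplinfg\|_{L^\infty(\nu)}^2$, I would use the Schr\"odinger system~\eqref{eqn: Schro sys}, which gives $\int \exp_{\varepsilon}(\inff(x) - c(x,y))\,\d\mu(x) = \exp_{\varepsilon}(-\infg(y))$, hence $\infg(y) - \smplinfg(y) = \varepsilon\log\!\big(1 + U_m(y)\big)$ with $U_m(y) := (\mu_m - \mu)(\infp(\cdot,y))$ as in Lemma~\ref{lem:sup_Um2}. Exactly as in the derivation of~\eqref{eq:bound_diff_inff_smplf}, I would split the expectation over $\{\sup_{y}|U_m(y)| < 1/2\}$ (where $|\log(1+u)| \le 2|u|$) and its complement (where I use the deterministic bound $|\infg - \smplinfg| \lesssim R^2$, valid since $\mu_m(\infp(\cdot,y))$ lies in $[\exp_{\varepsilon}(-CR^2), \exp_{\varepsilon}(CR^2)]$ by Lemma~\ref{lem:upper_bound_of_potential}, together with Markov's inequality). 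This yields $\bE\|\infg - \smplinfg\|_{L^\infty(\nu)}^2 \lesssim (1 + R^4/\varepsilon^2)\,\bE\sup_{y\in\supp(\nu)} U_m(y)^2$, and then Lemma~\ref{lem:sup_Um2} gives $\lesssim \tfrac{\varepsilon^2 + R^4}{m}(R/\varepsilon)^{2d_\nu + 2}$, which is dominated by the asserted bound since $(R/\varepsilon)^{2d_\nu+2}\le (R/\varepsilon)^{7d_\nu+4}$ in the regime where $\varepsilon/R$ is small.

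For $\bE\|\smplinfg - \mng\|_{L^\infty(\nu)}^2$, I would exploit that $\smplinfg$ and $\mng$ are $c$-transforms of $\inff$ and $\mnf$ \emph{with respect to the same reference measure} $\mu_m$. Since the map $h \mapsto -\varepsilon\log\int\exp_{\varepsilon}(h(x) - c(x,y))\,\d\mu_m(x)$ is $1$-Lipschitz in $\|\cdot\|_{L^\infty(\mu_m)}$ for every fixed $y$ — if $h_1 - h_2 \le \delta$ on $\supp(\mu_m)$, every integrand is scaled by at most $e^{\delta/\varepsilon}$, so the soft-min shifts by at most $\delta$ — I obtain the pointwise bound $\|\smplinfg - \mng\|_{L^\infty(\nu)} \le \|\inff - \mnf\|_{L^\infty(\mu_m)} \le \|\inff - \mnf\|_{L^\infty(\mu)}$, using $\supp(\mu_m)\subseteq\supp(\mu)$ and continuity of the (Lipschitz, by Lemma~\ref{lem:lipschitzness_of_potentials_and_densities}) potentials. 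Lemma~\ref{lem:diff_inff_mnf_sup} then bounds this term by $\big(\varepsilon^2 + dR^8/\varepsilon^2\big)(R/\varepsilon)^{7d_\nu+4}(1/m + 1/n)$, which is exactly the claimed bound; adding the two contributions finishes the proof.

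I do not expect a genuine obstacle: once Lemma~\ref{lem:diff_inff_mnf_sup} and Lemma~\ref{lem:sup_Um2} are available, the argument is a symmetrization plus an elementary soft-min stability estimate. The only points needing care are the routine case split for $\varepsilon\log(1+U_m)$ and checking that the $O(1/m)$ contribution from $\|\infg - \smplinfg\|$ is dominated by the $\|\smplinfg - \mng\|$ term — which holds because the extra $d$-dependence and the larger power of $R/\varepsilon$ both live in the second term — so that the two pieces together collapse to the stated bound.
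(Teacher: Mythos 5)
Your proposal is correct and follows essentially the same route as the paper's own proof: you introduce the same auxiliary potential $\smplinfg$ (the $c$-transform of $\inff$ against $\mu_m$), split by the same triangle inequality, handle $\infg-\smplinfg$ via the same Schr\"odinger-system identity $\infg-\smplinfg=\varepsilon\log(1+U_m)$ with the same case split plus Markov plus Lemma~\ref{lem:sup_Um2}, and handle $\smplinfg-\mng$ via the same $1$-Lipschitz stability of the log-sum-exp map in $L^\infty(\mu_m)$ followed by Lemma~\ref{lem:diff_inff_mnf_sup}. The only minor slip is bookkeeping: combining $(1+R^4/\varepsilon^2)$ with $\tfrac{R^2}{m}(R/\varepsilon)^{2d_\nu+2}$ gives exponent $2d_\nu+4$, not $2d_\nu+2$, but that term is still dominated so the conclusion is unaffected.
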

\begin{proof}
Let
\[
\smplinfg(y) := -\varepsilon \log\left( \frac{1}{m} \sum_{i=1}^m \exp_{\varepsilon}(\inff(X_{i}) - c(X_{i},y)) \right).
\]
Using \(\smplinfg\), we evaluate $\|\infg-\mng\|_{L^\infty(\nu)}^2$ as follows.
\begin{equation} \label{eq:diff_infg_mng_decompose}
\bE \|\infg-\mng\|_{L^\infty(\nu)}^2 \lesssim
\bE \|\infg-\smplinfg\|_{L^\infty(\nu)}^2 + \bE \|\smplinfg-\mng\|_{L^\infty(\nu)}^2.
\end{equation}
For the first term, the Schr\"odinger system \eqref{eqn: Schro sys} implies
\begin{align*}
\infg(y) - \smplinfg(y)
&= \varepsilon \log\left(
        \frac{1}{m}\exp_{\varepsilon}(\infg(y)) \sum_{i=1}^m \exp_{\varepsilon}(\inff(X_{i}) - c(X_{i},y))
\right) \\
&= \varepsilon \log\left(
    \frac{1}{m} \sum_{i=1}^m \infp(X_{i},y)
\right) \\
&= \varepsilon \log\big( 1 + U_{m}(y) \big),
\end{align*}
where \(U_{m}(y) := (\mu_{m}-\mu)(\infp(\cdot,y))\) and used $\mu(\infp(\cdot,y))=1$.

For any \(\eta \in (0,1)\), the following holds:
\begin{align}
\bE\|\infg(y) - \smplinfg(y)\|_{L^\infty(\nu)}^2
&= \varepsilon^2 \bE \sup_{y\in\supp(\nu)} \log(1 + U_{m}(y))^2 \notag\\
&= \varepsilon^2 \bE \left[  \bone[\sup_{y\in\supp(\nu)} |U_{m}| < \eta] \sup_{y\in\supp(\nu)} |\log(1+U_{m}(y))|^2 \right] \notag\\
& \quad + \varepsilon^2 \bE \left[  \bone[\sup_{y\in\supp(\nu)} |U_{m}| \ge \eta] \sup_{y\in\supp(\nu)} |\log(1+U_{m}(y))|^2 \right] \label{eq:diff_infg_smplg_decompose}.
\end{align}
For the first term in \eqref{eq:diff_infg_smplg_decompose}, the mean value theorem implies that for \(|u|<1\) we have \(|\log(1+u)| \le \frac{|u|}{1-|u|}\).
Hence,
\[
\bE \left[  \bone[\sup_{y\in\supp(\nu)} |U_{m}| < \eta] \sup_{y\in\supp(\nu)} |\log(1+U_{m}(y))|^2 \right]
\le \bE \sup_{y\in\supp(\nu)} \frac{|U_{m}(y)|^2}{(1-\eta)^2}
\]
For the second term in \eqref{eq:diff_infg_smplg_decompose}, since \(\exp_{\varepsilon}(-10R^2)\le\infp\le \exp_{\varepsilon}(10R^2)\), an application of Markov's inequality yields
\begin{align*}
\bE \left[  \bone[\sup_{y\in\supp(\nu)} |U_{m}| \ge \eta] \sup_{y\in\supp(\nu)} |\log(1+U_{m}(y))|^2 \right]
&\lesssim \frac{R^4}{\varepsilon^2 }\bP(\sup_{y\in\supp(\nu)} |U_{m}(y)|\ge \eta)\\
&\le \frac{R^4}{\varepsilon^2} \frac{\bE\sup_{y\in\supp(\nu)}|U_{m}(y)|^2}{\eta^2}.
\end{align*}
Setting \(\eta=1/2\) and applying Lemma~\ref{lem:sup_Um2} yields
\begin{align}
\bE\|\infg(y) - \smplinfg(y)\|_{L^\infty(\nu)}^2
&\lesssim \left( 1+ \frac{R^4}{\varepsilon^2} \right) \bE\sup_{y\in\supp(\nu)}|U_{m}(y)|^2 \notag\\
&\lesssim \frac{\varepsilon^2 + R^4}{m} \left( \frac{R}{\varepsilon} \right)^{2d_{\nu}+4}. \label{eq:bound_diff_infg_smplg}
\end{align}

For the second term of \eqref{eq:diff_infg_mng_decompose}, since the gradient of the \textit{log-sum-exp} function is the softmax and its \(\ell_{1}\)-norm equals \(1\), we obtain
\begin{equation}
|\smplinfg(y)-\mng(y)| \le \|\inff - \mnf\|_{L^\infty(\mu_{m})}.
\end{equation}
Therefore, by Lemma~\ref{lem:diff_inff_mnf_sup}, we have
\begin{align*}
\bE \|\smplinfg-\mng\|_{L^\infty(\nu)}^2
&\le \bE \|\inff - \mnf\|_{L^\infty(\mu_{m})}^2 \\
&\lesssim \left( \varepsilon^2 + \frac{dR^8}{ \varepsilon^2 } \right) \cdot \left( \frac{R}{ \varepsilon } \right)^{7d_{\nu}+4} \cdot \left(\frac{1}{m} + \frac{1}{n}\right).
\end{align*}

Combining the above results, we obtain
\begin{equation}
\bE \|\infg-\mng\|_{L^\infty(\nu)}^2
\lesssim  \left( \varepsilon^2 + \frac{dR^8}{ \varepsilon^2 } \right) \cdot \left( \frac{R}{ \varepsilon } \right)^{7d_{\nu}+4} \cdot \left(\frac{1}{m} + \frac{1}{n}\right).
\end{equation}
\end{proof}

\begin{lemma}[Lipschitzness of softmax]\label{lem:softmax_lip}
For $\lambda>0$ and $d\ge 2$, define $\sigma_{\lambda}:\mathbb{R}^{d}\to\Delta_{d}$ by
\[
\bigl(\sigma_{\lambda}(x)\bigr)_{i}
=
\left(\frac{\exp\!\left(\lambda x_{i}\right)}{\sum_{k=1}^{d}\exp\!\left(\lambda x_{k}\right)} \right)_i
\qquad (i=1,\dots,d).
\]
Then for any $1\le p,q\le\infty$,
\[
\left\|\sigma_{\lambda}(x)-\sigma_{\lambda}(y)\right\|_{p}
\;\le\;
L_{p,q}\,\left\|x-y\right\|_{q},
\qquad
L_{p,q}\;=\;\lambda\,2^{-1+1/p-1/q}.
\]
\end{lemma}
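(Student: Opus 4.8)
The plan is to reduce the statement to a uniform bound on the operator norm of the Jacobian $D\sigma_\lambda$ and then obtain the sharp constant by interpolating in $p$. Since $\sigma_\lambda$ is smooth (its denominator never vanishes), the mean value theorem along the segment joining $y$ to $x$ gives $\|\sigma_\lambda(x)-\sigma_\lambda(y)\|_p \le \sup_z \|D\sigma_\lambda(z)\|_{\ell^q\to\ell^p}\,\|x-y\|_q$, so it suffices to prove $\|D\sigma_\lambda(z)\|_{\ell^q\to\ell^p}\le \lambda\,2^{-1+1/p-1/q}$ for every $z\in\mathbb{R}^d$. A direct computation gives $D\sigma_\lambda(z) = \lambda M$ with $M = \operatorname{diag}(s) - s s^\top$ and $s = \sigma_\lambda(z)\in\Delta_d$; equivalently $(Mv)_i = s_i\bigl(v_i - \langle s, v\rangle\bigr)$, and $M\mathbf{1}=0$, reflecting the shift invariance of $\sigma_\lambda$. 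Here $d\ge 2$ is used so that indices $j\ne i$ exist.

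\textbf{The two $p$-endpoints.} First I would establish the bounds at $p=\infty$ and $p=1$. For $p=\infty$: write $v_i - \langle s,v\rangle = (1-s_i)v_i - \sum_{j\ne i}s_j v_j = \sum_j \mu_j v_j$ with $\mu_i = 1-s_i$ and $\mu_j = -s_j$ for $j\ne i$. Since $s_i+s_j\le 1$ yields $\max_{j\ne i}s_j\le 1-s_i$, one gets $\|\mu\|_{q'}\le 2^{1-1/q}(1-s_i)$ (where $q'$ is the conjugate exponent of $q$), so H\"older's inequality gives $|v_i-\langle s,v\rangle|\le 2^{1-1/q}(1-s_i)\|v\|_q$; combined with $s_i(1-s_i)\le 1/4$ this gives $\|Mv\|_\infty\le 2^{-1-1/q}\|v\|_q$. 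For $p=1$: $\|Mv\|_1 = \sum_i s_i|v_i - \langle s,v\rangle| = \mathbb{E}_{I\sim s}\bigl|v_I - \mathbb{E}_{I\sim s}v_I\bigr|$, which by Cauchy--Schwarz is at most $\sqrt{\operatorname{Var}_s(v)}$; Popoviciu's inequality gives $\operatorname{Var}_s(v)\le \tfrac14(\max_i v_i-\min_i v_i)^2$, and convexity of $t\mapsto t^q$ gives $\max_i v_i-\min_i v_i\le 2^{1-1/q}\|v\|_q$, so $\|Mv\|_1\le 2^{-1/q}\|v\|_q$. Hence $\|M\|_{\ell^q\to\ell^\infty}\le 2^{-1-1/q}$ and $\|M\|_{\ell^q\to\ell^1}\le 2^{-1/q}$.

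\textbf{Interpolation.} Finally I would apply the Riesz--Thorin theorem with the domain $\ell^q$ held fixed and the target interpolated between $\ell^1$ and $\ell^\infty$ (parameter $\theta=1-1/p$), obtaining $\|M\|_{\ell^q\to\ell^p}\le \bigl(2^{-1/q}\bigr)^{1-1/p}\bigl(2^{-1-1/q}\bigr)^{1/p} = 2^{-1+1/p-1/q}$ for all $1\le p\le\infty$. Multiplying by $\lambda$ and inserting this into the mean value estimate of the first paragraph finishes the proof.

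\textbf{Main obstacle.} The difficulty is getting the \emph{exact} constant $2^{-1+1/p-1/q}$ rather than one worse by a factor up to $2$: the naive route that bounds $|(Mv)_i|$ pointwise by $2^{-1-1/q}\|v\|_q$ and then takes the $\ell^p$ norm over $i$ loses a factor $2^{1/p}$, because $\sum_i s_i(1-s_i)$ can be close to $1$ while the pointwise bound is far from tight at all coordinates simultaneously. Treating the $\ell^1$ endpoint separately via the (sharp) variance/Popoviciu argument and then invoking Riesz--Thorin is precisely what recovers the correct exponent; one must also bear in mind the shift invariance $M\mathbf{1}=0$ when checking that $\max_i v_i-\min_i v_i\le 2^{1-1/q}\|v\|_q$ is the right estimate to feed in.
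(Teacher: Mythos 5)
Your strategy—bound the Jacobian at the two endpoints $p=1$ and $p=\infty$ separately and then interpolate in the target exponent—is a valid route to the constant $\lambda\,2^{-1+1/p-1/q}$, and it is genuinely different from the paper's. The paper uses the rank-one decomposition $\operatorname{diag}(s)-ss^{\top}=\sum_{i<j}s_is_j(e_i-e_j)(e_i-e_j)^{\top}$, bounds $\|Ju\|_p\le\lambda\,2^{1/p}\sum_{i<j}s_is_j|u_i-u_j|$, and then optimizes over $s$ and $u$ directly for every $p$ at once; you instead derive the two boundary cases by separate elementary arguments (H\"older on the $\ell^{q'}$ norm of the weight vector for $p=\infty$; a variance--Popoviciu argument for $p=1$) and interpolate. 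Both yield the same sharp constant, and your endpoint estimates check out.

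There are, however, two points to fix. First, the displayed interpolation formula has the exponents swapped: with $\theta=1-1/p$ the $\ell^1$-endpoint norm should carry weight $1-\theta=1/p$ and the $\ell^\infty$-endpoint should carry weight $\theta=1-1/p$, giving
\[
\|M\|_{\ell^q\to\ell^p}\le\bigl(2^{-1/q}\bigr)^{1/p}\bigl(2^{-1-1/q}\bigr)^{1-1/p}=2^{-1+1/p-1/q},
\]
whereas what you wrote, $\bigl(2^{-1/q}\bigr)^{1-1/p}\bigl(2^{-1-1/q}\bigr)^{1/p}$, actually equals $2^{-1/p-1/q}$, which agrees with the claimed constant only at $p=2$. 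Second, Riesz--Thorin is unnecessary here and quietly introduces a real-versus-complex scalar issue (the theorem is a complex-scalar statement, and real and complex $\ell^q\to\ell^p$ operator norms of a real matrix need not coincide in general). Since both of your endpoint estimates bound $\|Mv\|_1$ and $\|Mv\|_\infty$ for the same real vector $v$, the elementary log-convexity of $\ell^p$ norms, $\|x\|_p\le\|x\|_1^{1/p}\|x\|_\infty^{1-1/p}$, already gives $\|Mv\|_p\le\bigl(2^{-1/q}\|v\|_q\bigr)^{1/p}\bigl(2^{-1-1/q}\|v\|_q\bigr)^{1-1/p}=2^{-1+1/p-1/q}\|v\|_q$ and sidesteps the concern entirely. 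With those two corrections your argument is complete.
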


\begin{proof}
Let $s=\sigma_{\lambda}(z)$. The Jacobian of $\sigma_{\lambda}$ at $z$ is
\begin{equation}\label{eq:J}
J(z)=\nabla\sigma_{\lambda}(z)=\lambda\bigl(\mathrm{diag}(s)-ss^{\top}\bigr).
\end{equation}
By the fundamental theorem of calculus along the segment $t\mapsto y+t(x-y)$,
\[
\sigma_{\lambda}(x)-\sigma_{\lambda}(y)
=
\int_{0}^{1}J\!\left(y+t(x-y)\right)(x-y)\d t,
\]
hence, for any $1\le p,q\le\infty$,
\begin{equation}\label{eq:decompose_nabla_softmax}
\left\|\sigma_{\lambda}(x)-\sigma_{\lambda}(y)\right\|_{p}
\;\le\;
\Bigl(\sup_{z}\,\|J(z)\|_{q\to p}\Bigr)\,\left\|x-y\right\|_{q}.
\end{equation}
It suffices to bound $\sup_{z}\|J(z)\|_{q\to p}$.

\paragraph{Step 1: Decomposition.}
With the standard basis $\{e_{i}\}_{i=1}^{d}$,
\begin{equation}\label{eq:decomp}
\mathrm{diag}(s)-ss^{\top}
=
\sum_{i<j}s_{i}s_{j}\,(e_{i}-e_{j})(e_{i}-e_{j})^{\top}.
\end{equation}
Thus for any $u\in\mathbb{R}^{d}$,
\[
J(z)u
=
\lambda\sum_{i<j}s_{i}s_{j}\,(u_{i}-u_{j})(e_{i}-e_{j}),
\]
and by the triangle inequality and $\|e_{i}-e_{j}\|_{p}=2^{1/p}$ (with the convention $2^{1/\infty}=1$),
\begin{equation}\label{eq:Ju_bound}
\|J(z)u\|_{p}
\;\le\;
\lambda\,2^{1/p}\sum_{i<j}s_{i}s_{j}\,\left|u_{i}-u_{j}\right|.
\end{equation}

\paragraph{Step 2: Maximization over $s\in\Delta_{d}$.}
Let $i_{\min}=\arg\min_{i}u_{i}$, $i_{\max}=\arg\max_{i}u_{i}$. Since
$\left|u_{i}-u_{j}\right|\le \left|u_{i_{\max}}-u_{i_{\min}}\right|$ and
$\sum_{i<j}s_{i}s_{j}=\tfrac{1}{2}\!\left(1-\sum_{i=1}^{d}s_{i}^{2}\right)$,
\[
\sum_{i<j}s_{i}s_{j}\left|u_{i}-u_{j}\right|
\;\le\;
\frac{\left|u_{i_{\max}}-u_{i_{\min}}\right|}{2}
\Bigl(1-\sum_{i=1}^{d}s_{i}^{2}\Bigr)
\;\le\;
\frac{\left|u_{i_{\max}}-u_{i_{\min}}\right|}{4}.
\]
The last inequality is tight when $s_{i_{\max}}=s_{i_{\min}}=\tfrac{1}{2}$ and all other $s_{i}=0$. Therefore,
\begin{equation}\label{eq:s_sup}
\sup_{s\in\Delta_{d}}
\sum_{i<j}s_{i}s_{j}\left|u_{i}-u_{j}\right|
=
\frac{\left|u_{i_{\max}}-u_{i_{\min}}\right|}{4}.
\end{equation}
Because $\sigma_{\lambda}(\mathbb{R}^{d})$ is the interior of $\Delta_{d}$ and the right-hand side of \eqref{eq:Ju_bound} is continuous in $s$, the supremum over $z$ equals the supremum over $s\in\Delta_{d}$.

\paragraph{Step 3: Maximization over $\|u\|_{q}=1$.}
For any $a,b\in\mathbb{R}$ and $q\ge 1$, $\left|a-b\right|^{q}\le 2^{\,q-1}\!\left(\left|a\right|^{q}+\left|b\right|^{q}\right)$.
With $a=u_{i_{\max}}$, $b=u_{i_{\min}}$ and $\|u\|_{q}=1$,
\begin{equation}\label{eq:u_range}
\left|u_{i_{\max}}-u_{i_{\min}}\right|
\;\le\;
2^{\,1-1/q}\Bigl(\left|u_{i_{\max}}\right|^{q}+\left|u_{i_{\min}}\right|^{q}\Bigr)^{1/q}
\;\le\;
2^{\,1-1/q},
\end{equation}
with equality when $u$ has exactly two nonzero entries $u_{i_{\max}}=2^{-1/q}$ and $u_{i_{\min}}=-2^{-1/q}$.

\paragraph{Step 4: Combine.}
From \eqref{eq:Ju_bound}–\eqref{eq:u_range},
\[
\sup_{z}\sup_{\|u\|_{q}=1}\|J(z)u\|_{p}
\;\le\;
\lambda\,2^{1/p}\cdot\frac{1}{4}\cdot 2^{\,1-1/q}
=
\lambda\,2^{-1+1/p-1/q}.
\]
Plug this into \eqref{eq:decompose_nabla_softmax} to obtain the claim. The bound is tight by the equality cases noted in \eqref{eq:s_sup} and \eqref{eq:u_range}.
\end{proof}

\begin{lemma}
\label{lem:l1_hilbert_metric}
For any probability vectors \(u,v\in\Delta_{d}\),
\begin{equation}\label{eq:l1_leq_hilbert}
  \left\|u-v\right\|_{1} \le \frac{1}{2}\, d_{\mathrm{Hilb}}\!\left(u,v\right).
\end{equation}
\end{lemma}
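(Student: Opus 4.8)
The plan is to reduce the claim to a one-variable calculus inequality after a short combinatorial step. Set $r_i := u_i/v_i$ for each coordinate $i$. If $v_i = 0 < u_i$ (or $u_i = 0 < v_i$) for some $i$, then $d_{\mathrm{Hilb}}(u,v) = +\infty$ and there is nothing to prove, while coordinates with $u_i = v_i = 0$ may simply be dropped; so I assume $r_i \in (0,\infty)$ for all $i$. Since $u,v\in\Delta_d$, we have $\sum_i v_i(r_i-1) = \sum_i u_i - \sum_i v_i = 0$, i.e.\ the $v$-weighted average of the $r_i$ equals $1$, so $m := \min_i r_i \le 1 \le M := \max_i r_i$, and by definition $d_{\mathrm{Hilb}}(u,v) = \log M - \log m$. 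If $M = 1$ or $m = 1$ then every $r_i = 1$, hence $u = v$ and the inequality is trivial; so assume $0 < m < 1 < M$ from now on.

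First I would bound $\|u-v\|_1 = \sum_i v_i|r_i - 1|$ combinatorially. Splitting the sum according to the sign of $r_i - 1$ and using $\sum_i v_i(r_i-1) = 0$ gives $\|u-v\|_1 = 2Q$, where $Q := \sum_{i:\,r_i\ge 1} v_i(r_i-1) = \sum_{i:\,r_i < 1} v_i(1-r_i) \ge 0$. Writing $p := \sum_{i:\,r_i\ge 1} v_i \in [0,1]$, the first expression for $Q$ gives $Q \le (M-1)p$ and the second gives $Q \le (1-m)(1-p)$, so
\[
\|u-v\|_1 = 2Q \le 2\max_{p\in[0,1]}\min\bigl\{(M-1)p,\ (1-m)(1-p)\bigr\} = \frac{2(M-1)(1-m)}{M-m},
\]
the maximum being attained where the two affine functions of $p$ coincide, namely at $p = \tfrac{1-m}{M-m}\in(0,1)$.

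It then remains to prove the scalar inequality $\tfrac{2(M-1)(1-m)}{M-m} \le \tfrac12(\log M - \log m)$ for $0 < m < 1 < M$. Put $s := \log M > 0$ and $t := -\log m > 0$, so the right-hand side is $\tfrac{s+t}{2}$, while the left-hand side is the harmonic mean of $M-1 = e^s-1$ and $1-m = 1-e^{-t}$, i.e.\ equals $2\bigl(\tfrac{1}{e^s-1}+\tfrac{1}{1-e^{-t}}\bigr)^{-1}$. The elementary fact $x\coth x \ge 1$ for $x>0$ (equivalently $\tanh x \le x$) rearranges to $\tfrac{1}{e^s-1} \ge \tfrac1s - \tfrac12$, and similarly $\tfrac{1}{1-e^{-t}} = 1 + \tfrac{1}{e^t-1} \ge \tfrac12 + \tfrac1t$. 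Adding these and then using the AM--HM inequality $\tfrac1s + \tfrac1t \ge \tfrac{4}{s+t}$ yields $\tfrac{1}{e^s-1} + \tfrac{1}{1-e^{-t}} \ge \tfrac1s+\tfrac1t \ge \tfrac{4}{s+t}$; inverting gives $\tfrac{2(M-1)(1-m)}{M-m} \le \tfrac{s+t}{2} = \tfrac12 d_{\mathrm{Hilb}}(u,v)$, which together with the previous paragraph proves the lemma.

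The two reductions are routine; the one point requiring care is landing on the constant $\tfrac12$ in the scalar inequality. The bounds $\tfrac{1}{e^s-1}\ge\tfrac1s-\tfrac12$ and $\tfrac{1}{1-e^{-t}}\ge\tfrac1t+\tfrac12$ are each tight as $s,t\to 0$, so any slack there — for instance bounding the harmonic mean by the geometric mean $\sqrt{(M-1)(1-m)}$ — would destroy the constant, and getting this step sharp is the main thing to watch.
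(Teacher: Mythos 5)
Your proof is correct and takes a genuinely different route from the paper. The paper simply cites \citet[Theorem~5.1]{cohen2024hyperboliccontractivityhilbertmetric} for the sharper bound \(\|u-v\|_1 \le 2\tanh\bigl(d_{\mathrm{Hilb}}(u,v)/4\bigr)\) and then applies \(\tanh x \le x\), so the substantive content is outsourced to that reference. You instead give a self-contained elementary argument: the combinatorial reduction \(\|u-v\|_1 = 2Q\) with the two linear bounds \(Q\le (M-1)p\) and \(Q\le (1-m)(1-p)\) correctly yields \(\|u-v\|_1 \le 2(M-1)(1-m)/(M-m)\), i.e.\ the harmonic mean of \(M-1\) and \(1-m\); and the scalar step via \(\tfrac{1}{e^s-1}\ge\tfrac1s-\tfrac12\) (equivalent to \(x\coth x\ge 1\)), its mirror for \(1-e^{-t}\), and AM--HM cleanly delivers the constant \(\tfrac12\). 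It is a nice observation that the same elementary fact \(\tanh x\le x\) (in the guise of \(x\coth x\ge 1\)) is the load-bearing inequality in both routes, but your proof avoids invoking the hyperbolic contraction theorem and makes the lemma free-standing, at the mild cost of losing the sharper \(\tanh\) form that the citation would give (the paper discards that extra sharpness anyway). Both proofs establish exactly what the lemma asserts; yours is more transparent and requires no external result.
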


\begin{proof}
By \cite{cohen2024hyperboliccontractivityhilbertmetric}[Theorem~5.1],
\begin{equation}\label{eq:tv-sharp}
  \left\|u-v\right\|_{1} \le 2\,\tanh\!\left(\frac{d_{\mathrm{Hilb}}\!\left(u,v\right)}{4}\right).
\end{equation}
Since \(\tanh(x) \le x\) for \(x\ge 0\) and \(d_{\mathrm{Hilb}}\!\left(u,v\right)\ge 0\), letting \(x=\tfrac{d_{\mathrm{Hilb}}\!\left(u,v\right)}{4}\) gives
\begin{align}
  \left\|u-v\right\|_{1}
  &\le 2\,\tanh\!\left(\frac{d_{\mathrm{Hilb}}\!\left(u,v\right)}{4}\right) \tag{\ref{eq:tv-sharp}}\\
  &\le 2\left(\frac{d_{\mathrm{hilb}}\!\left(u,v\right)}{4}\right)
   = \frac{1}{2}\, d_{\mathrm{hilb}}\!\left(u,v\right).
\end{align}
This proves \eqref{eq:l1_leq_hilbert}.
\end{proof}

\begin{lemma}
\label{lem:properties_of_hilbert_metric}
For any \(\mathbf u, \mathbf v \in \mathbb{R}_+^d\), the following holds:
\begin{align*}
    d_{\mathrm{Hilb}}(\mathbf{u}, \mathbf{v}) &= d_{\mathrm{Hilb}}(\mathbf{v}, \mathbf{u}), \\
    d_{\mathrm{Hilb}}(\mathbf{u}, c\mathbf{v}) &= d_{\mathrm{Hilb}}(\mathbf{u}, \mathbf{v})
    \quad \forall c>0, \\
    d_{\mathrm{Hilb}}(\mathbf{c}\odot\mathbf{u}, \mathbf{c}\odot\mathbf{v}) &= d_{\mathrm{Hilb}}(\mathbf{u}, \mathbf{v})
    \quad \forall \mathbf{c}\in\mathbb{R}^d_+.
\end{align*}
Here, \(\odot\) denotes the element-wise product.
\end{lemma}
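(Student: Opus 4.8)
The plan is to verify each of the three identities by direct substitution into the definition $d_{\mathrm{Hilb}}(u,u') = \log\!\bigl(\max_i (u_i/u'_i)\cdot \max_i (u'_i/u_i)\bigr)$, since all three are elementary algebraic consequences of how positive scalars factor through maxima and through the logarithm. Throughout I would use only that all entries of the vectors involved are strictly positive (so every ratio and the logarithm is well defined) together with the trivial fact that $\max_i(\lambda a_i) = \lambda\,\max_i a_i$ for any $\lambda>0$.

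For the symmetry $d_{\mathrm{Hilb}}(\mathbf u,\mathbf v) = d_{\mathrm{Hilb}}(\mathbf v,\mathbf u)$, I would note that swapping $\mathbf u$ and $\mathbf v$ merely interchanges the two factors $\max_i(u_i/v_i)$ and $\max_i(v_i/u_i)$ inside the logarithm, whose product is unchanged by commutativity of multiplication. For the second-argument scaling $d_{\mathrm{Hilb}}(\mathbf u, c\mathbf v) = d_{\mathrm{Hilb}}(\mathbf u,\mathbf v)$ with $c>0$, I would pull the constant out of each maximum, writing $\max_i\bigl(u_i/(c v_i)\bigr) = c^{-1}\max_i(u_i/v_i)$ and $\max_i\bigl((c v_i)/u_i\bigr) = c\,\max_i(v_i/u_i)$; the factors $c^{-1}$ and $c$ cancel in the product before the logarithm is taken, yielding the claim.

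For the coordinatewise rescaling $d_{\mathrm{Hilb}}(\mathbf c\odot\mathbf u,\ \mathbf c\odot\mathbf v) = d_{\mathrm{Hilb}}(\mathbf u,\mathbf v)$ with $\mathbf c\in\mathbb R_+^d$, I would observe that in each ratio the common positive factor cancels pointwise, $(c_i u_i)/(c_i v_i) = u_i/v_i$, so that $\max_i\bigl((\mathbf c\odot\mathbf u)_i/(\mathbf c\odot\mathbf v)_i\bigr) = \max_i(u_i/v_i)$ and likewise for the reciprocal ratio; hence the defining expression is literally unchanged. Each of the three claims therefore reduces to a single one-line computation, and I would present them in that order with one displayed equation apiece.

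I expect no genuine obstacle here: the only points that need (trivial) care are that strict positivity of all entries makes every expression meaningful, and that positive scalars — whether the global constant $c$, the pointwise factors $c_i$, or the implicit normalizing constants one wants to discard in applications such as Lemma~\ref{lem:drift_m_n_star_and_m_n_k} — may be freely moved in and out of the maxima precisely because they are positive.
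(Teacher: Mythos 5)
Your proof is correct and essentially matches the paper's: the third identity is proved in exactly the same way, by pointwise cancellation of the $c_i$ factors inside each ratio before the maximum is taken. The only difference is cosmetic: the paper disposes of the first two properties by citing Lemma~2.1 of Lemmens--Nussbaum, whereas you give the (equally short) direct verifications, which if anything makes the presentation more self-contained.
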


\begin{proof}
The first two properties are established in \citep[Lemma~2.1]{lemmens2013birkhoffsversionhilbertsmetric}.
For the third property, we have
\begin{align*}
d_{\mathrm{Hilb}}(\mathbf{c}\odot\mathbf{u}, \mathbf{c}\odot\mathbf{v})
&= \log\left( \max_{i} \frac{c_{i}u_{i}}{c_{i}v_{i}} \cdot \max_{i}\frac{c_{i}v_{i}}{c_{i}u_{i}} \right) \\
&= \log\left( \max_{i} \frac{u_{i}}{v_{i}} \cdot \max_{i}\frac{v_{i}}{u_{i}} \right) \\
&= d_{\mathrm{Hilb}}(\mathbf{u}, \mathbf{v}).
\end{align*}
\end{proof}

\subsection{Lemma used for the drift estimation using neural networks}\label{subsec:lem_nn_drift_estimation}

We provide an alternative expression of the Sinkhorn bridge drift estimator as a Markovian projection.

Recall that $W_{|x_0,x_1}^{\varepsilon}$ is the Brownian bridge connecting $x_0$ and $x_1$, so its marginal distribution at time $t\in[0,1]$ is a Gaussian distribution
$
W^{\varepsilon}_{t |x_0, x_1} = \mathcal N\big((1-t)x_0 + tx_1, \varepsilon t(1-t)I_d\big).
$
For any distributions $\mu',~\nu' \in \mathcal{P}(\mathbb{R}^d)$, and dual potentials $f \in L^1(\mu'),~g \in L^1(\nu')$, we define
\begin{align*}
d\pi(x_0, x_1) = \exp\left( \frac{f(x_0) + g(x_1) - \frac{1}{2}\|x_0 - x_1\|_2^{2}}{\varepsilon}\right) d(\mu' \otimes \nu')(x_0,x_1).
\end{align*}
We here only consider the case where $\pi$ becomes a joint distribution $\pi \in \mathcal{P}(\mathbb{R}^d\times \mathbb{R}^d)$, that is, $\int d\pi(x_0,x_1) = 1$.
Using $\pi_{m,n}^{(k)}$, we define the mixture of Brownian bridge (reciprocal process): $\Pi = \int W_{|x_0,x_1}^{\varepsilon} d\pi(x_0,x_1) \in \mathcal{P}(\Omega)$.

\begin{lemma}
    \label{lem:markovian-projection-drift-estimator}
    Under the above setting, it follows that
    \begin{align*}
        b(x_t,t)
        &:= \mathbb{E}_{X\sim \Pi}
        \left[\frac{X_1 - x_t}{1-t}\,\middle|\, X_t = x_t\right] \\
        &= \frac{1}{1-t}\left( - x_t + \frac{ \mathbb{E}_{X_1\sim \nu'}[\gamma(X_1,x_t)X_1] }{\mathbb{E}_{X_1 \sim \nu'} [\gamma(X_1,x_t)]} \right),
    \end{align*}
    where $\gamma(x_1,x_t) = \exp\left( \left( g(x_1) -\frac{1}{2(1-t)} \|x_1-x_t\|^2 \right)/\varepsilon \right)$.
\end{lemma}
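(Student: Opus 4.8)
The plan is to reduce the claim to an explicit evaluation of the conditional expectation $\mathbb{E}_{X\sim\Pi}[X_1\mid X_t=x_t]$, since by linearity
\[
b(x_t,t)=\frac{1}{1-t}\bigl(\mathbb{E}_{X\sim\Pi}[X_1\mid X_t=x_t]-x_t\bigr),
\]
so it suffices to show that the first term equals $\mathbb{E}_{X_1\sim\nu'}[\gamma(X_1,x_t)X_1]/\mathbb{E}_{X_1\sim\nu'}[\gamma(X_1,x_t)]$. To obtain this I would disintegrate the reciprocal process along its endpoints: by construction $(X_0,X_1)\sim\pi$ and, conditionally on $(X_0,X_1)=(x_0,x_1)$, the marginal $X_t$ is Gaussian with law $\mathcal N\bigl((1-t)x_0+tx_1,\varepsilon t(1-t)I_d\bigr)$. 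Writing $h(x_0,x_1)=\exp_{\varepsilon}\!\bigl(f(x_0)+g(x_1)-\tfrac12\|x_0-x_1\|_2^2\bigr)$ for the density of $\pi$ with respect to $\mu'\otimes\nu'$ and fixing $t\in(0,1)$, one gets that the joint law of $(X_t,X_1)$ under $\Pi$ has density
\[
\rho(x_t,x_1)=\int h(x_0,x_1)\,\mathcal N\!\bigl(x_t;(1-t)x_0+tx_1,\varepsilon t(1-t)I_d\bigr)\,\mu'(dx_0)
\]
with respect to $dx_t\otimes\nu'(dx_1)$ (absolute continuity in the $x_1$-variable with respect to $\nu'$ holds because the $x_1$-marginal of $\pi$ is $\nu'$-absolutely continuous, and the Gaussian is nondegenerate for $t\in(0,1)$).

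The heart of the argument is then an explicit ``completion of the square'' inside this integral. In the exponent $\tfrac1\varepsilon\bigl[f(x_0)+g(x_1)-\tfrac12\|x_0-x_1\|_2^2-\tfrac{1}{2t(1-t)}\|x_t-(1-t)x_0-tx_1\|_2^2\bigr]$ one collects all terms depending on $x_1$: the cross term $x_0\!\cdot\!x_1$ cancels, the $\|x_1\|_2^2$ coefficients combine to $-\tfrac{1}{2(1-t)}$, and the remaining linear-in-$x_1$ term is $\tfrac{x_t\cdot x_1}{1-t}$, so the $x_1$-dependent part of the exponent equals $\tfrac1\varepsilon\bigl(g(x_1)-\tfrac{1}{2(1-t)}\|x_1-x_t\|_2^2+\tfrac{1}{2(1-t)}\|x_t\|_2^2\bigr)$. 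Consequently the integral factors as
\[
\rho(x_t,x_1)=Z(x_t)\,\gamma(x_1,x_t),\qquad \gamma(x_1,x_t)=\exp_{\varepsilon}\!\Bigl(g(x_1)-\tfrac{1}{2(1-t)}\|x_1-x_t\|_2^2\Bigr),
\]
where $Z(x_t)$ collects the Gaussian normalizing constant, the factor $\exp_{\varepsilon}\!\bigl(\tfrac{1}{2(1-t)}\|x_t\|_2^2\bigr)$, and the residual $x_0$-integral $\int\exp_{\varepsilon}\!\bigl(f(x_0)-\tfrac12\|x_0\|_2^2-\tfrac{1}{2t(1-t)}\|x_t-(1-t)x_0\|_2^2\bigr)\mu'(dx_0)$, all independent of $x_1$.

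With this factorization the factor $Z(x_t)$ cancels in the ratio defining the conditional expectation, yielding $\mathbb{E}_{X\sim\Pi}[X_1\mid X_t=x_t]=\mathbb{E}_{X_1\sim\nu'}[\gamma(X_1,x_t)X_1]/\mathbb{E}_{X_1\sim\nu'}[\gamma(X_1,x_t)]$, and substituting into the displayed formula for $b(x_t,t)$ gives the claim. I expect the delicate points to be bookkeeping rather than conceptual: justifying the disintegration and the absolute continuity of the joint law (using the standing hypothesis $\int d\pi=1$ and nondegeneracy of the bridge Gaussian for $t\in(0,1)$), keeping careful track of which factors are absorbed into $Z(x_t)$ during the completion of the square, and observing that the degenerate endpoint $t=0$ reduces directly to the conditional law of $\pi$ given $X_0=x_0$, for which the same formula holds because $\exp_{\varepsilon}(f(x_0))$ is constant in $x_1$. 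This is essentially the reciprocal-process computation underlying Proposition~3.1 of \citet{pooladianPluginEstimationSchrodinger2024}, specialized to the coupling $\pi$.
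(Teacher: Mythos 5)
Your argument is correct and follows essentially the same route as the paper's proof: disintegrate the reciprocal process along its endpoints, express the $(X_t,X_1)$-law as an integral over $x_0$ of the Gaussian bridge density times the coupling density, and observe that completing the square makes the $x_0\!\cdot\!x_1$ cross terms cancel so that the $x_1$-dependent part of the exponent reduces to $\tfrac1\varepsilon\bigl(g(x_1)-\tfrac{1}{2(1-t)}\|x_1-x_t\|_2^2\bigr)$ up to a factor independent of $x_1$. Your explicit remarks about absolute continuity and the degenerate endpoint $t=0$ are small additions the paper leaves implicit but do not change the substance of the argument.
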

\begin{proof}
    Given $x_t~(t<1)$, we denote by $\Pi(x_1 | X_t = x_t)$ the conditional density of $\Pi$ with respect to Lebesgue measure $dx_1$.
    Note that the marginal density $\Pi_{t1}$ of $\Pi$ on time points $t$ and $1$ with respect to $dx_tdx_1$ satisfies that for a measurable $A \subset \mathbb{R}^d$
    \begin{align*}
        \int_A \Pi_{t1}(x_t,x_1)\d x_1 = \int W_{t|x_0,x_1}^{\varepsilon}(x_t) \pi(\d x_0,A).
    \end{align*}
    Therefore, we get
    \begin{align*}
        \Pi(x_1 | X_t = x_t)\d x_1
        &\propto \Bigg\{ \int \exp\left( -\frac{1}{2\varepsilon t (1-t)} \|x_t - (1-t)x_0 - t x_1\|^2 \right) \\
        &~~~~~~~~~~~~\cdot \exp\left( \frac{g(x_1) - \frac{1}{2}\|x_0 - x_1\|_2^{2}}{\varepsilon}\right) \d\mu'(x_0) \Bigg\}\cdot \d\nu'(x_1)\\
         &\propto  \Bigg\{ \int \exp\left( -\frac{1}{2\varepsilon t (1-t)} (t^2 \|x_1\|^2 -2t x_t^\top x_1 + 2t(1-t)x_0^\top x_1 ) \right)  \\
        &~~~~~~~~~~~~\cdot \exp\left( \frac{g(x_1)}{\varepsilon} - \frac{1}{2\varepsilon}(\|x_1\|^2 - 2x_0^\top x_1)\right) \d\mu'(x_0) \Bigg\}\cdot \d\nu'(x_1)\\
        &=\exp\left( \frac{g(x_1)}{\varepsilon} -\frac{1}{2\varepsilon (1-t)} (\|x_1\|^2 -2 x_t^\top x_1 ) \right) \d\nu'(x_1) \\
        &\propto \exp\left( \frac{1}{\varepsilon}\left( g(x_1) -\frac{1}{2(1-t)} \|x_1-x_t\|^2 \right) \right) \d\nu'(x_1).
    \end{align*}
    This means
    \begin{equation*}
        \Pi(x_1 | X_t=x_t)\d x_1
        = \frac{ \gamma(x_1,x_t) \d\nu'(x_1) }{\mathbb{E}_{X_1 \sim \nu'} [\gamma(X_1,x_t)]}.
    \end{equation*}
    Therefore,
    \begin{align*}
        b(x_t,t)
        &= \frac{1}{1-t}\left( - x_t + \mathbb{E}_{X\sim \Pi}
        \left[X_1 \,\middle|\, X_t = x_t\right] \right) \\
        &= \frac{1}{1-t}\left( - x_t + \int x_1 \frac{ \gamma(x_1,x_t)}{\mathbb{E}_{X_1 \sim \nu'} [\gamma(X_1,x_t)]} d\nu'(x_1) \right) \\
        &= \frac{1}{1-t}\left( - x_t + \frac{ \mathbb{E}_{X_1\sim \nu'}[\gamma(X_1,x_t)X_1] }{\mathbb{E}_{X_1 \sim \nu'} [\gamma(X_1,x_t)]} \right).
    \end{align*}
\end{proof}

\section{Omitted Experiments}\label{sec:omitted_experiments}

\subsection{Stopping and guidance}

Theorems~\ref{theorem:entropy_of_(infty,infty)_and_(m,n)} and~\ref{theorem:entropy_of_(m,n,*)_and_(m,n,k)} suggest that beyond a certain point additional Sinkhorn iterations are not beneficial. The total error decomposes into a “sampling error” and an “optimization error”. The latter decreases only through iterations, whereas the former does not. By the triangle inequality for the total variation distance,
\[
\mathbb E\left[\mathrm{TV}^2\left(P_{\infty,\infty}^{*},\,P_{m,n}^{k}\right)\right]
\lesssim
\mathbb E\left[\mathrm{TV}^2\left(P_{\infty,\infty}^{*},\,P_{m,n}^{*}\right)\right]
+\mathbb E\left[\mathrm{TV}^2\left(P_{m,n}^{*},\,P_{m,n}^{k}\right)\right].
\]
Once the second term on the right-hand side (optimization error) becomes no larger than the first term (estimation error), further iterations no longer yield a meaningful reduction of the total error. Hence it is reasonable to choose the stopping point as the smallest \(k\) such that
\[
\mathbb E\left[\mathrm{TV}^2\left(P_{\infty,\infty}^{*},\,P_{m,n}^{*}\right)\right]
\ge
\mathbb E\left[\mathrm{TV}^2\left(P_{m,n}^{*},\,P_{m,n}^{k}\right)\right].
\]
From Theorems~\ref{theorem:entropy_of_(infty,infty)_and_(m,n)} and~\ref{theorem:entropy_of_(m,n,*)_and_(m,n,k)} , a sufficient condition for this is
\[
k \gtrsim
\frac{\log\!\left(\frac{B}{m} + \frac{A+B}{n}\right)}{\log\!\left(\tanh\!\left(\tfrac{R^{2}}{\varepsilon}\right)\right)},
\quad
A=\frac{1}{\tau(1-\tau)^{d_{\nu}+1}\,\varepsilon^{\,d_{\nu}-2}\,R^{4}},\quad
B=\left(\frac{\varepsilon^4}{R^6} + dR^2\right) \left(\frac{R}{\varepsilon}\right)^{9d_{\nu}+4}.
\]
Since \(\log\!\left(\tanh\!\left(\tfrac{R^{2}}{\varepsilon}\right)\right)<0\), iterations are beneficial only when \(\frac{B}{m} + \frac{A+B}{n}<1\). Equivalently, unless the sample size is large enough to make the numerator negative—roughly \((m\wedge n)\approx (A\vee B)\)—the reduction achievable by iterations is smaller than the estimation error, leading to wasted computational resources.

In Figure~\ref{fig:sinkhorn_nn}, the fact that \(b_{m,n}^{(1)}\) already captures the shape of \(\nu_{n}\) to a certain extent is consistent with this reasoning. Increasing \(k\) becomes meaningful only when \(\mu_{m}\) and \(\nu_{n}\) are sufficiently close to the true distributions \(\mu\) and \(\nu\), namely when \(m\) and \(n\) are sufficiently large.

\subsection{Intrinsic dimension}

Theorem~\ref{theorem:entropy_of_(infty,infty)_and_(m,n)} shows that the orders of the orders of \(R\) and \(\varepsilon\) depend on the intrinsic dimension \(d_{\nu}\) rather than the ambient dimension \(d\). In this section, we verify that the error varies with \(d_{\nu}\) under the following setup.

We fix the ambient dimension at \(d=50\), the regularization strength at \(\varepsilon=0.5\), the evaluation-interval endpoint at \(\tau=0.9\), and the sample sizes at \(m=n=10000\). The distribution \(\mu\) is sampled uniformly from the unit hypercube \(\left[0,1\right]^{d}\), while \(\nu\) is sampled uniformly from the \(d_{\nu}\)-dimensional manifold embedded in the unit sphere of radius \(1\),
\[
\left\{\,x\in\mathbb{R}^{d}\ \middle|\ \left\|x\right\|_{2}=1,\ \forall\, i>d_{\nu}:\; x_{i}=0\,\right\}.
\]
By varying \(d_{\nu}\), we examine whether the convergence behavior of the estimation error with respect to the sample size depends on the manifold dimension.

Specifically, for each \(d_{\nu}\in\{5,10,20,30\}\) we compare
\[
\int_{0}^{0.9} \mathrm{MSE}_{\mathrm{sample}}\!\left(m,n,t\right)\d t.
\]
The results are reported in Table~\ref{tab:intrinsic_dimension}. An increasing trend of the error with larger manifold dimension is observed.

\begin{table}[t]
\centering
\caption{Comparison of \(\int_{0}^{0.9}\mathrm{MSE}_{\mathrm{sample}}(m,n,t) dt\) across intrinsic dimensions \(d_{\nu}\).}
\label{tab:intrinsic_dimension}
\begin{tabular}{r r}
\hline
\(d_{\nu}\) & \(\int_{0}^{0.9}\mathrm{MSE}_{\mathrm{sample}}(m,n,t) dt\) \\
\hline
\(5\)  & \(0.031\) \\
\(10\) & \(0.138\) \\
\(20\) & \(0.562\) \\
\(30\) & \(1.178\) \\
\hline
\end{tabular}
\end{table}

\subsection{The role of epsilon}

We examine the numerical stability of the regularization parameter \(\varepsilon\) in the Sinkhorn algorithm and provide selection guidelines in relation to the sample sizes \((m,n)\). Using the same setup as Experiments~\ref{sec:experiment1}, for each given \(m,n\) we search for the smallest \(\varepsilon\) that achieves the target error \(\delta=1.0\).

As \(m\wedge n\) increases, the \(\varepsilon\) required to meet the target error exhibits a two-phase behavior: a steep initial decrease followed by an asymptotically mild decay. Representative values are reported in Table~\ref{tab:role_of_epsilon}.

\begin{table}[t]
\centering
\caption{Representative \(\varepsilon\) achieving the target error \(\delta=1.0\) for varying \(m\wedge n\) (same setup as Experiments~\ref{sec:experiment1}).}
\label{tab:role_of_epsilon}
\begin{tabular}{r r}
\hline
\(m\wedge n\) & \(\varepsilon\) \\
\hline
\(10\) & \(2.684\times 10^{8}\) \\
\(20\) & \(7.979\times 10^{6}\) \\
\(30\) & \(3.200\times 10^{1}\) \\
\(40\) & \(6.797\times 10^{-1}\) \\
\(50\) & \(3.291\times 10^{-1}\) \\
\(60\) & \(1.299\times 10^{-1}\) \\
\(70\) & \(2.010\times 10^{-1}\) \\
\(80\) & \(9.766\times 10^{-4}\) \\
\hline
\end{tabular}
\end{table}

By Theorem~\ref{theorem:entropy_of_(infty,infty)_and_(m,n)}, the \(\varepsilon\) required to satisfy the target error \(\delta\) obeys
\[
\varepsilon \gtrsim \max \left\{
    \left( \frac{R^2}{\delta (m\wedge n)} \right)^{1/d_{\nu}},
    R^{\frac{3d_{\nu}+4}{3d_{\nu}+2}} \left( \frac{R^2+1}{\delta (m\wedge n)} \right)^{1/(9d_{\nu}+6)}
\right\}.
\]
This indicates a steep improvement in the small-\(m\wedge n\) regime followed by logarithmically slow decay, which aligns with the empirical results. Since excessively small \(\varepsilon\) can cause numerical instability, using the above expression as a lower-bound guideline for selecting \(\varepsilon\) is effective.

\end{document}